\DeclareSymbolFontAlphabet{\mathbb}{AMSb}
\DeclareSymbolFontAlphabet{\mathbbl}{bbold}
\newcommand{\T}{\mathcal{T}}
\newcommand{\KL}{\mathrm{KL}}
\newcommand{\regret}{\mathsc{Regret}}
\newcommand{\depth}{\mathrm{QC}}
\def\X{{\mathcal X}}
\def\Y{{\mathcal Y}}
\def\R{{\mathcal R}}
\def\E{{\mathbb E}}
\newcommand{\A}{\mathcal{A}}
\newcommand{\F}{\mathcal{F}}
\newcommand{\Alg}{\mathrm{Alg}}
\def\regret{\mbox{{Regret}}}
\newcommand{\ignore}[1]{}
\DeclareMathAlphabet{\mathbfsf}{\encodingdefault}{\sfdefault}{bx}{n}
\DeclareMathOperator*{\argmin}{arg\,min}
\DeclareMathOperator*{\argmax}{arg\,max}
\let\Pr\relax
\DeclareMathOperator{\Pr}{\mathbb{P}}
\renewcommand{\leq}{~\le~}
\renewcommand{\geq}{~\ge~}
\title[On the Hardness of  Bandit Learning]{On the Hardness of Bandit Learning}
\thanks{Equal contribution.} \Email{nbrukhim@princeton.edu}\\
\begin{document}

\maketitle

\begin{abstract}%

We study the task of bandit learning, also known as best-arm identification, under the assumption that the true reward function $f$ belongs to a known,  but arbitrary, function class $\F$.  
While many instances of this problem are well understood, we seek a general theory of bandit learnability, akin to the PAC framework for classification. Our investigation is guided by the following two fundamental questions: (1) \textit{which} classes $\F$ are learnable,  and (2) \textit{how} they are learnable. For example, in the case of binary PAC classification,  learnability is fully determined by a combinatorial dimension, namely, the VC dimension, and can be attained via a simple algorithmic principle, namely, empirical risk minimization (ERM).

In contrast to classical learning-theoretic results, our findings reveal 
fundamental limitations of learning in structured bandits, offering new insights into the boundaries of bandit learnability. First, for the question of ``\textit{which}",  we show that the paradigm of identifying the learnable classes via a {dimension-like} quantity fails for bandit learning. We give a simple proof demonstrating that no combinatorial dimension can characterize bandit learnability, even in finite classes, following a standard definition of dimension introduced by \cite{ben2019learnability}. 

For the question of ``\textit{how}", we prove a computational hardness result:   we construct a reward function class for which at most two queries are
needed to find the optimal action, yet no algorithm can do so in polynomial time, unless $\mathrm{RP}=\mathrm{NP}$. Perhaps surprisingly, we also prove that this class admits efficient algorithms for standard
(albeit possibly computationally hard) algorithmic operations often considered in learning theory, such as an ERM. Therefore, this implies  that computational hardness is in this case inherent to the task of bandit learning.

Beyond these results, we investigate additional themes such as learning under noise, trade-offs between noise models, and the relationship between query complexity and regret minimization.

\end{abstract}

\section{Introduction}

In statistical learning theory, the \emph{probably approximately correct} (PAC) framework \citep{Valiant84} is central to understanding binary classification learnability. A key result shows that PAC learnability is fully determined by the VC dimension \citep{vapnik:74, blumer:89}, elegantly linking learnability and sample complexity. Similar characterizations exist for widely diverse variants of statistical and online learning (e.g., \citealp{bartlett1994fat, littlestone1988learning}). 
The appeal of combinatorial characterizations is often in their {simplicity}, reducing learnability to a single parameter, and offering useful insights into algorithm design and problem structure.

In contrast, a similarly tight characterization of \textit{bandit learning}, and in particular of the problem known as \emph{best-arm identification}, is still lacking.
In this setting, there is a set of \textit{actions} (or arms) $\A$, and an unknown \textit{reward function} $f^*: \A \rightarrow [0,1]$. A learner repeatedly queries actions $a \in \A$  and observes their corresponding reward, a random variable with mean $f^*(a)$. The aim of a learning algorithm is to identify a near-optimal action using as few queries as possible. Analogous to classic learning settings, one may assume the rewards are \textit{realizable} by a known, but arbitrary, class of reward functions $\F \subseteq [0,1]^\A$. A key focus of study in this context is the optimal \textit{query complexity} associated with a given class.

The pursuit of VC-dimension-like parameters for bandit learning has drawn considerable attention \citep{amin2011bandits,
russo2013eluder, foster2021statistical,
brukhim2023unified, foster2023tight,  hanneke2024complete}. However, existing parameters are often non-combinatorial in nature, rather complex, and in the general case exhibit substantial gaps between upper and lower bounds (see \Cref{subsec:related} for further discussion). We start our investigation by asking whether there exists a combinatorial characterization of bandit learning. Somewhat disappointingly, we prove that no such characterization of bandit learnability exists. Specifically, we use the definition of a combinatorial dimension introduced by \cite{ben2019learnability} that encompasses all standard notions of dimension in both statistical and online learning.  Using a rather simple argument, we demonstrate that no such dimension can universally characterize bandit learnability, even for finite classes.

We then shift our focus to exploring algorithmic approaches to the problem.  Specifically, we examine reward function classes with \textit{small} optimal query complexity and seek a general algorithmic principle that achieves it, guided by the question:
\begin{center}
    When is a class $\F$ of a \textit{bounded} query complexity, \textit{efficiently} bandit-learnable?
\end{center}
There are several algorithmic oracle assumptions commonly considered in the context of computational efficiency. For example, in  statistical learning theory, the gold standard is the simple \emph{empirical risk minimization} (ERM) principle which determines that it suffices to find any function in the class that is consistent with past observations.  
In interactive settings, an estimation algorithm is often used both to find a consistent function and to produce future predictions (see, e.g., \citealp{foster2023tight, brukhim2023unified}).  
One might assume that a class which admits efficient algorithms as above might also be  efficiently bandit-learnable. Interestingly, 
we prove a hardness result showing that is not the case. Specifically, we construct a reward function class for which at most \textit{two  queries} are needed to find the optimal action, yet no algorithm can do so in polynomial time, unless $RP = NP$. Moreover, we prove that this class admits efficient algorithms to the aforementioned tasks, demonstrating that the hardness is inherent to the bandit setting.

An important aspect of bandit learnability is the noise model being considered. In the absence of noise, learning is less constrained and  is therefore simpler. However, it is also more brittle, as it relies heavily on the precise function values that define the structure of the class $\F$. In contrast, under sufficiently noisy conditions, bandit learnability exhibits a form of robustness, allowing it to be characterized by simple parameters and algorithms, as shown in recent work by \cite{hanneke2024complete}. 
However, while some works \citep{hanneke2023bandit, amin2011bandits} focused on the noise-free regime, \cite{hanneke2024complete} considered a highly complex family of distributions of arbitrary noise, leaving intermediate noise regimes largely unaddressed (see further discussion in \Cref{subsec:related}).

In this work, we partially address this gap and examine the effect of noise on the query complexity of bandit learning. We focus on a Gaussian noise model and study the relationship between the noise variance and the query complexity. For instance, we show that certain function classes have a query complexity of $1$ when $\sigma=0$ but become unlearnable (i.e., infinite query complexity) when $\sigma=1$. Moreover, we identify an upper bound $\bar{\sigma}$ on $\sigma$ such that, for any function class, the query complexity for any $\sigma \le \bar{\sigma}$ is upper bounded by the query complexity for $\sigma=0$. This observation implies that  the query complexity in the low-noise regime can be captured by the noise-free setting.

Additionally, we prove that for a specific family of function classes, there exist class-dependent thresholds for $\sigma$, that separate distinct learning regimes.
Above a certain noise level, the query complexity is governed by a simple parameter $\gamma$ known as the \emph{generalized maximin volume}, introduced by \cite{hanneke2024complete}. Below a different threshold the query complexity is $1$, exhibiting a large gap from $\gamma$. Understanding the broader interplay between noise variance and query complexity across arbitrary function classes remains an open and interesting direction for future research.

Finally, we examine an alternative notion of learnability in bandits via  the lens of regret minimization and study its relationship with query complexity of best-arm identification.
Specifically, we prove that any algorithm which achieves the optimal query complexity $d$, must also incur regret that is linear in $d$, and is \emph{not} regret-optimal for time horizon $T = O(d)$. This result establishes that no single algorithm can simultaneously achieve both optimal query complexity and optimal regret.

\subsection{Related work}\label{subsec:related}

The PAC framework and related combinatorial characterizations have played a crucial role in providing quantitative insights into learnability across statistical learning theory. However, bandit learning, particularly best-arm identification (BAI), lacks a unifying framework and remains largely a collection of case-specific analyses (see, e.g., \citealp{bubeck2012regret}, and references within). 
 Moreover, most prior BAI work (e.g., \citealp{garivier2016optimal, kaufmann2016complexity})  assume that the mean rewards lie in some fixed bounded product space, e.g.,  $\mathcal{F} = [0,1]^K$, and so pulling one of the $K$ arms provides no information about others. In contrast, the focus of this work is the setting in which observations can possibly reveal additional information, based on the structure of the class $\mathcal{F} \subsetneq [0,1]^K$.

Indeed, the approach of studying the structure of the class itself has gained attention in recent years \citep{foster2021statistical, foster2023tight, hanneke2023bandit, hanneke2024complete}.
A notable proposed parameter for capturing interactive decision making is the decision-estimation coefficient (DEC) \citep{foster2021statistical, foster2023tight}. However, it suffers from arbitrarily large gaps between upper and lower bounds \citep{foster2023tight} and fails to characterize learnability in stochastic bandits (see \citealp{hanneke2024complete}). 

More recently, \cite{hanneke2024complete} introduced a characterization for stochastic bandits with \emph{arbitrary} noise, but it exhibits an exponential gap between upper and lower bounds and does not seamlessly extend to standard noise models, e.g., Gaussian noise. In \Cref{sec:noisy_vs_det}, we further analyze their generalized maximin volume parameter, showing that under moderate-variance Gaussian noise, it can diverge arbitrarily from the optimal query complexity.

Finally, we establish that no combinatorial dimension fully characterizes bandit learnability. While \cite{hanneke2023bandit} demonstrated a related result using complex set-theoretic arguments, their proof relies on the cardinality of the continuum and does not directly address combinatorial dimensions. In contrast, we provide a rather simple, direct argument showing that no such dimension exists, within the standard model of set theory, without any additional assumptions.

\section{Query complexity of bandit learning}

In this work, we study query complexity of bandit learning. Specifically, we focus on the following problem. Let $\A$ be an action set, $\F$ a set of reward functions $f : \A \to[0,1]$, and ${f^*\in\F}$ the target reward function. 
In each round $t=1,\dotsc,T$, the learner queries an action $a_t\in\A$ and receives reward $r_t\in[0,1]$ with  $\E[r_t | a_t]=f^*(a_t)$. The goal is \textit{best-arm identification}: for a given $\epsilon\in[0,1]$, using as few queries as possible, identify an
$\epsilon$-optimal action. We consider both the \emph{noise-free} setting, where $r_t=f^*(a_t)$, and the \emph{noisy} setting,
where in each round $t=1,...,T$, the learner observes $r_t=f^*(a_t) + \xi$ for some  zero-mean random variable $\xi$.
Throughout the paper, unless stated otherwise, we will assume a Gaussian noise model, i.e., $\xi \sim \mathcal{N}(0,\sigma^2)$.

We say that a class of reward functions $\F \subseteq [0,1]^\A$ is \emph{bandit-learnable}  if there is a (possibly-randomized) algorithm $\Alg$ and a function $m: (0,1)^2 \rightarrow \mathbb{N}$ such that for any $f \in \F$, when given any $\epsilon,\delta >0$ and
    after having made at most $m(\epsilon, \delta)$ queries $a_t$ to $f$ and observed $r_t$ (under the appropriate noise model), algorithm $\Alg$ outputs $\hat{a}$ such that with probability at least $1-\delta$,  
    $$
    f(\hat{a}) \ge \sup_{a \in \A}f(a) -\epsilon.
    $$
    The function $m(\cdot, \cdot)$ is the \text{query complexity} of $\Alg$. We often denote $m_{\Alg}^\sigma(\cdot, \cdot)$ when considering noisy feedback, for the appropriate choice of $\sigma$. We then define the query complexity of a given class $\F$, for any fixed choice of parameters, as follows.
 \begin{definition}
Given $\epsilon, \delta \in [0,1]$, the 
  $(\epsilon,\delta)$-query complexity for class  $\F \subseteq [0,1]^\A$ under a Gaussian noise model with $\xi \sim \mathcal{N}(0,\sigma^2)$, denoted $\depth_{\epsilon,\delta}^\sigma(\F)$, is the minimum over all $m_{\Alg}^\sigma(\epsilon,\delta)$, where $m_{\Alg}$ is the query complexity of a bandit learning algorithm $\Alg$ for the class $\F$.
\end{definition}

\section{No combinatorial dimension can characterize bandit learnability}
\label{app:finite-char-property}

A fundamental result of statistical learning theory is the characterization of PAC learnability in terms of the VC dimension of a class. Similar combinatorial characterizations exist for diverse variants of statistical learning~\citep{Vapnik89, NatarajanT88, bartlett1994fat, ben1992characterizations, brukhim2022characterization} as well as online learning~\citep{littlestone1988learning, ben2009agnostic, rakhlin2015online, daniely2015multiclass}.

All standard notions of dimension in the aforementioned learning settings
can be abstracted as a function $\mathfrak{D}$ that maps a class of functions $\F \subseteq \Y^\X$
to $\mathbb{N} \cup \{ \infty\}$, while satisfying the following requirements: (1) \textit{learnability characterization}: a class $\F$ is learnable if and only if $\mathfrak{D}(\F) < \infty$, and (2)  \textit{finite character}: for every integer $d$ and $\F$, the statement ``$\mathfrak{D}(\F) \ge d$'' can be demonstrated by a finite set of domain points and a finite collection of members of $\F$. 
 We will next give a more formal definition of the finite character property. First, we define the notion of a shattered set. In the definition and throughout this section,
 we write $\F|_X$ to denote the set of all functions in $\F$ restricted to points in $X$.
 
\begin{definition}[Shattered sets]
 For every $d\in\mathbb{N}$ let $V_d: \X^d \times 
 2^{\Y^d}  \mapsto \{\text{\small YES, NO}\}$ be a shattering function. 
 A set  $X \in \X^d$ is shattered by hypothesis class $\F$, with respect to $V_d$ if and only if
 $\F|_X$ is of finite cardinality and $V_d(X, \F|_X) = \text{\small YES}$.
\end{definition}

{\begin{definition}[Finite character property]
\label{def:complexity-measure}
We say that a dimension $\mathfrak{D}$ satisfies the finite character property if 
for every $d\in\mathbb{N}$  there exists a shattering function $V_d$  such that $\mathfrak{D}(\F)\geq d$ if and only if there exists a shattered set of size at least $d$. 
\end{definition}}
This property was first defined by 
\cite{ben2019learnability}, who gave a {formal} definition of the notion of ``combinatorial dimension'' or ``complexity measure'', satisfied by all previously proposed dimensions in statistical learning theory. The intuition is that a finite character
property can be checked by probing finitely many elements of $\X$
and $\F$. For example, the classic VC dimension \citep{vapnik:74,Vapnik89} satisfies 
the finite character property since the statement  ``$\mathrm{VC}(\F)\geq d\mkern1mu$"  can be verified with a finite set of points $X = \{x_1\dots, x_d\} \subseteq \X$ and a finite set of classifiers $h_{1},\dots, h_{2^d}\in \F|_X$ that shatter $X$.

In a similar manner to the statistical setting, a dimension capturing \textit{bandit} learnability can be abstracted as a function $\mathfrak{D}$ that maps a class of {reward functions} $\F \subseteq \Y^\X$ to $\mathbb{N} \cup \{ \infty\}$.  We say the dimension $\mathfrak{D}$ satisfies the \emph{finite character} property if Definition \ref{def:complexity-measure} holds. We say the dimension \emph{characterizes bandit learnability} if for every integer $d$ and $\epsilon,\delta > 0$, there exists integers $m, M$ so that for every $\F$ the following holds: (1) if $\mathfrak{D}(\F) \ge d$,  then $\depth_{\epsilon,\delta}^0$ is at least $m$, and (2) if $\mathfrak{D}(\F) < d$, then $\depth_{\epsilon,\delta}^0$ is at most $M$.   The integers $m, M$ tend to $\infty$ as $d$ tends to $\infty$.

Perhaps the most well-known example of a combinatorial dimension in the context of bandit learning is the eluder dimension \citep{russo2013eluder}. Over more than a decade, it has been a central technique in the context of bandits as well as reinforcement learning (RL) \citep{li2022understanding, wang2020reinforcement, jin2021bellman}. It can be easily verified that the eluder dimension does satisfy the finite character property. However, it is also known there are arbitrarily large gaps between bounds obtained via the eluder dimension and related combinatorial measures \citep{brukhim2023unified}.

The following theorem shows that there is no non-trivial dimension that satisfies the finite-character property and
also characterizes bandit learnability. Our result holds regardless of the assumed cardinality of the continuum, and within standard ZFC set theory. Our findings complement the celebrated result from 
 \citep{hanneke2023bandit}, 
 that demonstrates a particular reward function class for which bandit learnability (or EMX learnability; \citealp{ben2019learnability}) depends on the cardinality of the continuum and is therefore
independent of the standard set theory ZFC axioms. Our result implies that even when we restrict our attention to classes for which bandit learnability is provable within ZFC, there cannot exist a dimension with the finite-character property characterizing bandit learnability.

\begin{theorem}[No finite-character dimension for bandits]\label{thm:fin_char}
Let $\X,\Y$ be arbitrary (possibly infinite) sets, of size $|\Y| \ge |\X| \ge d+1$  for some integer $d > 2$. Let $\mathfrak{D}$ be a  dimension for bandit classes in $\Y^\X$ that satisfies
the finite character property, and such that  
$\exists \F$ with $\mathfrak{D}(\F) \ge d$. 
Then, for any $\epsilon, \delta \ge 0$,   there exist  $\F \subseteq \Y^\X$ for which $\mathfrak{D}(\F) \ge d$, but the query complexity of bandit-learning $\F$ is bounded by $\depth_{\epsilon,\delta}^0(\F) \le 2$. In particular,  $\mathfrak{D}$ 
does not 
characterize bandit 
learnability.
\end{theorem}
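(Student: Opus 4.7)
My plan is to leverage the finite character property to extract a finite witness for $\mathfrak{D}(\F_0)\ge d$, and then re-house that witness inside a much simpler class whose best arm is trivial to identify. By hypothesis there exists some $\F_0 \subseteq \Y^\X$ with $\mathfrak{D}(\F_0)\ge d$, and the finite character property then yields a shattering witness: a set $X_0 \subseteq \X$ of size $d$ together with the finite family $H_0 := \F_0|_{X_0} \subseteq \Y^{X_0}$ such that $V_d(X_0, H_0) = \mathrm{YES}$. (If the only witness available has size $d' > d$, the standard monotonicity of combinatorial dimensions lets me pass to a size-$d$ sub-witness; I assume this WLOG.) This finite witness is the only piece of information about $\mathfrak{D}$ that I will use.

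Next, I build a new class $\F \subseteq \Y^\X$ that reproduces exactly this witness yet has a common optimal arm. Using $|\X| \ge d+1$, fix a fresh point $x^\star \in \X \setminus X_0$. Interpreting $\Y$ as the bandit range (so $\Y \subseteq [0,1]$, or more generally a subset of $\mathbb{R}$ admitting a maximum element $y_{\max}$), extend each $h \in H_0$ to $f_h: \X \to \Y$ by $f_h(x) = h(x)$ for $x \in X_0$, $f_h(x^\star) = y_{\max}$, and $f_h(x) = 0$ elsewhere; set $\F := \{f_h : h \in H_0\}$.

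Two verifications close the argument. First, since distinct $h \in H_0$ produce distinct extensions $f_h$ (they already differ on $X_0$) and no $f_h$ restricts to anything outside $H_0$, the equality $\F|_{X_0} = H_0$ holds \emph{exactly}. Hence $V_d(X_0, \F|_{X_0}) = V_d(X_0, H_0) = \mathrm{YES}$, so $\mathfrak{D}(\F) \ge d$ by the finite character property. Second, every $f \in \F$ attains its maximum at $x^\star$ with value $y_{\max}$, so a learner that knows $\F$ may output $\hat a = x^\star$ without any queries and be $0$-optimal with certainty on every target $f \in \F$. Therefore $\depth_{\epsilon,\delta}^0(\F) \le 2$ (in fact $=0$) for every $\epsilon,\delta \ge 0$. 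Since $d$ was arbitrary, the ``$m \to \infty$ as $d \to \infty$'' requirement for a characterization fails, and $\mathfrak{D}$ cannot characterize bandit learnability.

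The main obstacle I anticipate is the bookkeeping in the construction: one must ensure that extending $H_0$ to all of $\X$ does not perturb the restriction $\F|_{X_0}$, because the shattering test $V_d$ depends on the \emph{entire} restriction---adding even one function whose image on $X_0$ falls outside $H_0$ could in principle flip $V_d$ from $\mathrm{YES}$ to $\mathrm{NO}$. The injective extensions above avoid this pitfall by construction. A secondary technical point is that $\Y$ must admit a maximum so that all $f_h$ share a common argmax; this is automatic in the standard bandit setting $\Y \subseteq [0,1]$.
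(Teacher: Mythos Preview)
Your argument is correct in spirit and is actually a simpler variant of the paper's construction. Both proofs exploit the same key observation: by finite character, $\mathfrak{D}(\F)\ge d$ is witnessed entirely by the restriction $\F|_{X_0}$ to a finite set $X_0$, so one may rebuild the class outside $X_0$ without disturbing the witness. Where you diverge is in \emph{how} you trivialize the bandit problem. The paper keeps the original class and modifies each $f$ at a single fresh coordinate $x_0$ by setting $f'(x_0)=\argmax_x f(x)$; a learner queries $x_0$ to read off the location of the optimum and then queries that location, giving $\depth\le 2$. This is why the hypothesis $|\Y|\ge|\X|$ is present: one must encode elements of $\X$ as values in $\Y$. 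Your construction instead discards everything except the finite witness $H_0$ and extends each $h\in H_0$ so that the fresh point $x^\star$ is a universal maximizer, yielding $\depth=0$ outright; this is cleaner and does not use $|\Y|\ge|\X|$ at all.

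Two small points to tighten. First, the ``pass to a size-$d$ sub-witness by monotonicity'' remark is both unnecessary and unjustified: shattering is defined via a fixed $V_d$ with inputs in $\X^d$, so the witness already has size exactly $d$, and a generic $V_d$ need not be monotone under restriction. Second, you do not actually need a global maximum $y_{\max}\in\Y$. Since $H_0$ is finite, the value $v:=\max\{h(x):h\in H_0,\ x\in X_0\}$ already lies in $\Y$; setting $f_h(x)=v$ for all $x\notin X_0$ (instead of $0$ elsewhere and $y_{\max}$ at $x^\star$) makes $x^\star$ optimal for every $f_h$ while avoiding any assumption that $\Y$ contains $0$ or a top element. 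With these tweaks your proof goes through and in fact delivers the stronger conclusion $\depth_{\epsilon,\delta}^0(\F)=0$.
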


\begin{proof} Consider a class $\F \subseteq \Y^\X$  such that $\mathfrak{D}(\F) \ge d > 2$. By the finite-character assumption, and since $\mathfrak{D}(\F)\geq d$, there exists a shattering function $V_d$, a set  $X = \{x_1,...,x_d\}$ and a set of vectors $F = \{v_1,...,v_n\} \in \F|_{X}$ for some integer $n$, that is, $F \in (\Y^d)^n$, such that $V_d(X, F) = \text{YES}$.

Since $|\X| > d$, there must exist a point  $x_0 \in \X$ such that $x_0 \notin X$.
We define a new class $\F'$ over the domain $\X$ as follows. For any $f \in \F$,  we define $f' \in \F'$ such that:
$$
f'(x)=
\begin{cases}
f(x) & \text{if } x \neq x_0,\\
\argmax_{x \in \X} f(x) & \text{if } x = x_0.
\end{cases}
$$

Then, the new class $\F' \subseteq \Y^\X$ consists of all functions $f'$ of the above form. We have that $|\F'|\le |\F|$. We now want to show the following two properties hold: (1) the query complexity of $\F'$ is at most $2$, and (2) $\mathfrak{D}(\F') \ge d$. It suffices to show (1) and (2) to complete the proof.

First, to show (1), notice that any algorithm can first query $x_0$ and obtain the value  $x_1 := \argmax_{x \in \X} f(x)$. Then, querying $x_1$ either immediately attains the optimal value of $f'$, or it may hold that $f(x_1) \le x_1$ in which case $x_0$ is the optimal value, since $f'(x_0) = x_1$. Thus, at most $2$ queries are needed to determine the optimal value of any $f' \in \F'$ up to any $\epsilon \ge 0$. 
 
Next, to show (2), simply observe that the set $F$ above is contained in $\F'|_X$. Then, since $V_d(X, F) = V_d(X, \F'|_X) = \text{YES}$  we get that $X$ is also shattered by hypothesis class $\F'$ with respect to $V_d$ and so  $\mathfrak{D}(\F') > d$. 
\end{proof}

\begin{remark}[``Reverse" finite character property]
    The property in \Cref{def:complexity-measure} requires that a \emph{lower} bound on the dimension be demonstrated by finitely many domain points $X$ and members of $\F|_{X}$. Indeed, as observed by \cite{ben2019learnability}, all standard notions of dimensions in statistical and online learning satisfy this property. One may also consider an alternative property which requires that an \emph{upper} bound on the dimension be demonstrated by finitely many domain points $X$ and members of $\F|_{X}$. However, 
    one can easily show that there cannot exist a dimension satisfying both this property and characterizing bandit learnability, for any infinite class.
\end{remark}

\section{Hardness of bandit learning}\label{sec:hard}

In this section, we study the computational efficiency of bandit learning in comparison to standard (albeit possibly computationally hard) algorithmic operations often considered in learning theory. A fundamental example is empirical risk minimization (ERM),
which can be used to find a hypothesis consistent with the observed data (as is sufficient, for instance, for PAC learnability).
In interactive learning settings, estimation algorithms are often used both to select consistent hypotheses and to make predictions (see, e.g., \citealp{foster2023tight, brukhim2023unified}). Given these, one might naturally expect that if a function class supports efficient algorithms for such tasks, it should also be efficiently learnable in the bandit setting.

Quite surprisingly, we prove that this intuition fails. We construct a class of reward functions where the optimal action can be identified with just two queries, yet no polynomial-time algorithm can achieve this, unless $\mathrm{RP}=\mathrm{NP}$. Furthermore, we show that this class does admit efficient algorithms for standard learning tasks, highlighting that in this case the computational hardness arises solely from the nature of the bandit-learning task.

\paragraph{Commonly used algorithmic procedures} Below we give $3$ definitions of the relevant algorithmic procedures we will consider in the main theorem presented in this section, \Cref{thm:hardness}. Specifically, we formally define a  consistency (ERM) algorithm, an online estimation algorithm, and a maximization algorithm, as follows.

\begin{definition}[Consistency (ERM) algorithm]
    An algorithm $\Alg$ is a \emph{consistency (ERM) algorithm} for a class $\F \subseteq [0,1]^\A$ if 
    for every $f \in \F$ and for every set $S = \{(a_1, f(a_1)),\dotsc,(a_m, f(a_m))\}$, where each $a_i \in \A$, when given $S$ as input, $\Alg$ returns $\hat{f} \in \F$ such that for all $i=1,\dotsc,m$ it holds that $f(x_i) = \hat{f}(x_i)$. 
\end{definition}

\begin{definition}[Online estimation algorithm]\label{assume:est:error}
An \emph{online estimation algorithm} for  $\F\subseteq [0,1]^\A$  is an algorithm that at each round $t=1,\dotsc,T$, when given a sequence of past observations $(a_{1},f(a_1)),\allowbreak\dotsc,(a_{t-1},f(a_{t-1})),$ 
for some $f\in \F$, it returns
an estimator $\hat{f}_{t}\in \mathcal{F}$.
The algorithm  has \emph{decaying estimation error} if there exists $\textbf{EST}(T) \ge 0$ growing sublinearly in~$T$, that is, $\textbf{EST}(T)=o(T)$, such that  for any sequence $a_1,\dotsc,a_T \in \A$,
we have  
\begin{equation}
\label{equation::least_squares_main_guarantee}
\sum_{t=1}^{T} \Big({ \hat{f}_{t}(a_t) - f(a_t)}\Big)^2 \leq \textbf{EST}(T).
\end{equation}
\end{definition}

\begin{definition}[Maximizing algorithm]
    An algorithm $\Alg$ for a class $\F \subseteq [0,1]^\A$ is a \emph{maximizing algorithm} if 
    for every $f \in \F$ and every $\epsilon > 0$, 
    it returns $\hat{a} \in \A$ such that
    $$
    f(\hat{a}) \ge \sup_{a \in \A}f(a) -\epsilon.
    $$
\end{definition}
A maximizing algorithm for a class $\F$ over a finite action set $\A$ 
is said to be 
{efficient}
if each function $f \in \F$ has a concise representation using $O(\text{poly-log}(|A|))$ bits, and the algorithm has running time that is polynomial in the size of the input, i.e., poly$(\log(|A|),1/\epsilon)$. \\

 \paragraph{Hardness of bandit learning} Recall that the complexity class RP (randomized polynomial time; \citealp{gill1974computational, valiant1985np}) is the class of decision problems solvable in polynomial time by a probabilistic Turing machine such that: if the answer is ``yes", at least $1/2$ of computation paths accept; if the answer is ``no", all computation paths reject. 
 
The following theorem demonstrates a reduction from the NP-complete problem of Boolean satisfiability to bandit learning, using a construction of a function class which at the same time allows efficient algorithms for standard learning algorithms. This establishes hardness of bandit learning, under the   assumption that $\mathrm{RP} \neq \mathrm{NP}$. 
 
\begin{theorem}[Hardness of bandit learning]\label{thm:hardness}
    For every $n \in \mathbb{N}$, there exists a finite function class $\F_n  \subseteq [0,1]^{\A_n}$ over action set $\A_n$ of size $2^{n+1}+1$, such that for every $\epsilon, \delta \ge 0$,
    $$
    \depth_{\epsilon, \delta}^0(\F_n) \le 2, 
    $$
    and such that the following holds. If there exists a bandit learning algorithm  for every $\F_n$ \   with running time that is polynomial in $n$, then $\mathrm{RP}=\mathrm{NP}$. \\ \\
    Moreover, each class $\F_n$ admits efficient deterministic algorithms as follows:
    \begin{itemize}[leftmargin=.5cm]
        \item The class $\F_n$ admits a   consistency (ERM) algorithm, of runtime $O(n^2)$.
        \item The class $\F_n$ admits an   online estimation algorithm, of runtime $O(n^2)$ and $\textbf{EST}(T) = O(1)$. 
        \item The class $\F_n$ admits a   
        maximizing algorithm, of runtime $\tilde{O}(n^2)$,  for every $\epsilon \ge 0$.
    \end{itemize}
\end{theorem}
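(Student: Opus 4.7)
The plan is to encode Unique-SAT instances into $\F_n$ so that the optimum is identifiable in two queries by an unbounded-computation algorithm but not in polynomial time by any randomized algorithm unless $\mathrm{RP}=\mathrm{NP}$, and simultaneously to include ``fallback'' functions in $\F_n$ so that consistency, online estimation, and maximization are trivially implementable without ever solving SAT.

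First, I would take $\A_n = \{a_0\} \cup (\{0,1\}^n \times \{0,1\})$, which has size $2^{n+1}+1$, and fix an injective $\mathrm{poly}(n)$-length encoding $E$ of polynomial-size Boolean formulas $\psi$ on $n$ variables that have a unique satisfying assignment $s^*_\psi$, with $E(\psi) \in [0, 1/4)$. For each such $\psi$ I put $f_\psi \in \F_n$ defined by $f_\psi(a_0) = 1/4 + E(\psi)$, $f_\psi((s,1)) = \mathbf{1}[s \models \psi]$, and $f_\psi((s,0)) = 0$; and for each $v$ in the image of $f_\psi(a_0)$ I also put the ``fallback'' $f_{\star,v}$ with $f_{\star,v}(a_0) = v$ and zero elsewhere. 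The two-query algorithm queries $a_0$, inverts $E$ to recover $\psi$, spends unbounded time to compute $s^*_\psi$ (or certify $\psi$ unsatisfiable), and queries $(s^*_\psi,1)$; it outputs $(s^*_\psi,1)$ if that value is $1$ and $a_0$ otherwise. This resolves the $f_\psi$ versus $f_{\star,v}$ ambiguity and yields $\depth^0_{\epsilon,\delta}(\F_n)\le 2$.

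Second, I would reduce $3\mathsf{SAT}$ to bandit learning on $\F_n$: given a 3-CNF $\phi$, apply Valiant--Vazirani to obtain (with non-negligible probability) a Unique-SAT instance $\psi$ in our family that is satisfiable exactly when $\phi$ is, simulate the oracle for $f_\psi$ in polynomial time (both $E(\psi)$ and $[s \models \psi]$ are polynomial to evaluate), run the hypothesized polynomial-time bandit learner with, say, $\epsilon=\delta=1/8$, and declare $\phi$ satisfiable iff the returned action has the form $(s,1)$ with $s \models \psi$. Because $f_\psi(a_0) < 1/2$, in the satisfiable case every $\epsilon$-optimal action is a satisfying $(s,1)$, while in the unsatisfiable case the only $\epsilon$-optimal action is $a_0$; standard boosting over the VV randomness then produces a one-sided error polynomial-time algorithm for $3\mathsf{SAT}$, i.e.\ $\mathrm{RP}=\mathrm{NP}$.

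Third, I represent each $f_\psi$ as $(\psi, s^*_\psi)$ and each fallback as $(\perp, v)$. The maximizing algorithm reads the witness off the representation and outputs $(s^*_\psi,1)$ or $a_0$. The ERM inspects its data $S$: if $S$ contains no observation of the form $((s,1),1)$, it returns $f_{\star,v}$ matching the observed $a_0$-value $v$, which is consistent because every other observed value is $0$; if some $((s^+,1),1)$ is present, Unique-SAT forces $s^+$ to be the unique witness, so it returns $\psi^+ = \bigwedge_i (x_i \leftrightarrow s^+_i)$ padded with tautological clauses chosen so that $E(\psi^+) = v$, together with the witness $s^+$. The online estimator runs this construction incrementally and can only update on the at most two events of observing $a_0$ for the first time and observing the sole positive point, so $\textbf{EST}(T) = O(1)$. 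The main obstacle is engineering $E$ to be simultaneously injective (so that the two-query algorithm can decode $\psi$ from $f(a_0)$) and padding-friendly (so that the ERM can realize any prescribed value of $E(\psi^+)$ while keeping $s^+$ as the unique satisfying assignment); this is handled by reserving a designated suffix of the canonical formula encoding for tautological clauses that leave satisfiability unchanged.
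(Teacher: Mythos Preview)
Your approach is correct but structurally different from the paper's. The paper uses a three-part action space $\{\star\}\cup\{0,1\}^n\cup[2^n]$: querying $\star$ reveals a 3CNF formula $\phi$; querying its minimal satisfying assignment $a^*_\phi$ reveals a hidden index $c\in[2^n]$; and the optimal action is $c$, not the satisfier. Hardness is established by showing (their Lemma~5) that any poly-time learner must, with good probability, query a satisfying assignment before it can guess $c$, giving a direct RP algorithm for SAT with no Valiant--Vazirani step. You instead make the satisfying assignment itself the optimum, which makes the hardness argument immediate (the learner's \emph{output} is a witness) but forces you through VV. In fact VV is unnecessary even in your two-layer setup: if you allow $f_\phi$ for \emph{all} 3CNF $\phi$ and fallbacks $f_{\star,v}$ for all encodings $v$, the two-query algorithm and all three oracles work verbatim, and the reduction is directly from 3SAT (when $\phi$ is unsatisfiable, $f_\phi$ coincides with the fallback, so realizability holds and the one-sided check still never falsely accepts). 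The paper's third layer is precisely the device that lets it avoid VV; you trade that device for the isolation lemma.

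One part of your write-up needs correction. The ``padding-friendly encoding'' for the ERM is both unnecessary and inconsistent with $E$ being injective: if $E$ is injective then $E(\psi^+)=v=E(\psi)$ forces $\psi^+=\psi$, so you cannot freely choose tautological clauses to hit a prescribed $v$. The right ERM when both the $a_0$-value $v$ and a positive observation $((s^+,1),1)$ are present is simply to decode $\psi$ from $v$ (poly time, by injectivity) and output the representation $(\psi,s^+)$; realizability guarantees $\psi$ is Unique-SAT with witness $s^+$, so this is valid and no new formula needs to be manufactured. Separately, note that in your reduction you must evaluate $E(\psi)$ on formulas $\psi$ not known to be Unique-SAT (VV may fail, or $\phi$ may be unsatisfiable), so $E$ must be the restriction of an encoding defined and computable on all formulas; and when $f_\psi\notin\F_n$ the learner's guarantee is void, but your one-sided check $s\models\psi$ still rules out false acceptance, which is what RP needs.
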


\begin{remark}   We remark that although \Cref{thm:hardness} is stated in the noise-free setting, a similar result can also be proved in the noisy setting. First, it can be shown that
    Gaussian noise model with sufficiently low variance $\sigma \approx 1/2^n$ is not qualitatively different from the noise-free case.\footnote{The proof follows similarly to that of \Cref{thm:noisy_depth_is_det_depth_for_low_noise}. See \Cref{sec:noisy_vs_det} for further discussion.} In particular, for such small values of $\sigma$, we obtain $\depth_{\epsilon, \delta}^\sigma(\F_n) \le 2$ as well as all other statements from \Cref{thm:hardness}, where the guarantees for efficient algorithms now hold with high probability. More generally, our construction exhibits a trade-off between the optimal query complexity and the variance of the noise model, such that a large-variance noise model can be incorporated while increasing the optimal query complexity. In particular, \Cref{thm:hardness} could be extended to the noisy setting under Gaussian noise
with large, \emph{constant} variance (e.g., $\sigma = 1$), but query complexity of order  $\depth_{\epsilon,\delta}^\sigma = \tilde{O}(n^2)$, for every $\epsilon,\delta$. Thus, although the optimal $\mathrm{QC}$ is polynomial in $n$, a similar construction as shown below demonstrates that there is no bandit learning algorithm that runs in polynomial time, unless $\mathrm{RP} = \mathrm{NP}$. See \Cref{sec:noisy_vs_det} for related results and discussion. 
\end{remark}

\begin{proof}[Proof of \Cref{thm:hardness}]
Throughout the proof, we fix $n \in \mathbb{N}$ and simply denote $\A_n, \F_n$ by $\A,\F$, for brevity.  We start by defining $\A$ as follows: $\A = \{\star\} \cup \A^{(2)} \cup \A^{(3)}$, such that   $\A^{(2)} = \{0,1\}^n$, and $\A^{(3)} = [2^n]$. 
{We will construct a class $\F \subseteq [0,1]^\A$ which 
is best thought of as represented by a query tree of the following structure:  $\star$ corresponds to the root node, and actions in $\A^{(2)}$ and  $\A^{(3)}$ correspond to nodes of the second and third layer of the tree, respectively.} Before defining the class~$\F$, we consider the following set:
 $$
 \Phi = \{ \text{all 3CNF formulas }\phi \text{ on } n \text{ variables and at most }n^2 \text{ clauses} \}.
 $$
For every $\phi \in \Phi$ that is satisfiable, we denote by $a^*_\phi$ the satisfying assignment for $\phi$ that is minimal according to the natural ordering on $\A^{(2)}$. Define $\F \subseteq [0,1]^\A$ as follows:
$\F = \F^{sat} \cup \F^{all}$ \  where,
$$
\F^{all} = \{ f_{\phi}:  \forall \phi \in \Phi \}, \qquad \text{ and }\qquad \F^{sat} = \{ f_{\phi, c}:  \forall \ \phi \in \Phi \text{ s.t. }\phi \text{ 
 is satisfiable}, \ c \in \A^{(3)}  \},
$$
and where the functions of the form $f_{\phi, c}$ and $f_{\phi}$ are defined as follows: 
 \begin{equation*}
  f_{\phi, c}(a) = \begin{cases}
            \text{encode}(\phi)  & \text{if } a = \star \\
            \frac{1}{2^{n+1}}\cdot
            c & \text{if } a = a^*_\phi \in  \A^{(2)} \\
            1   & \text{if } a = c \in  \A^{(3)}   \\
            0 & \text{otherwise, }\
    \end{cases}  
\hspace{1cm} \text{ and } \hspace{1cm}
    f_{\phi}(a) = \begin{cases}
            \text{encode}(\phi)  & \text{if } a = \star \\
            0 & \text{otherwise, }
    \end{cases} 
\end{equation*}
where $\text{encode}(\phi)$ encodes the formula by some value in $[\frac{1}{4}, \frac{1}{2}]$. For example, $\text{encode}(\cdot)$ can be implemented as follows. Each {literal} can first be encoded using \( \log(n) + 1 \) bits (the variable index plus 1 bit for negation). The full formula requires \( O(n^2 \log(n)) \) bits, and this binary string could be embedded in $[0,1/5)$ by writing it after the decimal point. Lastly, this value could then be shifted by $1/4$ so that it lies in the desired range $[\frac{1}{4}, \frac{1}{2}]$. This encoding can be easily decoded by any learner if there is no noise added to the encoded value $f_{\phi, c}(\star)$.

\paragraph{Query complexity $2$:}
Let us argue that the query complexity of this class $\F$ is indeed at most $2$. 
Specifically, we will describe a deterministic  algorithm $\Alg$ for $\F$ such that for any $f \in \F$ it requires at most $2$ queries to recover the optimal action.
First, $\Alg$ queries $a = \star$ and observes the encoding  $\text{encode}(\phi)$, which allows it to recover the formula $\phi$. 
Then, by brute force search over all assignments $a \in \A^{(2)}$, it can obtain   $a^*_\phi$, if there is one, and if there is none, the optimal action is simply $\star$. If $\phi$ is satisfiable, $\Alg$ will query $a = a^*_\phi$ and if the value is $0$, then the optimal action is again $\star$. Otherwise, $\Alg$ observes $\frac{1}{2^{n+2}}\cdot c$. Thus, it has recovered the optimal action $c$ for the function $f$, with only $2$ queries. The third query of $a = c$ will then yield the optimal value.  

\paragraph{Hardness:} We prove hardness for any bandit learning algorithm $B$ for $\F$. Fix $\epsilon=1/10$, and note that by the construction of the class, any algorithm that finds an $\epsilon$-optimal action, has actually found an optimal one. Let $a_B$ denote the final query submitted by any algorithm $B$.  Assume towards contradiction that there exists an algorithm $B$ such that for every $f \in \F$, by using only poly$(n)$ runtime, it outputs an action $a_B$ such that: 
 \begin{equation}\label{eq:B_good}
    \Pr_{B} \left[f(a_B) = \max_{a \in \A}f(a)  \right] \ge 3/4. 
    \end{equation}
  
 We will prove this solves the SAT decision problem in poly$(n)$ time and in a probabilistic manner, demonstrating that this NP-complete problem is in $\mathrm{RP}$, in contradiction to the assumption that $\mathrm{RP} \neq \mathrm{NP}$. Specifically, we will describe how, given access to $B$, one can construct an algorithm so that for every $\phi \in \Phi$, if it is satisfiable the algorithm accepts (declares "yes") with probability at least $1/2$, and if not - the algorithm always rejects (declares ``no").

Given any formula $\phi$, we simulate running the algorithm $B$ by responding exactly as if $f_{\phi} \in \F^{all}$ would respond. Specifically, for each query made by $B$ we respond as follows: if the query is $\star$, we respond with $\text{encode}(\star)$, and for any other we respond with $0$, 
until either $B$ queries for $a \in \A^{(2)}$ which is a satisfying assignment for $\phi$ (which we can easily verify efficiently), in which case we halt the simulation, or until $B$ terminates and returns its final query $a_B$.

We will now show that our simulation can solve the SAT decision problem with a one-sided error with constant probability, as detailed next. First, assume $\phi$ is not satisfiable. Then, $B$ can never query for   $a \in \A^{(2)}$ which is a satisfying assignment for $\phi$, and so we would run the simulation until $B$ terminates, in poly$(n)$ time, after which we declare that $\phi$ is not satisfiable. This always occurs, thus whenever $\phi$ is not satisfiable then with probability $1$ we reject.  \\

\noindent 
Now, assume $\phi$ is satisfiable. We have the following lemma, whose proof is deferred to the appendix.
\begin{lemma}\label{lemma:B_bad_sat}
    Let $\F$ as constructed above, and let $B$ be any bandit learner for $\F$ as above (i.e., for every $f \in \F$ its output satisfies \Cref{eq:B_good}). Fix any $\phi \in \Phi$ that is satisfiable. 
    Then, there exists $c \in \A^{(3)}$ such that if $B$ is being run with $f_{\phi,c}$ and $a_1,\dotsc,a_m$ denotes its query sequence during that run, it holds that: 
$$
  \Pr_{B} \left[\exists i\in [m], \ \ \ \ a_i \text{ is a satisfying assignment for }\phi \ \ \land \ \  \forall j < i, \ a_j \neq c   \right] \ge  \frac{3}{4} - \frac{2m}{2^n}.
$$
\end{lemma}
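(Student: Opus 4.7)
The plan is an averaging argument over $c \in \A^{(3)}$, coupled across the $2^n$ functions $\{f_{\phi,c}\}_c$. Fix a satisfiable $\phi \in \Phi$, and for each $c \in \A^{(3)}$ run $B$ on $f_{\phi,c}$ and on $f_\phi \in \F^{all}$ using a common internal randomness $r$; denote the resulting transcripts by $\tau_c(r)$ and $\tau_0(r)$. Since $f_{\phi,c}$ and $f_\phi$ agree on every action outside $\{a^*_\phi, c\}$, the two transcripts are identical up to and including the first time $B$ queries an action in $\{a^*_\phi, c\}$.

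The unique optimal action of $f_{\phi,c}$ is $c$, with value $1$ (every other action has value at most $1/2$), so the hypothesis \eqref{eq:B_good} applied to each $f_{\phi,c}$ gives $\Pr_B[a_B=c]\ge 3/4$ for every $c$, and therefore
\[
\sum_{c \in \A^{(3)}} \Pr_B[a_B = c] \;\ge\; 2^n \cdot \tfrac{3}{4}.
\]
Splitting $\Pr_B[a_B = c] = \Pr_B[a_B = c,\,S] + \Pr_B[a_B = c,\,\neg S]$, it suffices to show $\sum_c \Pr_B[a_B = c,\,\neg S] \le m$; averaging then produces a $c \in \A^{(3)}$ with $\Pr_B[S] \ge \Pr_B[a_B = c,\,S] \ge \tfrac{3}{4} - m/2^n \ge \tfrac{3}{4} - 2m/2^n$, as required.

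The core structural observation for this bound is the following: if $a_B = c$ and $\neg S$ both hold on $\tau_c(r)$, then $c$ is necessarily among the queries (since $a_B$ is the final query), and the first $\{a^*_\phi, c\}$-query in $\tau_c(r)$ must be $c$ itself and not $a^*_\phi$ — otherwise $a^*_\phi$, which is a satisfying assignment, would be queried before $c$, immediately contradicting $\neg S$. By the coupling, the prefix up to and including this divergence point coincides with $\tau_0(r)$, so $c$ must also appear as a query in $\tau_0(r)$. Letting $C_0(r) \subseteq \A^{(3)}$ denote the set of $\A^{(3)}$-queries inside $\tau_0(r)$, this gives, for every $r$,
\[
\sum_{c \in \A^{(3)}} \ind{a_B = c \text{ on } \tau_c(r) \text{ and } \neg S \text{ on } \tau_c(r)} \;\le\; |C_0(r)| \;\le\; m,
\]
and taking expectation over $r$ yields $\sum_c \Pr_B[a_B = c,\,\neg S] \le m$, finishing the argument.

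The most delicate step to write carefully is the structural observation above — specifically, ruling out that the first $\{a^*_\phi, c\}$-query could be $a^*_\phi$ under $\neg S$, which rests crucially on the fact that $a^*_\phi$ is itself a satisfying assignment (so it cannot be queried before $c$ without triggering $S$). Once this is in place, the reduction to a counting bound on $|C_0(r)|$ and the final averaging over $c$ are routine.
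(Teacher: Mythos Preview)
Your argument is correct and rests on the same coupling-plus-averaging idea as the paper: compare the run on $f_{\phi,c}$ to the run on $f_\phi$ under shared randomness, observe that they agree until the first query in $\{a^*_\phi,c\}$, and then average over $c$. The difference is organizational. The paper routes through an intermediate event (``some satisfying assignment is queried,'' without reference to $c$), proves that first, and then in a second step controls the probability that $c$ was queried before any satisfying assignment; each step costs $m/2^n$, giving the stated $3/4-2m/2^n$. You instead decompose $\Pr[a_B=c]$ directly against $S$ and bound $\sum_c\Pr[a_B=c,\neg S]\le m$ via the single structural observation, collapsing the two steps into one and obtaining the slightly sharper $3/4 - m/2^n$. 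Your route also sidesteps a minor sloppiness in the paper's write-up (ensuring the two separate averagings land on the same $c$). Both approaches are short, but yours is the cleaner execution.
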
 
Then, by Lemma \ref{lemma:B_bad_sat} we have that there exists some $c \in \A^{(3)}$ such that when $B$ interacting with $f_{\phi, c}$,
and $a_1,\dotsc,a_m$ denotes its query sequence during that run, 
then :
    $$
  \Pr_{B} \left[\exists i\in [m], \ \ \ \ a_i \text{ is a satisfying assignment for }\phi \ \ \land \ \  \forall j \le i, \ a_j \neq c   \right] \ge  \frac{1}{2},
    $$
since $m$ is some polynomial in $n$, then for all sufficiently large $n$ we have $\frac{2m}{2^n} \le 1/10$. %
{Importantly, we do not need to know what this $c$ is during simulation. The reason is that, by the above, we have that with probability at least $1/2$  we will be able to simulate a response sequence by $f_{\phi, c}$ since it will be identical to the response sequence by $f_{\phi}$, until we observe a satisfying assignment, in which case we halt. Thus, with probability at least $1/2$ we 
will observe a satisfying assignment, and declare "yes". Notice that with probability $< 1/2$ our simulation will not be consistent with $f_{\phi, c}$ but that is of no concern to us, as we may reject in this case. It is, however, crucial that $B$ runs in poly$(n)$ time even when interacting with $f_\phi$ rather than with $f_{\phi,c}$, which indeed holds as $f_\phi \in \F$.   }

The proof of the theorem is then concluded by
proving the existence of efficient algorithms for the class $\F$, which holds by Lemma \ref{lemma:sat_oracles}, given in the appendix. 
\end{proof}

\section{Noise-free  vs. noisy setting query complexity}\label{sec:noisy_vs_det}

The first question we address is whether there exists any provable relationship between the noise-free query complexity $\depth^0_{\epsilon, \delta}(\F)$ and $\depth^\sigma_{\epsilon, \delta}(\F)$. First we show there exist function classes for which their noise-free  query complexity is constant but such that $(\epsilon,\delta)$-complexity is unbounded.

\begin{restatable}{propositionx}{propositiondeterministicnoisydepthgap}\label{proposition::deterministic_vs_noisy_depth_gap}
    Given $\epsilon \in [0,1/2)$, there exists a function class $\F$ such that $\depth^{0}_{\epsilon, \delta'}(\F) = 1$ for all $\delta' \in [0,1)$ but  $\depth^{\sigma}_{\epsilon, \delta}(\F) = \infty$ for all $ \delta \in [0, 1/2)$ and all $\sigma > 0$. 
\end{restatable}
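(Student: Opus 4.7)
\emph{Proof proposal.} The plan is to exhibit a single class $\F$ whose noise-free query complexity is $1$ because a single query at a ``signaling'' action reveals the target function exactly, while under any positive Gaussian noise the signaling values are clustered tightly enough that no finite batch of noisy queries can separate all members of $\F$.

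Concretely, let $\A = \{a_0\} \cup \{a_i : i \in \mathbb{N}_+\}$, and for each $i \in \mathbb{N}_+$ set
\[
  f_i(a_0) = \tfrac{1}{i+2}, \qquad f_i(a_i) = 1, \qquad f_i(a) = 0 \text{ for all other } a,
\]
and $\F = \{f_i : i \in \mathbb{N}_+\}$. For $\epsilon < 1/2$ the only $\epsilon$-optimal action under $f_i$ is $a_i$, since $f_i(a_0) \le 1/3$ and every other action has value $0$. In the noise-free setting, a single query at $a_0$ returns $1/(i+2)$ exactly; the learner decodes $i$ and outputs $a_i$, giving $\depth^0_{\epsilon,\delta'}(\F) = 1$ for every $\delta' \in [0,1)$.

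For the noisy direction, fix $\sigma > 0$ and any adaptive $T$-query algorithm $\Alg$. I would place a uniform prior on $i$ over a window $[N, N+K)$ and argue that the Bayes success probability is $o(1)$ as $N$ and $K$ grow (in a way depending on $T, \sigma$); this forces the existence of some $f_i$ on which $\Alg$ fails with probability exceeding any $\delta < 1/2$, yielding $\depth^\sigma_{\epsilon,\delta}(\F) = \infty$. The workhorse is the standard chain-rule bound for KL divergence along an adaptive trace under Gaussian observations,
\[
  D_{\mathrm{KL}}\bigl(\Pr\nolimits_{f_i}^{\mathrm{trace}} \,\big\|\, \Pr\nolimits_{f_j}^{\mathrm{trace}}\bigr) \eq \tfrac{1}{2\sigma^2}\, \E_{f_i}\!\Bigl[\tsum_{t=1}^T (f_i(a_t) - f_j(a_t))^2\Bigr],
\]
combined with Pinsker's inequality. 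The per-step squared gap is at most $K^2/N^4$ when $a_t = a_0$, equals $1$ when $a_t \in \{a_i, a_j\}$, and is $0$ otherwise. Double-averaging over $i, j$ uniform on $[N, N+K)$, the $a_0$ contribution is bounded by $TK^2/(2\sigma^2 N^4)$, while the ``self-query'' contribution averages to $O(T/K)$ per pair, because the total number of queries landing in the window is at most $T$ and the random index $i$ is uniform on $K$ values. Taking $N$ large first and then $K$ large drives the averaged KL, and hence the averaged TV distance between output laws under $f_i$ and $f_j$, to zero. But $\epsilon$-optimal identification with error $\delta < 1/2$ would force this average TV gap to be at least $1 - 2\delta > 0$, a contradiction.

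The main obstacle I anticipate is the adaptive bookkeeping behind the ``self-query'' bound: because $a_t$ depends on the history (hence indirectly on the true index $i$), the claim $\sum_{i \in [N, N+K)} \E_{f_i}[\#\{t : a_t = a_i\}] \le T + o(1)$ cannot be read off from the query budget alone. I would address this via a change of measure to a common reference law (for instance, the law induced by responses from the identically-zero reward function), together with a chi-squared bound controlled by the same KL quantity, so that the residual discrepancy vanishes as $N\to\infty$. This parallels the two-sided best-arm-identification lower bounds of Mannor--Tsitsiklis and Kaufmann--Capp\'e--Garivier.
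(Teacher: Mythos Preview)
Your construction and overall strategy coincide with the paper's: the same signaling-action class (the paper uses $f_i(a_0)=1/(2i)$ rather than $1/(i+2)$, which is immaterial), the same KL-plus-Pinsker argument over a window of large indices, and ultimately the same change-of-measure device to a null reference law. The only real difference is order of presentation. The paper introduces the reference problem $f_0\equiv 0$ at the outset and works with $\mathrm{KL}(\mathbb{P}_{\Alg,f_0}\,\|\,\mathbb{P}_{\Alg,f_j})$ directly; under $f_0$ the pull counts satisfy $\sum_j \E_{\Alg,f_0}[T_j(n)]\le n$ trivially, so a pigeonhole over the window $[i,2i-1]$ immediately yields an index $\hat{j}$ with both $\E_{\Alg,f_0}[T_{\hat{j}}(n)]\le 2n/i$ and $\mathbb{P}_{\Alg,f_0}(k(n)=\hat{j})\le 2/i$, and Pinsker finishes. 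In other words, the adaptive-bookkeeping obstacle you flag never arises in the paper's route precisely because expectations are taken under the fixed reference law rather than under the moving target $f_i$. Your proposed fix---changing measure to the identically-zero reward function---is exactly this step, so once you take it the two arguments are the same; the paper's version is simply shorter because it starts there and skips the pairwise-averaging detour.
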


The function class $\F$ used to prove Proposition~\ref{proposition::deterministic_vs_noisy_depth_gap} is based on a ``informative action'' construction where the action space equals $\{ 0\}\cup \mathbb{N}$. The optimal action of any function in $\F$ is indexed by $n \in \mathbb{N}$. Action $0$ is ``informative'' because its mean reward reveals the identity of the optimal action so that $\depth_{\epsilon, \delta'}(\F)=1$. Nonetheless in the noisy setting when $n$ goes to infinity, estimating the reward of action $0$ or finding the optimal action via enumeration of $\mathbb{N}$ requires a number of queries growing with $n$. The formal proof is given in Appendix~\ref{appendix:proof:deterministic_vs_noisy_depth_gap}.

Upper and lower bound bounds for $\depth_{\epsilon, \delta}^1(\F)$ were derived by~\cite{hanneke2024complete} for the high-noise regime (i.e., when $\sigma$ is of order $1$ for functions with values in $[0,1]$) based on the generalized maximin volume $\gamma_{\F, \epsilon}$ of $\F$ (see definition below). 

\begin{definition}[Generalized maximin volume; \citealp{hanneke2024complete}]
\label{def:gamma_dim}
\emph{Generalized maximin volume} of a function class $\mathcal{F}$ is defined as
\begin{equation}
\label{eq:gamma_dim}
    \begin{aligned}
\gamma_{\F,\epsilon}=\adjustlimits\sup_{p \in \Delta(\A)} \inf_{\;f \in \mathcal{F}\;} \mathbb{P}_{a \sim p}\left(\sup_{a^*}f(a^*)-f(a)\leq \epsilon\right),
    \end{aligned}
\end{equation}
where $\Delta(\A)$ is the set of all distributions on $\A$.
\end{definition}
Theorem~1 of~\cite{hanneke2024complete} presents an elegant and insightful result, establishing that $\depth_{\epsilon,\delta}^1(\F)$ can be lower bounded by $\Omega\left(\log(1/\gamma_{\F, \epsilon})\right)$ and upper bounded (up to constant and logarithmic factors) by $1/(\gamma_{\F, \epsilon/2} \cdot \epsilon^2)$. In this work we explore the low-noise regime where these results break down. In Theorem~\ref{theorem::noisy_depth_lower_bounds_det_depth} we show among other things that for any $K \in \mathbb{N}$ and $\epsilon \in [0,1/2)$ there is a function class $\F$ such that $\gamma_{\F,\epsilon} = K$ but there exist values of $\sigma > 0$ where $\depth^\sigma_{\epsilon, 1/4}(\F) = 1 < \log(1/\gamma_{\F, \epsilon})$. This result shows $\depth_{\F, \epsilon}^\sigma(\F)$ behaves fundamentally differently in the low-noise and high-noise regimes, highlighting the need for better theories to understand this phase transition. %

\begin{restatable}{theorem}{theoremnoisydepthann}\label{theorem::noisy_depth_lower_bounds_det_depth}
 There exist universal constants $c, \bar{c} > 0$ such that for every integer $K \ge 2$ there exists a function class $\F \subseteq [0,1]^{\A}$  with action space $|\A | = K+1$ such that   for every $\epsilon \in [0,1/2)$ it holds that $\gamma_{\F, \epsilon} = 1/K$  and if $\sigma^2 \geq \frac{1}{\bar{c}K^{2/3}}$ then
\begin{equation*}
    \bar{c} K^{2/3} \sigma^2 \leq   \depth_{\epsilon, 1/4}^\sigma(\F) \leq c \log^{2/3}(K) K^{2/3}\sigma^2.
\end{equation*}
In particular, 
\begin{equation*}
   \bar{c}\log(1/\gamma_{\F,\epsilon}) \sigma^2 \leq    \depth^{\sigma}_{\epsilon, 1/4}(\F),
\end{equation*}
and if  $\sigma^2 \leq \frac{1}{c\log^{2/3}(K) K^{2/3}}$ then
$$
\depth_{\epsilon, 1/4}^\sigma(\F)  = \depth_{\epsilon, 0}^0(\F) = 1.
$$
\end{restatable}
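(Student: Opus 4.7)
The plan is to construct a specific function class $\F = \{f_1, \ldots, f_K\}$ over $\A = \{0, 1, \ldots, K\}$: action $0$ plays the role of an ``informative'' action whose value encodes the identity of the function (concretely, $f_k(0) = k/(2K)$), while actions $1, \ldots, K$ are ``answer'' actions with $f_k(k) = 1$ and $f_k(j) = 0$ for $j \notin \{0, k\}$. For any $\epsilon \in [0,1/2)$, the unique $\epsilon$-optimal action of $f_k$ is $k$ itself; since these optima are distinct across $k$, both the achievability via the uniform distribution on $[K]$ and the matching upper bound show $\gamma_{\F,\epsilon} = 1/K$. The noise-free query complexity is $1$ since a single query of action $0$ reveals $k$ exactly, giving $\depth^0_{\epsilon, 0}(\F) = 1$.

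For the upper bound, I would analyze a two-phase ``estimate-then-verify'' algorithm. Phase one queries action $0$ exactly $m$ times, producing an estimate of $f_k(0)$ with standard deviation $\sigma/\sqrt{m}$ and thus restricting, via a confidence-band union bound, the candidate set $S \subseteq [K]$ to size $|S| = O(K\sigma \sqrt{\log K}/\sqrt{m})$. Phase two then queries each $k \in S$ roughly $O(\sigma^2 \log K)$ times to reliably distinguish $f_k(k)=1$ from $f_{k'}(k)=0$. Balancing the two contributions $m + |S| \cdot \tilde{O}(\sigma^2)$ by choosing $m = \Theta(K^{2/3}\sigma^2 \log^{2/3}K)$ yields the total $c K^{2/3}\sigma^2\log^{2/3}K$, proving the upper bound. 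In particular, when $\sigma^2 \le 1/(cK^{2/3}\log^{2/3}K)$ this quantity is at most $1$: a single query of action $0$ combined with a rounding decoder succeeds with probability $\ge 3/4$ by a Gaussian tail and union bound over $K$ alternatives, which yields the low-noise equality $\depth^\sigma_{\epsilon, 1/4}(\F) = \depth^0_{\epsilon, 0}(\F) = 1$.

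For the lower bound I would use a Le Cam/Fano-style argument under a uniform prior over $\{f_k\}_{k \in [K]}$. The per-query KL divergences split sharply: action-$0$ queries contribute at most $O(1/(K\sigma)^2)$ between neighboring $f_k$, while action-$j$ queries contribute KL $\Theta(1/\sigma^2)$ only between $f_j$ and $f_{k \ne j}$, and zero KL between $f_k$ and $f_{k'}$ whenever $k, k' \ne j$. If the algorithm spends $m_0$ queries on action $0$ and $m_1$ queries on answer actions in aggregate, then action-$0$ data can localize $k$ only to a set of size roughly $\Theta(K\sigma/\sqrt{m_0})$, and at most $O(m_1/\sigma^2)$ answer actions can be meaningfully tested; both constraints must cover the true $k$, and optimizing the trade-off gives $T = m_0 + m_1 \ge \bar c K^{2/3}\sigma^2$. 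The main obstacle is ruling out adaptive strategies that interleave the two query types based on observed feedback; I expect to address this via a potential-function argument that tracks the Bayesian posterior over $k$ and bounds its entropy decrement per query, regardless of the adaptive choice of action.
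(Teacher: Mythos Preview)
Your construction, the computation of $\gamma_{\F,\epsilon}=1/K$, and the two-phase ``estimate action $0$, then test candidates'' upper bound are essentially the paper's proof; the paper uses $f_i(a_0)=i/(4K)$ and sets the non-optimal answer actions to $1/2$ rather than $0$, but this is cosmetic.

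The lower bound is where you diverge. The paper does not use Fano or a Bayesian entropy potential. It fixes a single reference function $\bar f_K\notin\F$ with $\bar f_K(a_0)=1/4$ and $\bar f_K(a_j)=1/2$, runs the candidate algorithm on $\bar f_K$ for $n$ steps, and lets $L=\E_{\bar f_K}[T_0(n)]$. It then takes $\mathcal I=\{i:|f_i(a_0)-1/4|\le\sigma/(4\sqrt L)\}$, of size $\gtrsim K\sigma/\sqrt L$, and via a two-sequence pigeonhole lemma picks $\tilde i\in\mathcal I$ with both $\E_{\bar f_K}[T_{\tilde i}(n)]\lesssim n\sqrt L/(K\sigma)$ and $\Pr_{\bar f_K}(k(n)=\tilde i)\le 3/|\mathcal I|$. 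The bandit divergence decomposition then gives $\mathrm{KL}(\mathbb P_{\bar f_K}\Vert\mathbb P_{f_{\tilde i}})\le 1/16+O(n^{3/2}/(K\sigma^3))$, and Pinsker against the $1/4$ probability gap forces $n\gtrsim K^{2/3}\sigma^2$. Adaptivity is handled \emph{for free} here: the divergence decomposition already sums expected pull counts under the reference measure, so no separate potential argument is needed. Your heuristic trade-off $m_0+K\sigma^3/\sqrt{m_0}$ does land on the right rate, but the step you flag as the ``main obstacle'' (controlling the posterior entropy decrement per adaptive query) is precisely what the reference-problem device bypasses, and your sketch does not yet say why an entropy argument would yield the $K^{2/3}$ exponent rather than a Fano-type $\log K$ bound.

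One caution on the low-noise claim: your ``single query of action $0$ plus rounding decoder'' succeeds only when $|\xi|$ is below half the spacing $1/(4K)$, which needs $\sigma\lesssim 1/K$, i.e., $\sigma^2\lesssim 1/K^2$---much stronger than the stated threshold $\sigma^2\le 1/(c K^{2/3}\log^{2/3}K)$. The paper instead reads off $\depth\le 1$ directly from the upper-bound formula $c K^{2/3}\sigma^2\log^{2/3}K$ dropping below $1$, combined with the trivial $\depth\ge 1$; you should align your argument with that route rather than the rounding heuristic.
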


Similar to Proposition~\ref{proposition::deterministic_vs_noisy_depth_gap}, the function class $\F$ in Theorem~\ref{theorem::noisy_depth_lower_bounds_det_depth} is constructed around an ``informative action" structure, where the action space is given by \(\{0\} \cup [K]\). The optimal action belongs to $[K]$, while the mean reward of action \(0\) (the informative action) reveals its identity. This construction differs from Proposition~\ref{proposition::deterministic_vs_noisy_depth_gap} in its encoding representation. Specifically, our design ensures that strategies leveraging the information encoded in the mean reward of action $0$ achieve greater efficiency compared to the $\mathcal{O}(\sigma^2 K)$ queries needed by a strategy that individually estimates the mean rewards of all actions in $[K]$. Interestingly, in the large-variance regime the optimal data collection strategy that achieves the lower bound rate works by querying action $0$ sufficiently to narrow down the optimal action choices to $\mathcal{O}(K^{2/3})$ actions. When the noise variance is sufficiently small, the mean reward encoded by action $0$ can be inferred from a noisy sample  with probability of error at most $1/4$, leading to a query complexity of just $1$. These results highlight the intricate balance between exploiting the information structure of the function class—encoded here by action \(0\)—and relying on brute-force exploration by following the policy dictated by the generalized maximin volume in Equation~\ref{eq:gamma_dim}. The formal proof is given in Appendix~\ref{appendix::gap_proof}.

Building on these results we introduce the $(\epsilon,\delta)$-gap of a function class $\F$, denoted $\mathrm{Gap}_{\epsilon,\delta}(\F)$, which we then use to derive sufficient conditions on $\sigma$ to guarantee $\depth_{\epsilon,\delta}^\sigma(\F) \lesssim \depth_{\epsilon,\delta}^0(\F)$.

\begin{definition}[\bf Informal: Gap of $\F$] 
Let $\F \subseteq [0,1]^\A$ be a finite function class and action space. Given $\epsilon, \delta \in [0,1]$, we define $\mathrm{Gap}_{\epsilon,\delta}(\F)$ as the smallest difference between achievable function values for an action that an $(\epsilon, \delta)$-optimal algorithm might play, given any positive probability history.\looseness=-1
\end{definition}
Definition~\ref{def:gap_F} in Appendix~\ref{appendix:proof:deterministic_vs_noisy_depth_gap} formalizes the description above. Our next result establishes that when $\sigma$ is small, the $(\epsilon,\delta)$-query complexity of $\F$ is not too different from that of the noise-free $(\epsilon,\delta')$-query complexity of $\F$ provided that $\delta' < \delta$.

\begin{restatable}{theorem}{theoremgapnoisynoiseless}\label{thm:noisy_depth_is_det_depth_for_low_noise}
    Let $\delta, \delta' \in (0,1)$ such that $\delta > \delta' \geq 0$.
   For any finite class $\F \subseteq [0,1]^\A$ over a finite action space the noisy query complexity with zero-mean Gaussian noise with variance $\sigma^2$ such that $\sigma^2 < \frac{\mathrm{Gap}^2_{\epsilon, \delta'}( \F) }{4\log(2\depth^0_{\epsilon, \delta'}(\F)/(\delta-\delta'))}$ satisfies:
   \begin{equation*}
       \depth_{\epsilon,\delta}^\sigma(\F) \leq \depth^0_{\epsilon, \delta'}(\F) 
   \end{equation*}
\end{restatable}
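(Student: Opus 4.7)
The plan is to turn an optimal noise-free $(\epsilon,\delta')$-learner into a noisy $(\epsilon,\delta)$-learner with the same query budget by simulating it and ``denoising'' each observation via rounding to the nearest plausible reward value. Fix such a noise-free learner $\Alg$ achieving query complexity $m := \depth^0_{\epsilon,\delta'}(\F)$, and construct $\Alg^\sigma$ as follows. In round $t$, $\Alg^\sigma$ asks $\Alg$ (fed the fabricated transcript $(a_1,\hat v_1),\dots,(a_{t-1},\hat v_{t-1})$) for its next query $a_t$, submits $a_t$ to the environment, receives $r_t = f^*(a_t)+\xi_t$, and sets $\hat v_t$ to be the closest point to $r_t$ in the set $V_t$ of reward values $v$ for which some $f \in \F$ is consistent with $(a_1,\hat v_1),\dots,(a_{t-1},\hat v_{t-1}),(a_t,v)$. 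After $m$ rounds, $\Alg^\sigma$ outputs whatever $\Alg$ would output on this fabricated transcript.

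The correctness analysis hinges on the good event $E := \{\,\hat v_t = f^*(a_t)\text{ for all } t \in [m]\,\}$. On $E$, the fabricated transcript coincides exactly with a genuine noise-free execution of $\Alg$ against $f^*$, so $\Alg^\sigma$ outputs an $\epsilon$-optimal action whenever $\Alg$ would have on that trajectory, which happens with conditional probability at least $1-\delta'$. I will then show $\Pr[E^c] \le \delta-\delta'$, whence the total failure probability of $\Alg^\sigma$ is at most $\delta$ by a union bound.

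To control $\Pr[E^c]$, I will invoke the formal definition of $\mathrm{Gap}_{\epsilon,\delta'}(\F)$ from Appendix~\ref{appendix:proof:deterministic_vs_noisy_depth_gap}: along every positive-probability history that $\Alg$ may generate and every action $a_t$ that $\Alg$ may next query along such a history, the set $V_t$ is pairwise separated by at least $\mathrm{Gap}_{\epsilon,\delta'}(\F)$. Conditioning step-by-step on $\hat v_s = f^*(a_s)$ for all $s<t$ (i.e., on the simulation having remained a valid noise-free trajectory up to round $t$), rounding succeeds at round $t$ as long as $|\xi_t| < \mathrm{Gap}_{\epsilon,\delta'}(\F)/2$. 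By the standard Gaussian tail bound this fails with per-step probability at most $2\exp(-\mathrm{Gap}_{\epsilon,\delta'}^2(\F)/(8\sigma^2))$; a union bound over the $m$ rounds together with the hypothesized upper bound on $\sigma^2$ then gives $\Pr[E^c]\le \delta-\delta'$, as required.

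The only real obstacle is justifying that the rounding set $V_t$ really is $\mathrm{Gap}$-separated along every trajectory the simulation can visit; this reduces to carefully unpacking the formal definition of $\mathrm{Gap}_{\epsilon,\delta'}(\F)$ and checking that its universal quantifiers cover exactly the histories-and-actions realized by $\Alg$. Once that is in place, the rest is a single Gaussian tail estimate and a union bound, and the precise numerical constant appearing in the denominator (e.g., $4\log(\cdot)$ vs.\ $8\log(\cdot)$) is a minor bookkeeping matter determined by the exact form of the tail bound used.
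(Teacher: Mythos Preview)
Your proposal is correct and follows essentially the same approach as the paper's proof: simulate an optimal noise-free learner, round each noisy observation to the nearest reward value consistent with the fabricated history, and control the per-step rounding failure via a Gaussian tail bound combined with a union bound over the $m$ rounds. The one point to be careful about when you ``unpack the formal definition'' is that $\mathrm{Gap}_{\epsilon,\delta'}(\F)$ is defined as a \emph{maximum} over algorithms in $\mathbb{A}_{\epsilon,\delta'}$, so you must choose $\Alg$ to be a gap-maximizing optimal learner (as the paper does explicitly) rather than an arbitrary query-optimal one; otherwise the separation of $V_t$ by $\mathrm{Gap}_{\epsilon,\delta'}(\F)$ need not hold.
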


To prove theorem~\ref{thm:noisy_depth_is_det_depth_for_low_noise} we show that we can construct a noisy feedback algorithm $\Alg$ based on an $(\epsilon,\delta')$-optimal noise-free algorithm $\Alg'$ that is guaranteed to have an error probability of at most $\delta$. $\Alg$ uses nearest neighbors to transform noisy rewards into mean reward values. When the noise is small $\Alg$ recovers the correct mean rewards with an error probability at most $\delta-\delta'$. These rewards are fed into a copy of $\Alg'$ and the suggested action exploration policies are executed. The resulting algorithm achieves the same query complexity as $\Alg'$ with a slightly degraded error upper bound of $\delta$. The inequality $\delta' < \delta$ is required because for any level of non-zero gaussian noise ($\sigma > 0$) translating noisy rewards into their mean reward values will necessarily produce an irreducible error probability. The proof of Theorem~\ref{thm:noisy_depth_is_det_depth_for_low_noise} is given in Appendix~\ref{appendix:proof:deterministic_vs_noisy_depth_gap}.

\section{Separation between regret and query complexity}\label{sec:sep}

We study the separation between regret and query complexity, both in the noise-free and noisy settings. The regret of an algorithm $\Alg$ with action space $\A$, interacting for $T$ rounds by producing an action $a_t \in \A$ for $t =1,\dotsc,T$ and observing rewards generated by $f^*$ is defined as,
\begin{equation*}
    \mathrm{Regret}_{\Alg}(T) = \sum_{t=1}^T \max_{a\in\A} f^*(a) - f^*(a_t).
\end{equation*}
A typical objective in the bandit online learning and reinforcement learning literature is to design algorithms that satisfy a sublinear regret bound such that $\lim_{T\rightarrow \infty} \frac{\mathrm{Regret}(T)}{T} = 0$. In this section, we explore whether achieving low query complexity and low regret are compatible objectives. 
We show negative results in this regard in the noise-free (Appendix~\ref{app::noise_free_discussion}) and noisy settings (Section~\ref{section::regret_vs_QC_noisy}). In each of these scenarios, we show that it is impossible to construct algorithms that achieve optimal query complexity while also incurring sublinear regret. This holds because, in certain problems, any optimal algorithm for $\epsilon$-arm identification must allocate a significant number of queries to actions that, while highly informative, result in substantial regret.

\subsection{Regret vs. QC: noisy case}\label{section::regret_vs_QC_noisy}

In this section we explore the compatibility of the optimal query complexity and regret minimization in noisy feedback problems. Similar to our results in the noise-free setting in Theorem~\ref{theorem::main_theorem_result_noisy_separation} we show there are problems where the goal of finding an optimal action cannot be achieved without paying a regret scaling linearly with the query complexity; however, for the same function classes, there is an algorithm achieving regret scaling as the square root of the number of time-steps. 

\begin{restatable}{theorem}{theoremmainresultnoisyseparation}\label{theorem::main_theorem_result_noisy_separation}
    Let $d,T \in \mathbb{N} $. There exists a function class $\F$ over action space $\A$ with unit-variance Gaussian noise such that $ d \leq \depth^1_{0, 1/4}(\F) \leq 80d$ and any algorithm $\Alg$ such that $m^1_{\Alg}(0,1/4) \leq T$ satisfies 
\begin{equation*}
 \max_{f \in \F}   \mathbb{E}_\Alg[ \mathrm{Regret}(T,f) ] \geq \frac{d}{128}.
\end{equation*}
Moreover, there is an algorithm $\Alg'$ that satisfies $\max_{f \in \F} \mathbb{E}_{\Alg'}\left[ \mathrm{Regret}(T, f) \right] \leq 8\sqrt{2T\log(T)}$ for all $T \in \mathbb{N}$.  
\end{restatable}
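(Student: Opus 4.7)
My plan is to construct a class $\F$ exhibiting an \emph{informative but expensive} structure: some actions encode the identity of the optimum but are costly to pull, while the remaining actions are cheap but nearly uninformative. Let $K=2^d$ and $\A = \{s_1,\ldots,s_{Cd}\} \cup \{1,\ldots,K\}$, where the $s_j$ are ``bit arms'' and $\{1,\ldots,K\}$ are ``identity arms'' (the constant $C$ will be determined by an error-correcting code). For each $i\in\{0,1\}^d$, define $f_i\in\F$ by $f_i(s_j)=C_j(i)\in\{0,1\}$, where $(C_1(i),\ldots,C_{Cd}(i))$ is a codeword of $i$ under a balanced, constant-rate, constant-relative-distance binary code, and $f_i(k)=1$ if $k=i$ while $f_i(k)=1-1/\sqrt{T}$ for $k\in\{1,\ldots,K\}\setminus\{i\}$. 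A bit-arm pull carries up to $\tfrac{1}{2}$ nat of information about $i^*$ but can cost a unit of regret; an identity-arm pull costs at most $1/\sqrt{T}$ regret but conveys only $1/(2T)$ nats per pull.

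\paragraph{Query-complexity sandwich.} For the upper bound $\depth^1_{0,1/4}(\F)\leq 80d$, the algorithm pulls each bit arm a constant number of times, rounds each empirical mean at $\tfrac{1}{2}$ to obtain a noisy codeword, decodes via the ECC, and returns the decoded arm; Chernoff plus the code's relative distance yields success probability $\geq 3/4$ within $80d$ queries for an appropriate rate--distance trade-off. The matching lower bound $\depth^1_{0,1/4}(\F)\geq d$ is a direct Fano calculation: each query carries at most $\tfrac{1}{2}$ nats of information (since reward means lie in $[0,1]$ and $\sigma=1$), so distinguishing $2^d$ equally likely hypotheses requires $\Omega(d)$ queries.

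\paragraph{Regret separation.} For the lower bound, fix any BAI algorithm $\Alg$ with $m^1_\Alg(0,1/4)\leq T$, and let $N_{\mathrm{bit}},N_{\mathrm{id}}$ denote its (random) counts of bit-arm and identity-arm pulls, so $N_{\mathrm{bit}}+N_{\mathrm{id}}\leq T$. Averaging $i^*$ uniformly over $\{0,1\}^d$, the chain rule for KL under adaptive Gaussian queries combined with Fano's inequality gives
\[
\mathbb{E}[N_{\mathrm{bit}}]\cdot\tfrac{1}{2} + \mathbb{E}[N_{\mathrm{id}}]\cdot\tfrac{1}{2T} \;\geq\; c' d
\]
for an absolute constant $c'>0$. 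Because $\mathbb{E}[N_{\mathrm{id}}]\leq T$, the second term contributes at most $\tfrac{1}{2}$ nat, forcing $\mathbb{E}[N_{\mathrm{bit}}]=\Omega(d)$. Since the code is balanced, each bit-arm coordinate equals $0$ with probability exactly $\tfrac{1}{2}$ under the uniform prior, so each bit-arm pull incurs expected regret $\tfrac{1}{2}$; summing and applying the max-over-average inequality yields some $f\in\F$ with expected regret at least $d/128$ after tracking constants. The upper bound on regret is immediate: $\Alg'$ always plays identity arm $1$, incurring per-round regret at most $1/\sqrt{T}$ and total expected regret at most $\sqrt{T}\leq 8\sqrt{2T\log T}$.

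\paragraph{Main obstacle.} The key technical point is that the bit-arm pull counts $N_{s_j}$ may depend adaptively on $i^*$, so translating $\mathbb{E}[N_{\mathrm{bit}}]=\Omega(d)$ into expected regret $\Omega(d)$ requires the balanced-code symmetry to apply uniformly across histories. Choosing the ECC to be linear (so every coordinate is unbiased under the uniform prior over messages) and invoking the standard divergence-decomposition argument for adaptive query strategies bridges this gap. Tightening the explicit constants $80$ and $1/128$ so they simultaneously hold for all $d, T\in\mathbb{N}$ is the remaining engineering.
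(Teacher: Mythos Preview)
Your construction shares the paper's intuition---split $\A$ into \emph{informative but costly} arms and \emph{cheap but nearly uninformative} arms---but the execution diverges in a way that leaves a genuine gap in the regret lower bound.

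The paper takes $K=2$ (not $2^d$) with a \emph{single} informative arm whose reward is $1/2\pm\epsilon_1$, $\epsilon_1\approx 1/\sqrt{d}$, and identity-arm gap $\epsilon_2=4\epsilon_1^2$. The query complexity $\Theta(d)$ comes from needing $\Theta(1/\epsilon_1^2)$ samples of that one arm, not from reading $d$ separate bits. Crucially, \emph{every} informative-arm pull has instantaneous regret at least $1/2-\epsilon_1$ regardless of which $f\in\F$ is true and regardless of history; the lower bound then follows from one Pinsker step against a reference instance, with the calibration $\epsilon_2=4\epsilon_1^2$ making both branches (many informative pulls vs.\ many wrong-identity pulls) yield regret $\Omega(1/\epsilon_1^2)=\Omega(d)$.

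In your construction the bit-arm reward is $C_j(i^*)\in\{0,1\}$, so a bit-arm pull has regret either $0$ or $1$. Your Fano step correctly gives $\mathbb{E}[N_{\mathrm{bit}}]=\Omega(d)$, but turning this into $\Omega(d)$ expected regret requires ruling out that an adaptive learner concentrates its pulls on coordinates with $C_j(i^*)=1$. Your ``main obstacle'' paragraph names this but the fix---linear code plus divergence decomposition---does not close it: linearity gives only that each coordinate is \emph{marginally} unbiased under the uniform prior, not conditionally on an adaptive history; with any redundant code the coordinates are dependent, so after a few pulls the posterior of $C_j(i^*)$ for other $j$ can be arbitrarily biased. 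One can try to tie information to regret via $I(C_j(i^*);r_t\mid H_{t-1})\le H(q_t)$ with conditional regret $q_t=\Pr[C_j(i^*)=0\mid H_{t-1}]$, but since $H(q)/q\to\infty$ as $q\to 0$ this yields no per-pull regret floor without an additional upper bound on $N_{\mathrm{bit}}$, which you do not have.

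The simplest repair is to place the bit-arm alphabet in $\{0,\tfrac12\}$ (or any pair bounded away from $1$): then every bit-arm pull has regret at least $\tfrac12$ unconditionally, and your Fano-based count converts directly to regret. Separately, your identity-arm gap $1/\sqrt{T}$ makes $\F$ depend on $T$, so the ``for all $T\in\mathbb{N}$'' clause for $\Alg'$ fails (playing arm $1$ at horizon $T'\neq T$ gives regret $T'/\sqrt{T}$, not $O(\sqrt{T'})$). The paper avoids this by letting $\epsilon_2$ depend only on $d$ and invoking UCB over the identity arms for the $\tilde O(\sqrt{T})$ regret upper bound.
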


Theorem~\ref{theorem::main_theorem_result_noisy_separation} suggests there exists a function class with query complexity $\mathcal{O}(d)$ such that when $T = \mathcal{O}(d^\alpha)$ for $\alpha <2$ then no algorithm $\Alg$ that is able to find an optimal action in $T$ queries can also satisfy an $\tilde{\mathcal{O}}(\sqrt{T})$ regret bound. Nonetheless, for the same function class, there are algorithms that achieve $\tilde{\mathcal{O}}(\sqrt{T})$ regret bounds. Theorem~\ref{theorem::main_theorem_result_noisy_separation} is closely related to Theorem~1 of~\citet{bubeck2011pure}. While Theorem~1 of~\citet{bubeck2011pure} rules out the existence of algorithms that achieve optimal regret and simple regret simultaneously, Theorem~\ref{theorem::main_theorem_result_noisy_separation} establishes that no algorithm can achieve both optimal query complexity and optimal regret.  Although related, the notions of simple regret and query complexity are different. 
The \emph{simple regret} for a fixed horizon $T$ is the expected gap between the algorithm’s output arm $a_T$ and the optimal arm $a^*$. In contrast, \emph{query complexity} can be thought of as the minimum horizon $T$ such that the optimal simple regret is at most $\epsilon$.

The function class $\F$ used to prove Theorem~\ref{theorem::main_theorem_result_noisy_separation} has an ``information lock'' structure. The action space is divided into two sets $\A_1$ and $\A_2$. The values of the mean rewards of actions in $\A_1$ can be used to infer the identity of the mean optimal action. Actions in $\A_1$ have large regret and their mean rewards are equal to $1/2 + \epsilon_1$ or $1/2-\epsilon_1$ while the mean rewards of actions in $\A_2$ are equal to $1$ or $1-\epsilon_2$ for parameters $\epsilon_1, \epsilon_2 \in [0,1]$ such that $\epsilon_1 \geq \epsilon_2$. %

To prove Theorem~\ref{theorem::main_theorem_result_noisy_separation} we first establish that $\depth_{0, 1/4}^1(\F) = \Theta(1/\epsilon_1^2) $. Second, we show that when $\epsilon_2 \approx \epsilon_1^2$, then any algorithm $\Alg$ such that $m_\Alg^1(0,1/4) \leq T$ must also incur regret satisfying $\max_{f \in \F} \mathbb{E}[  \mathrm{Regret}(T,f) ] \geq \Omega(1/\epsilon_1^2)$. The proof of Theorem~\ref{theorem::main_theorem_result_noisy_separation} follows by setting $\epsilon_1 \approx 1/\sqrt{d}$. Finally, since the problem in this class is an instance of multi-armed bandits, the UCB algorithm is guaranteed to collect sublinear regret.  The formal proof of Theorem~\ref{theorem::main_theorem_result_noisy_separation} can be found in Appendix~\ref{section::proof_theorem_separation_reg_noisy}. In Appendix~\ref{app::noise_free_discussion}, we establish analogous results for the noise-free setting. Notably, these findings do \emph{not} follow directly from Theorem~\ref{theorem::main_theorem_result_noisy_separation}. While Theorem~\ref{theorem::main_theorem_result_noisy_separation} is stated for \(\sigma = 1\), the query complexity in this construction approaches \(1\) as \(\sigma\) tends to zero, preventing a straightforward extension to the noise-free case.

\section{Conclusion}

In this work, we have presented new insights into the study of the learnability of structured bandit problems, shedding light on the interaction between their statistical and computational properties. Our main results highlight fundamental distinctions between classical learnability as studied in statistical learning theory and learnability in the bandit setting.   

We show that there cannot exist a combinatorial finite-character dimension that fully characterizes bandit learnability, a result that sets apart this setting from standard PAC learnability. We also prove there cannot exist query optimal algorithms that are computationally tractable even with access to standard algorithmic primitives such as empirical risk minimization and function maximization oracles. 

We also investigated the effects of observation noise on the query complexity of bandit problems. We show that there are function classes where a small amount of observation noise leaves query complexity unaffected, alongside classes where any amount of noise makes bandit learnability impossible. Finally, we demonstrate that there is a sharp distinction between algorithms adapted for query complexity and regret minimization: There are no algorithms that can simultaneously minimize these two objectives. 

We hope that by documenting these phenomena, we can help advance the community's understanding of bandit learnability and guide future algorithmic design and theoretical advancements in this area.

\acks{A.P. thanks Alessio Russo for helpful discussions.}

\newpage
\bibliography{bib}
\newpage
\appendix

\section{Missing proof of \Cref{sec:hard}}

\begin{lemma}\label{lemma:sat_oracles}
    Let $\F$ as constructed above. Then, the following holds:
    \begin{itemize}[leftmargin=.5cm]
        \item The class $\F_n$ admits a   
        maximizing algorithm, of runtime $\tilde{O}(n^2)$
        for every $\epsilon \ge 0$.
        \item The class $\F_n$ admits a   consistency (ERM) algorithm, of runtime $O(n^2)$.
        \item The class $\F_n$ admits an   online estimation algorithm, of runtime $O(n^2)$ and $\textbf{EST}(T) = O(1)$.
    \end{itemize}
\end{lemma}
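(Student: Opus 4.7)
The plan exploits two transparent structural features of the class: every function is specified by only $O(n^2\log n)$ bits (a formula $\phi$ for $f_\phi\in\F^{all}$, or a pair $(\phi,c)$ for $f_{\phi,c}\in\F^{sat}$), and every function is supported on at most three actions, namely $\star$, $a^*_\phi$, and $c$, vanishing elsewhere. The maximizing algorithm is then immediate from the representation: return $\star$ for $f_\phi$, whose only non-zero value $\text{encode}(\phi)\in[1/4,1/2]$ is attained there, and return $c$ for $f_{\phi,c}$, whose value $1$ at $c$ dominates. Parsing takes $\tilde O(n^2)$ time, which satisfies the efficiency requirement since $\log|\A|=O(n)$.

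For the ERM on a sample $S$, I would proceed by case analysis. If $\star\in S$, decode $\phi$ from $f(\star)$; otherwise leave $\phi$ free for now. If every non-$\star$ observation vanishes, output $f_\phi$ (using the decoded $\phi$, or any default if $\star\notin S$), which trivially matches $S$. If some observation $(a_i,y_i)$ has $y_i>0$, read off $c$ from it (either $c=a_i$ when $a_i\in\A^{(3)}$, or $c=2^{n+1}y_i$ when $a_i\in\A^{(2)}$; the two sources agree whenever both are present) and output $f_{\phi,c}$. When we must supply $\phi$ with a prescribed minimal satisfying assignment $a^*\in\{0,1\}^n$, set
$$
  \phi \;:=\; \bigwedge_{i=1}^n(\ell_i\vee\ell_i\vee\ell_i),
$$
where $\ell_i=x_i$ if $a^*[i]=1$ and $\ell_i=\neg x_i$ otherwise; this is a 3CNF with $n\le n^2$ clauses whose unique (hence minimal) satisfier is $a^*$, and which is trivially satisfiable. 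When the observations already pin down $a^*$ via some $(a^*,c/2^{n+1})\in S$ with $a^*\in\A^{(2)}$, use that $a^*$; otherwise, any $a^*$ avoiding the observed zero-value $\A^{(2)}$ points works, and such an $a^*$ exists because $|S|$ is polynomial while $|\A^{(2)}|=2^n$. Crucially, the algorithm never decides satisfiability from scratch: the constructed $\phi$ is satisfiable by design, and whenever the sample forces the $\F^{sat}$ case, the realizability promise guarantees the output is in $\F$. Total runtime is $O(n^2)$.

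For the online estimator, initialize $\hat f_1=f_{\phi_0}$ for any fixed default $\phi_0$ and, after each round, refit $\hat f_{t+1}$ by applying the ERM above to the observed history. Because every $f\in\F$ is non-zero on at most the three actions $\star,a^*_\phi,c$ and every $\hat f_t$ is likewise, the rounds in which $\hat f_t(a_t)\ne f(a_t)$ can only occur at those finitely many positions where the current hypothesis disagrees with $f$; each contributes squared error at most $1$ and is corrected after its first occurrence by the refit. The total squared error is therefore $O(1)$ uniformly in $T$, giving $\textbf{EST}(T)=O(1)$ at $O(n^2)$ per-round cost.

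The main technical obstacle is the interlocking bookkeeping inside the ERM, reconciling $\phi$, $c$, and the observed zero/non-zero pattern on $\A^{(2)}$ without ever invoking a SAT oracle; the trivial-formula construction above realizes any desired minimal satisfier $a^*$ in $O(n)$ time and is exactly what sidesteps the obstruction. This asymmetry---trivial construction of satisfying witnesses on the ERM side versus hardness of the bandit query problem---is precisely what separates the ERM and estimation tasks (polynomially tractable) from the bandit task (hard under $\mathrm{RP}\ne\mathrm{NP}$) in Theorem~\ref{thm:hardness}.
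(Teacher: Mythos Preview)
Your approach is correct and mirrors the paper's proof: the same read-off-$c$ argument for the maximizer, the same case analysis and trivial single-satisfier formula $\bigwedge_i(\ell_i\vee\ell_i\vee\ell_i)$ for the ERM, and the same ERM-refit online estimator with an $O(1)$ error bound justified by the fact that each function is supported on at most three actions. The paper's online-estimation argument is phrased slightly differently (``at most 3 types of actions for which the feedback could be non-zero'') but is at the same level of informality as yours regarding how the changing hypothesis support interacts with the bound.
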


\begin{proof}[Proof of Lemma \ref{lemma:sat_oracles}]
Consider the class $\F$ as constructed in the proof of \Cref{thm:hardness}.  

\paragraph{Consistency (ERM) algorithm:} The class $\F$ admits an efficient consistency algorithm, as follows.
Let $S = \{(a_1, r_1),...,(a_k, r_k)\}$ denote a sequence of $k \ge 0$ queries and their corresponding values, consistent with some $f \in \F$, such that $S$ is given as input to the consistency algorithm.  There are $3$ types of actions for which the feedback could be non-zero, and so there are $2^3=8$ total possibilities to consider. For the trivial case of $r_i=0$ for all $i \in [k]$, then any $f \in \F^{all}$ is consistent.  

Then, if the only action $a_i$ for which $r_i \neq 0$ is $a_i=\star$,  we can simply return $f_\phi$ for $\phi$ recovered from $r_i = \text{encode}(\star)$. If $S$ contains both $a_i=\star$, and $a_j \in \A^{(2)}$ such that $r_j \neq 0$, then the true function can be fully recovered from $S$ (since 
$r_j = c/ 2^{n+1}$),
and so the algorithm can return $f_{\phi,c}$. Similarly, if $S$ contains both $a_i=\star$, and $a_j \in \A^{(3)}$ such that $r_j \neq 0$, then the true function can again be fully recovered from $S$ (since 
$a_j=c$).

Next, if $\star$ was not queried, but $S$ indicates that some $a_{i}$ is the minimal satisfying assignment, then we can construct a formula $\phi$ so that $\phi(x)$ is true if and only if $x=a_i$, so that $a_{i}$ is $\phi$’s only satisfying assignment.  Constructing such a boolean formula is straightforward: for each clause $j=1,...n$ we take a disjunction over the literal $b_j$ such that the assignment $a_i(j)$ is true, repeated $3$ times. Then, we let $\phi$ be the conjunction of those $n$ clauses.

If the only non-zero observed value is $f(a)=1$ then we can take any $\phi \in \Phi$ and $c=a$, and return $f_{\phi,c}$. If both $f(a)=1$ and $f(a') \neq 0$ have been observed, then we can repeat the construction of $\phi$ as above, for which $a'$ the only satisfying assignment, and set $c = a$. Thus, we have described an efficient consistency (ERM) algorithm for the class $\F$.

\paragraph{Online estimation algorithm:} 
The class $\F$ admits an online estimation algorithm, as follows. First, by running a consistency procedure as described above, then for any sequence of past observations we get $\hat{f} \in \F$ consistent with it. Thus, in every round $t=1,...,T$, we obtain a function $\hat{f}_t \in \F$ consistent with all observations up to time $t-1$. Then, since 
for any ground truth function $f \in \F$ used to generate the data sequence there are at most $3$ types of actions for which the feedback could be non-zero, and since all values are bounded in $[0,1]$,  the predicting error of the algorithm is at most $3$. Thus, for any integer $T$, $\textbf{EST}(T) \le 3$.

\paragraph{Maximization algorithm:} 
Lastly, notice that $f_{\phi, c}$ has a concise representation using at most $O(n^2\cdot \log(n))$ bits, used to represent both the formula $\phi$ and the value $c$. Thus, the class $\F$ admits an efficient maximizing algorithm - when given its concise representation once can easily recover the optimal action $c$ by reading it from the function description. 

\end{proof}

\subsection{Proof of \Cref{lemma:B_bad_sat}}

Since $\phi$ is satisfiable, we know that the optimal query for any $f_{\phi,c}$ is $c$. We also denote by $a_B^{c}$ the final output of $B$ after having interacted with  $f_{\phi,c}$, for some fixed $c$. Denote the event that "$\exists i \in [m], \text{ s.t. } a_i \text{ is a satisfying assignment for }\phi$" by $B$-good and its converse by $B$-bad. Then, for any fixed $c$, if $B$ is being run with some $f_{\phi,c}$ it holds that,
    \begin{align}\label{eq:lemma_ineq}
             \Pr_{B} \left[f(a_B) = \max_{a \in \A}f(a)  \right]  &=   \Pr_{B} \left[a_B^c = c\right] \\ \nonumber
              &=   \Pr_{B} \left[a_B^c = c \ \land \ B\text{-good} \right] + \Pr_{B} \left[a_B^c = c \ \land \ B\text{-bad} \right]\\ \nonumber
            &\le   \Pr_{B} \left[ B\text{-good} \right] + \Pr_{B} \left[\exists i\in [m], \ a_i = c \ \land \ B\text{-bad} \right].
    \end{align}     
    Next, we consider the uniform distribution $U$ over all $c \in \A^{(3)}$. We use $R \sim \R$ to denote the internal randomization used by the algorithm $B$. We may then view $B$'s computation of 
    its final output $a_B$ as a   fixed and deterministic function of $R$, and of the observed values from $f_{\phi,c}$. Then,
        \begin{align*}
            \E_{c \sim U} & \Big[ \Pr_{B} \left[\exists i\in [m], \ a_i = c \ \land \ B\text{-bad} \right]\Big] \\
            &=     \E_{R \sim \R}
            \left[\Pr_{c \sim U} 
             \Big[ \exists i\in [m], \ a_i = c \ \land \ B\text{-bad}  
             \ | \  R\Big] \right] \\
              &=   
            \E_{R \sim \R}
            \left[\Pr_{c \sim U} 
             \Big[\exists i\in [m], \ a_i = c \ | \  R, B\text{-bad}\Big] \cdot \Pr_{c \sim U} [B\text{-bad}|R] \right] \\
            &\le \frac{m}{2^n},
    \end{align*}

where the equalities follow from linearity of expectation and law of total expectation, and the inequality follows from the following fact; When $R$ is fixed  (so $B$ is viewed as a deterministic function) and conditioned on the event that $B$ is $B$-bad for $c$,  then notice that during the interaction of $B$ with $f_{\phi,c}$ it only observed $0$ for all of its queries (except $\star$). 
{Thus, $a_B$ is only a deterministic function of $\phi$ and a sequence of queries $(a_i, 0)$, unless any query happened to "hit" $c$. 
Since $c$ is chosen at random independently of $\phi$, the probability of any deterministic choice out of all elements in $\A^{(3)}$ will happen to be $c$ is $1/2^n$, and for $m$ such choices to probability is at most $m/2^n$.} Then, this implies that there exists a \textit{particular} $c \in \A^{(3)}$ for which, 
\begin{equation}\label{eq:lemma_some_eq_1}
     \Pr_{B} \left[\exists i\in [m], \ a_i = c \ \land \ B\text{-bad} \right] \le  \frac{m}{2^n}.
\end{equation} 
Next, combining this inequality with \Cref{eq:B_good} and \Cref{eq:lemma_ineq} we get, 
 \begin{equation}\label{eq:lemma_some_eq_2}
  \Pr_{B} \left[ B\text{-good} \right] \ge   \frac{3}{4} - \frac{m}{2^n}.
\end{equation}

Observe that the same reasoning used for the proof of \Cref{eq:lemma_some_eq_1} also holds for any $m' \le m$ (where $B\text{-bad}$ is modified to only consider queries up to $m'$ as well). Therefore, for all $m' \le m$, 
\begin{equation}\label{eq:lemma_some_eq_3}
      \Pr_{B} \left[\exists i\in [m'], \ a_i = c \ \land \  \forall i \in [m'], \ a_i \text{ is not a satisfying assignment for }\phi \right] \le   \frac{m'}{2^n}.
\end{equation} 
Then, combining \Cref{eq:lemma_some_eq_2} and \Cref{eq:lemma_some_eq_3} we get: 
{
\begin{align*}
  \frac{3}{4} - \frac{m}{2^n}  &\le \Pr[\text{B-good}] = \Pr[\exists  i\in [m], a_i \text{ satisfying }\phi] \\
  &= \Pr[\exists  i\in [m], a_i \text{ satisfying }\phi    \land  \ \forall   a_i \text{ satisfying }\phi,  \exists\  j < i, a_j = c]  & \tag{\textcolor{gray}{\small{$c$ queried before $a_i$}}}\\
   & \qquad  + \Pr\left[\exists i\in [m], \ \ \ \ a_i \text{ satisfying }\phi \ \ \land \ \  \forall j < i, \ a_j \neq c  \right]  & \tag{\textcolor{gray}{\small{$c$ not queried before $a_i$}}}\\ 
    &\le \frac{m}{2^n} +  \Pr\left[\exists i\in [m], \ \ \ \ a_i \text{ satisfying }\phi \ \ \land \ \  \forall j < i, \ a_j \neq c  \right]. 
\end{align*}}
Re-arranging yields the desired claim.

\section{Missing proofs of \Cref{sec:noisy_vs_det}}

\subsection{Proof of Proposition~\ref{proposition::deterministic_vs_noisy_depth_gap}} \label{appendix:proof:deterministic_vs_noisy_depth_gap}

\propositiondeterministicnoisydepthgap*

Let $\A = \{0\} \cup \mathbb{N}$. Consider a function class indexed by $i \in \mathbb{N}$ such that,
\begin{equation*}
    f_i(a) = \begin{cases}
    \frac{1}{2i} &\text{if } a = 0\\
    1 &\text{if } a = i\\
    0 &\text{o.w.}
    \end{cases}
\end{equation*}

It is clear that $\depth_{\epsilon, \delta'}^0(\F) = 1$ since the query action $0$ produces an observation $\frac{1}{2i}$ containing enough information to identify the optimal action.

We'll consider a companion empty problem,
\begin{equation*}
    f_0(a) =  0 ~\forall a \in \A
\end{equation*}
with zero mean unit Gaussian noise. Let's consider a (possibly randomized) algorithm $\Alg$ and its interaction with $f_0$ and $f_{j}$ over $n$ rounds. We will show that $\Alg$ cannot succeed at identifying an optimal arm with probability at least $1-\delta$ after $n$ steps for all $f \in \F$. We prove this by way of contradiction. Assume $m^\sigma_{\Alg}(\epsilon, \delta) \leq n$ for a finite natural number $n \in \mathbb{N}$. 

Let $k(n)$ be a random variable specifying $\Alg$'s guess for the optimal action at time $n$.  We allow for $\Alg$ to guess a distribution over candidate optimal actions at time $n$. In this case $k(n)$ captures the realization of a sample from this distribution. For any $j \in \mathbb{N}$ define the event $\mathcal{E}_j $ as,
\begin{equation*}
    \mathcal{E}_j = \{k(n) = j\}
\end{equation*}
When $m_\Alg^\sigma(\epsilon, \delta) \leq n$ it follows that for all $i \in \mathbb{N}$,
\begin{equation}\label{equation::lower_bound_probability_guessing_correctly}
    \mathbb{P}_{\Alg, f_i}( \mathcal{E}_i ) \geq 1-\delta.
\end{equation}
The divergence decomposition of bandit problems (Lemma~\ref{lemma::divergence_decomposition_bandits}) implies,
\begin{align*}
    \mathrm{KL}( \mathbb{P}_{\Alg, f_{0}}  \parallel \mathbb{P}_{\Alg, f_{j}} ) &= \mathbb{E}_{\Alg, f_0}[T_0(n) ] \left( \frac{1}{2j} \right)^2 +  \mathbb{E}_{\Alg, f_0}[T_j(n) ] 
\end{align*}
where $T_m(n)$ equals the random variable equal to the number of times algorithm $\Alg$ tried action $m$ up to time $n$. Let $i\in\mathbb{N}$ be such that $i\geq 16$ and $i\geq 8n$. Order the indices $[i, 2i-1]$ as $\mathrm{I}_1,\cdots, \mathrm{I}_i$ such that $\mathbb{E}_{\Alg, f_0}[T_{\mathrm{I}_1}(n)] \leq \cdots \leq \mathbb{E}_{\Alg, f_0}[T_{\mathrm{I}_i}(n)]$. It follows that for all $j \in [1, \cdots, i/2]$, 
\begin{equation*}
    \mathbb{E}_{\Alg, f_0}[T_{\mathrm{I}_j}(n)] \leq 2n/i 
\end{equation*}
This is because if this was not the case, then we would have $\sum_{j = i/2+1}^{i} \mathbb{E}_{\Alg, f_0}[T_{\mathrm{I}_j}(n)] > i/2 * 2n/i = n$, a contradiction. 

Now observe that $\mathbb{E}_{\Alg, f_0}[T_0(n)] \leq n$, and by asssumption $n \leq i/8$. Therefore for all $j \in [1,\cdots, i/2]$:
\begin{align*}
     \mathrm{KL}( \mathbb{P}_{\Alg, f_{0}} \parallel \mathbb{P}_{\Alg, f_{\mathrm{I}_j}} ) &= \mathbb{E}_{\Alg, f_0}[T_0(n) ] \left( \frac{1}{2I_j} \right)^2 +  \mathbb{E}_{\Alg, f_0}[T_j(n) ] \\
     &\stackrel{(i)}{\leq} \frac{n}{4i^2} +  \frac{1}{4}  \\
     &\leq  1/4 + 1/32 \\
     &= 9/32 .
\end{align*}
Where inequality $(i)$ follows because $I_j \in [ i, \cdots, 2i-1]$. Recall that $k(n)$ is a random variable specifying $\Alg$'s guess for the optimal action at time $n$ and $\mathcal{E}_i$ is the event that $\Alg$ guessed action $k(n)=i$ as optimal at time $n$.
Let $\hat{i}$ denote the action such that, 
\begin{equation*}
    \hat{j} = \argmin_{j \in [1, \cdots, i/2]} \mathbb{P}_{\Alg, f_0}\left(\mathcal{E}_{\mathrm{I}_j} \right) 
\end{equation*}
Since $\sum_{j =1}^\infty \mathbb{P}_{\Alg, f_0}\left(\mathcal{E}_j\right) = 1$ it follows that $\sum_{j=1}^{i/2} \mathbb{P}_{\Alg, f_0}\left(\mathcal{E}_{\mathrm{I}_j} \right) \leq 1$ and therefore,
\begin{equation*}
    \mathbb{P}_{\Alg, f_0}\left(\mathcal{E}_{\mathrm{I}_{\hat{j}}} \right)  \leq 2/i
\end{equation*}

Pinsker's inequality implies,
 \begin{equation*}
    2\left(  \mathbb{P}_{\Alg, f_0}\left(\mathcal{E}_{\mathrm{I}_{\hat{j}}} \right) -  \mathbb{P}_{\Alg, f_{\mathrm{I}_{\hat{j}}}}\left(\mathcal{E}_{\mathrm{I}_{\hat{j}}} \right)  \right)^2 \leq \mathrm{KL}( \mathbb{P}_{\Alg, f_{0}}  \parallel \mathbb{P}_{\Alg, f_{\mathrm{I}_{\hat{j}}}} )  \leq 9/32.
 \end{equation*}
this in turn implies,
\begin{equation*}
    \left|  \mathbb{P}_{\Alg, f_0}\left(\mathcal{E}_{\mathrm{I}_{\hat{j}}} \right) -  \mathbb{P}_{\Alg, f_{\mathrm{I}_{\hat{j}}}}\left(\mathcal{E}_{\mathrm{I}_{\hat{j}}} \right)  \right| \leq 3/8.
 \end{equation*}
Finally, since $\mathbb{P}_{\Alg, f_0}\left(\mathcal{E}_{\mathrm{I}_{\hat{j}}} \right)  \leq 2/i$ we conclude that,

\begin{equation*}
\mathbb{P}_{\Alg, f_{\mathrm{I}_{\hat{j}}}}\left(\mathcal{E}_{\mathrm{I}_{\hat{j}}} \right) \leq     \frac{2}{i} + \left|  \mathbb{P}_{\Alg, f_0}\left(\mathcal{E}_{\mathrm{I}_{\hat{j}}} \right) -  \mathbb{P}_{\Alg, f_{\mathrm{I}_{\hat{j}}}}\left(\mathcal{E}_{\mathrm{I}_{\hat{j}}} \right)  \right| \leq 3/8 + \frac{2}{i}.
 \end{equation*}

Since $i > 16$ it follows that,

\begin{equation*}
\mathbb{P}_{\Alg, f_{\mathrm{I}_{\hat{j}}}}\left(\mathcal{E}_{\mathrm{I}_{\hat{j}}} \right) < 1/2
 \end{equation*}
thus contradicting Equation~\ref{equation::lower_bound_probability_guessing_correctly} when $\delta < 1/2$.

\subsection{Proof of Theorem~\ref{theorem::noisy_depth_lower_bounds_det_depth}}

In the proof of Theorem~\ref{theorem::noisy_depth_lower_bounds_det_depth} we define a family of function classes indexed by $K \in \mathbb{N}$. We split the proof of our main result by first showing in Lemma~\ref{lemma::upper_bound_depth_connection_hann} an upper bound on the query complexity of these function classes, and second proving a matching lower bound in Lemma~\ref{lemma::lower_bound_depth_connection_hann}.

\begin{lemma}\label{lemma::upper_bound_depth_connection_hann}
There exists a universal constant $c > 0$ such that for any $K \in \mathbb{N}$ satisfying $K \geq 2$ the function class $\F$ with action space $\A = \{ a_i \}_{i=0}^K
$ defined as $\F = \{ f_i \}_{i=1}^K$ with,
\begin{equation*}
    f_i(a) = \begin{cases}
            \frac{i}{4 K} &\text{ if } a  = a_0\\
            1/2&\text{if } a \neq i\\
            1 &\text{o.w.}
    \end{cases}
\end{equation*}
satisfies that for any $\epsilon \in [0,1/2)$, if $\sigma^2 > \frac{1}{c\log^{2/3}(K) K^{2/3}}$,
\begin{equation*}
\depth_{\epsilon, 1/4}^\sigma(\F) \leq c \log^{2/3}(K) K^{2/3}\sigma^2,
\end{equation*}
and if  $\sigma^2 \leq \frac{1}{c\log^{2/3}(K) K^{2/3}}$ then,
$$
\depth_{\epsilon, 1/4}^\sigma(\F)  = \depth_{\epsilon, 0}^0(\F) = 1.
$$

\end{lemma}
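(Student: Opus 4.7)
The plan is to construct an explicit two-phase adaptive algorithm that first uses pulls of the informative action $a_0$ to narrow down the set of candidate optima and then verifies the remaining candidates, tuning the split between the two phases to balance their costs.

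In Phase 1 (localization), pull $a_0$ exactly $n_0$ times and let $\bar Y$ be the empirical mean, so that $\bar Y \sim \mathcal N(i/(4K),\sigma^2/n_0)$. A standard sub-Gaussian tail bound gives that, with probability at least $7/8$,
\[ |\bar Y - i/(4K)| \le r := \sigma\sqrt{2\log 16/n_0}. \]
Form the candidate set $S=\{j\in[K]:|j/(4K)-\bar Y|\le r\}$; on the good event, $i\in S$ and $|S|\le 1+8Kr$. In Phase 2 (verification), for each $j\in S$, pull $a_j$ exactly $n_1=\Theta(\sigma^2 \log|S|)$ times and threshold the empirical mean at $3/4$. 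Since every non-optimal arm in $S$ has mean $1/2$ while the optimum has mean $1$ (a gap of $1/2$), Gaussian concentration plus a union bound over $S$ correctly isolates $a_i$ with probability at least $7/8$. A further union bound yields overall success probability at least $3/4 = 1-\delta$.

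The total query count is
\[ n_0 + |S|\cdot n_1 \;=\; n_0 + O\!\Big(\tfrac{K\sigma^3 \log^{3/2} K}{\sqrt{n_0}}\Big). \]
Differentiating and setting to zero gives the optimal choice $n_0 = \Theta(K^{2/3}\sigma^2 \log K)$, and substituting back yields total cost $O(K^{2/3}\sigma^2 \log^{2/3} K)$, which matches the stated bound for a suitable absolute constant $c$. In the low-noise regime $\sigma^2\le 1/(c\log^{2/3}(K) K^{2/3})$, the optimized bound falls below $1$; here one runs the degenerate version of the algorithm that pulls $a_0$ just once and outputs $a_{\mathrm{round}(4K\bar Y)}$, and (with the constant $c$ chosen large enough) a direct Gaussian tail computation shows the success probability is at least $3/4$. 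Since the noise-free depth is clearly $1$ (a single noiseless query to $a_0$ reveals $i$) and any nontrivial algorithm uses at least one query, this gives $\depth_{\epsilon,1/4}^\sigma(\F) = 1 = \depth^0_{\epsilon,0}(\F)$.

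The main obstacle is the careful bookkeeping of logarithmic factors: obtaining the claimed $\log^{2/3}$ rather than $\log$ exponent requires simultaneously balancing the $\sqrt{\log K}$ term from Phase 1's confidence radius against the $\log K$ term arising from a union bound over Phase 2 tests, and tracking how both flow through the optimization of $n_0$. A secondary subtlety is matching the constants in the low-noise regime so that the degenerate one-query algorithm is consistent with the threshold under which the general upper-bound formula drops below $1$.
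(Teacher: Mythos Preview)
Your two-phase localize-then-verify algorithm is exactly the paper's approach; the paper parametrizes by the Phase-1 target accuracy $\alpha$ rather than by $n_0$, but via $\alpha\asymp\sigma/\sqrt{n_0}$ the two optimizations are identical, and the low-noise single-query argument is the same as well. One bookkeeping slip: with your stated radius $r=\sigma\sqrt{2\log 16}/\sqrt{n_0}$ there is no $\sqrt{\log K}$ factor in $r$, so the Phase-2 cost is $O(K\sigma^3\log K/\sqrt{n_0})$ rather than $\log^{3/2}K$; optimizing that gives $n_0=\Theta(K^{2/3}\sigma^2\log^{2/3}K)$ and the claimed $\log^{2/3}K$ total, whereas your written intermediate expressions (with $\log^{3/2}K$ and $n_0=\Theta(K^{2/3}\sigma^2\log K)$) would actually produce a $\log K$ final bound.
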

\begin{proof}

 \paragraph{Upper bound}

To prove the desired bound we consider the following algorithm,
\begin{enumerate}
    \item Estimate the mean reward of action $a_0$ up to $0 < \alpha \leq 1$ accuracy. We will choose an appropriate value of $\alpha$ at the end of the proof. 
    \item Use the estimator from 1) to identify a candidate set of $2\alpha K$ functions that agree with the observed data.  
\end{enumerate}

Lemma~\ref{lemma::gaussian_single_concentration} implies step 1. can be achieved with an error probability of at most $\delta_0$ by constructing an empirical mean estimator $\hat{\mu}_0$ of the mean reward of action $a_0$ using at most  $\frac{2\sigma^2\log(2/\delta_0)}{\alpha^2}$ samples. 

In step $2$ we define a candidate set of models $\mathcal{I} = \{ i \text{ s.t. } |\hat{\mu}_0 - f_i(a_0)| \leq \alpha \}$ that agree with the estimator $\hat{\mu}_0$. The size of $\tilde{I}$ satisfies  $|\tilde{I}| \leq  2\alpha K$. 

We will analyze two cases:
\begin{enumerate}
    \item $\alpha <1/K$ so that $| \mathcal{I}| \leq 2\alpha K < 2$.
    \item $\alpha \geq 1/K$. 
\end{enumerate}

\textbf{Case 1}: when $\alpha < 1/K$ and $2\alpha K < 2$ the set $\mathcal{I}$ satisfies $| \mathcal{I}| = 1$ and therefore with probability at least $1-\delta_0$ the algorithm can identify the optimal action. In this case it is sufficient to set $\delta_0 = 1/4$ and $\alpha = \frac{1}{2K}$ and the number of queries is upper bounded by $\max(8 \log(8) \sigma^2 K^2,1)$.

\textbf{Case 2}: when $\alpha \geq 1/K$ the algorithm then estimates each action in $\mathcal{I} $ up to $1/4$-accuracy ensuring an error probability per action of at most $\delta$. Lemma~\ref{lemma::gaussian_single_concentration} implies this can be done by using $32 \sigma^2 \log(2/\delta)$ samples for each of the actions in $\mathcal{I}$.  

The union bound implies that in this case the action with the greatest empirical mean reward is guaranteed to be the best action with an error probability of error at most $\delta_0 + 2\alpha K \delta$. 

Setting $\delta_0 = 1/8$ and $\delta = \frac{1}{16\alpha K}$ we conclude the algorithm can achieve an error probability of at most $1/4$ with a total number of queries upper bounded by,
\begin{align*}
    \frac{2\sigma^2\log(2/\delta_0)}{\alpha^2} + 2\alpha K \cdot 32 \sigma^2 \log(2/\delta) &=  \frac{2\sigma^2\log(16)}{\alpha^2} + 2\alpha K \cdot 32 \sigma^2 \log(32\alpha K) \\
    &\leq \frac{2\sigma^2\log(16)}{\alpha^2} + 2\alpha K \cdot 32 \sigma^2 \log(32 K).
\end{align*}
If the algorithm uses $\alpha = \left(\frac{\log(16)}{32 K\log(32K)}\right)^{1/3}$ its query complexity can be upper bounded by,
\begin{equation*}
\max\left(2 \cdot \frac{\log^{1/3}(16)}{(32)^{1/3} }  \log^{2/3}(32K)  K^{2/3}\sigma^2, 1\right)
\end{equation*}
Up to $\mathcal{O}(\cdot)$ notation Case 2's query complexity is smaller than the query upper bound from Case 1.  We conclude there exists a universal constant $c > 0$ such that, 
\begin{equation*}
    \depth_{\epsilon,1/4}^\sigma(\F) \leq \begin{cases}
                    1 &\text{if } \sigma^2 \leq \frac{c}{\log^{2/3}(K) K^{2/3}}\\
                    c \log^{2/3}(K) K^{2/3}\sigma^2 &\text{o.w.}  
                \end{cases}
\end{equation*}

We conclude the proof by noting $\depth_{\epsilon,1/4}^\sigma(\F)$ must be at least $1$ (because otherwise the probability of error would be at least $1/2$) so that when $\sigma^2 \leq \frac{c}{\log^{2/3}(K) K^{2/3}}$ we conclude $\depth_{\epsilon,1/4}^\sigma(\F) = 1$.

\end{proof}

\begin{lemma}\label{lemma::lower_bound_depth_connection_hann}
For any $K \in \mathbb{N}$ satisfying $K \geq 2$ the function class $\F$ with action space $\A = \{ a_i \}_{i=0}^K
$ defined as $\F = \{ f_i \}_{i=1}^K$ with,
\begin{equation*}
    f_i(a) = \begin{cases}
            \frac{i}{4 K} &\text{ if } a  = a_0\\
            1/2&\text{if } a \neq i\\
            1 &\text{o.w.}
    \end{cases}
\end{equation*}
satisfies that for any $\epsilon \in [0,1/2)$,
\begin{equation*}
\depth_{\epsilon, 1/4}^\sigma(\F) \geq \max\left( \frac{1}{3} K^{2/3} \sigma^2, 1\right).
\end{equation*}

\end{lemma}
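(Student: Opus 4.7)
}
The plan is to argue by contradiction, using a pairwise KL--Pinsker lower bound on distinguishing the hypotheses $f_i$, and then summing over a neighborhood of hypotheses around a fixed index in a way that extracts the characteristic $K^{2/3}$ trade-off.

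\emph{Setup.} Assume for contradiction that some bandit algorithm $\Alg$ satisfies $n := m^\sigma_\Alg(\epsilon, 1/4) < \tfrac{1}{3} K^{2/3}\sigma^2$. For any $\epsilon \in [0,1/2)$ and any $f_i \in \F$, every non-optimal action has suboptimality exactly $1/2 > \epsilon$, so being $\epsilon$-optimal is equivalent to identifying the unique optimal arm $a_i$. Hence $\Alg$ must output $a_i$ under truth $f_i$ with probability at least $3/4$. By Pinsker applied to the event $\{\hat a = a_i\}$ (used in both directions, as in Proposition~\ref{proposition::deterministic_vs_noisy_depth_gap}), $\KL(P_{f_i}\|P_{f_j}) \geq 1/2$ for every $i\neq j\in[K]$. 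Since $f_i$ and $f_j$ differ only at the three actions $a_0, a_i, a_j$, the Gaussian divergence decomposition gives
\[
\frac{E_{f_i}[T_0](i-j)^2}{16K^2} \;+\; \frac{E_{f_i}[T_i]}{4} \;+\; \frac{E_{f_i}[T_j]}{4} \;\geq\; \sigma^2 \qquad (\dagger)
\]
for all $i\neq j$, where $E_{f_i}[T_k]$ is the expected number of queries to $a_k$ under $f_i$.

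\emph{Case split.} If $E_{f_i}[T_0] \geq \tfrac{1}{3} K^{2/3}\sigma^2$ for some $i$, the trivial bound $n \geq E_{f_i}[T_0]$ already contradicts our assumption. So I may assume $E_{f_i}[T_0] < \tfrac{1}{3}K^{2/3}\sigma^2$ for every $i$. Now fix $\Delta \eqdef \lceil K^{2/3}\rceil$ and an interior index $i$, and sum $(\dagger)$ over the $2\Delta$ choices $j \in \{i\pm 1, \dots, i\pm\Delta\}$. Using $\sum_{\delta=1}^{\Delta}\delta^2 \leq \Delta^3/3$, $E_{f_i}[T_0] \leq n$, and the budget constraint $\sum_{k}E_{f_i}[T_k] = n$, the sum rearranges to
\[
\frac{n\Delta^2}{24K^2} \;+\; \frac{E_{f_i}[T_i]}{2} \;+\; \frac{n}{4\Delta} \;\geq\; \sigma^2,
\]
and substituting $\Delta = K^{2/3}$ together with the hypothesis $n < \tfrac{1}{3}K^{2/3}\sigma^2$ forces $E_{f_i}[T_i] \geq c_0\sigma^2$ for an absolute constant $c_0 > 0$, uniformly in $i$ (boundary indices handled by a similar argument on a one-sided neighborhood).

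\emph{Contradiction.} The remaining task is to convert the pointwise bound $E_{f_i}[T_i] \geq c_0\sigma^2$ for every $i$ into a contradiction. I plan to pick a reference index $i^*$ with $E_{f_{i^*}}[T_0]$ small (which exists by averaging, since $\sum_i E_{f_i}[T_0]/K \leq n$ and we are in the ``Case $B$'' regime), and apply the data-processing bound $|E_{f_i}[T_k]-E_{f_{i^*}}[T_k]|\leq n\sqrt{\KL(P_{f_{i^*}}\|P_{f_i})/2}$ to the indicator $T_i$ (which is bounded by $n$). For indices $i$ in a sub-window around $i^*$ whose size is $\Omega(K^{2/3})$, $(\dagger)$ together with the case-split assumption bounds the pairwise KL to $i^*$ by a small constant, so $E_{f_{i^*}}[T_i] \geq c_0\sigma^2/2$ for each such $i$. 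Summing this over the $\Omega(K^{2/3})$ indices contradicts the budget $\sum_i E_{f_{i^*}}[T_i] \leq n < \tfrac{1}{3}K^{2/3}\sigma^2$, finishing the proof.

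\emph{Main obstacle.} The delicate step is the final contradiction: the naive bound $\sum_i E_{f_i}[T_i] \leq Kn$ is too loose by a factor of $K^{1/3}$, and the change-of-measure inequality becomes vacuous once pairwise KLs are of order $1$. The resolution is to restrict attention to a sub-cluster of $\Omega(K^{2/3})$ indices that are simultaneously close in KL to a reference index $i^*$ (a property guaranteed only in the regime where $E_{f_{i^*}}[T_0]$ is moderately small, which is precisely where Case $A$ fails). Aligning the constants so that the lower bound comes out to exactly $\tfrac{1}{3}K^{2/3}\sigma^2$ will require a careful selection of $\Delta$ and $i^*$ and tight accounting of the $\sum_{\delta=1}^\Delta \delta^2$ contribution in $(\dagger)$.
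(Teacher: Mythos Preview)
Your setup and the derivation of $(\dagger)$ are correct, and Step~2 does yield $E_{f_i}[T_i]\ge c_0\sigma^2$ for every $i$. The gap is in the contradiction step. The change-of-measure bound $|E_{f_i}[T_i]-E_{f_{i^*}}[T_i]|\le n\sqrt{\KL(\mathbb{P}_{f_{i^*}}\!\parallel\!\mathbb{P}_{f_i})/2}$ is only useful if the pairwise $\KL$ is tiny, but this $\KL$ always contains the term $E_{f_{i^*}}[T_{i^*}]/(8\sigma^2)$. By your own Step~2 this term is at least $c_0/8$ for \emph{every} choice of $i^*$, and in fact it can be as large as $n/(8\sigma^2)\approx K^{2/3}/24$ (think of an algorithm that, under $f_{i^*}$, identifies $a_{i^*}$ early and then queries it for the remaining budget). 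Either way the bound you get is of order $n$ and hence vacuous. Restricting to a sub-window around $i^*$ does not help, because the offending term is independent of $|i-i^*|$. Note also that $(\dagger)$ is a \emph{lower} bound on the $\KL$ and cannot supply the upper bound you need; the sentence ``$(\dagger)$ together with the case-split assumption bounds the pairwise $\KL$ to $i^*$ by a small constant'' has no valid reading.

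The paper circumvents this precisely by \emph{not} centering the argument at an $f_{i^*}$ inside the class. It introduces an external reference problem $\bar f_K$ with $\bar f_K(a_0)=1/4$ and $\bar f_K(a_j)=1/2$ for all $j\ge 1$, which has no distinguished optimal arm. Consequently $\KL(\mathbb{P}_{\bar f_K}\!\parallel\!\mathbb{P}_{f_i})$ has only two contributions, from $a_0$ and from $a_i$: the first is controlled by restricting $i$ to a window of width about $\sigma K/\sqrt{L}$ (where $L=E_{\bar f_K}[T_0]$), and the second is made small for some $\tilde i$ in that window by pigeonhole on $\sum_i E_{\bar f_K}[T_i]\le n$. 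Pinsker is then applied to the event $\{\hat a=a_{\tilde i}\}$ (not to the count $T_{\tilde i}$), giving $\mathbb{P}_{f_{\tilde i}}(\hat a=a_{\tilde i})<3/4$ directly. The missing idea in your plan is exactly this external reference measure, which eliminates the $E[T_{i^*}]$ term that sinks every in-class change of measure.
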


\begin{proof}
 \paragraph{Lower bound} In our proof we will use the helper function $\bar{f}_K$ defined as,
\begin{equation*}
\bar{f}_K(a) = \begin{cases}
    \frac{1}{4} &\text{if } a = a_0\\
    1/2 &\text{o.w.}
\end{cases}
\end{equation*}

Consider any algorithm $\Alg$ interacting with problems in $\F$ and with $\bar{f}_K$ over $n$ rounds. We will show in case $m_\Alg^\sigma(\epsilon, 1/4) \leq n$, then $n$ must be lower bounded by $\max\left( \frac{1}{3} K^{2/3} \sigma^2, 1\right)$. 

As we have argued in the proof of Lemma~\ref{lemma::upper_bound_depth_connection_hann} it is clear that any algorithm with an error probability of at most $1/4$ must make at least one single query, thus if $m_{\Alg}^\sigma(\epsilon, 1/4) \leq n$ then $n \geq 1$. To complete our result we want to show that $n \geq \frac{1}{3}K^{2/3}\sigma^2$ as well. We dedicate the rest of the argument to prove this lower bound.

Our argument is based on analyzing the hypothetical interaction between $\Alg$ and $\bar{f}_K$ and using it to derive properties of the interactions between $\Alg$ and functions in $\F$. Throughout this exploration we assume that $\Alg$ satisfies $m_\Alg^\sigma(\epsilon, 1/4) \leq n$ so that,
\begin{equation}\label{equation::error_prob_lower_bound_3fourths}
    \mathbb{P}_{\Alg, f_i}( k(n) = i )  \geq 3/4. 
\end{equation}
Where $k(n)$ is the random variable equal to $\Alg$'s guess for the optimal action at time $n$. Let's start by considering the expected number of queries of action $a_0$ by algorithm $\Alg$ when interacting with problem $\bar{f}_K$:
$$L = \mathbb{E}_{\Alg, \bar{f}_K }\left[ T_0(n) \right],$$  where $T_0(n)$ denotes the (random) number of queries of action $a_0$ by $\Alg$ up to time $n$ and the expectation is taken w.r.t the randomness of $\Alg$ and the measurement noise during its interaction with $\bar{f}_K$, Define $\mathcal{I} = \{ i \in [K] : | \bar{f}_K(a_0)  - f_i(a_0)| \leq \frac{\sigma}{4\sqrt{L}} \} $ be the set of indices of functions in $\F$ having $a_0$ function values close to $\bar{f}_K(a_0)= 1/4$. The size of $\mathcal{I}$ satisfies: %
\begin{equation*}
    |\mathcal{I} | \geq \frac{\sigma}{4\sqrt{L}}\cdot K.
\end{equation*}
 Consider the tuples of expectations and probabilities of the random variables $\T_i(n)$ and $\mathbf{1}(k(n)=i)$ for $i \in \mathcal{I}$ induced by $\Alg$'s interaction with $\bar{f}_K$, $$\{ ( \mathbb{E}_{\Alg, \bar{f}_K}\left[  T_i(n)  \right],   \mathbb{P}_{\Alg, \bar{f}_K}\left( k(n) = i \right))\}_{i\in \mathcal{I}} .$$ Lemma~\ref{lemma::supporting_lemma_min_sequences} implies there is an index $\tilde{i}$ such that,
\begin{align}
    \mathbb{E}_{\Alg, \bar{f}_K}\left[  T_{\tilde{i}}(n)  \right] &\leq \frac{3n}{|\mathcal{I}|} \leq \frac{12n\sqrt{L}}{K\sigma} \label{equation::expectation_tilde_upper}\\
     \mathbb{P}_{\Alg, \bar{f}_K}\left( k(n) = \tilde{i} \right) &\leq  \frac{3}{|\mathcal{I}|}\label{equation::probability_upper_bound}
\end{align}
Let's now compute the $\mathrm{KL}$ between $\mathbb{P}_{\Alg, \bar{f}_{K}}$ and $ \mathbb{P}_{\Alg, f_{\tilde{i}}}$. This quantity satisfies the following inequalities,
\begin{align}
    \mathrm{KL}\left(  \mathbb{P}_{\Alg, \bar{f}_{K}}  \parallel \mathbb{P}_{\Alg, f_{\tilde{i}}}  \right) &= \mathbb{E}_{\Alg, \bar{f}_K}\left[ T_0(n) \right] \left(f_{\tilde{i}}(a_0) - \bar{f}_K(a_0)\right)^2 \cdot \frac{1}{\sigma^2}+ \mathbb{E}_{\Alg, \bar{f}_K}\left[ T_{\tilde{i}}(n) \right] \cdot \frac{1}{4\sigma^2} \notag\\
     &\stackrel{(i)}{\leq}  \mathbb{E}_{\Alg, \bar{f}_K}\left[ T_0(n) \right] \left(\frac{\sigma}{4\sqrt{L}}\right)^2 \cdot \frac{1}{\sigma^2}+ \mathbb{E}_{\Alg, \bar{f}_K}\left[ T_{\tilde{i}}(n) \right] \cdot \frac{1}{4\sigma^2} \notag \\
     &\stackrel{(ii)}{\leq} \frac{1}{16} + \frac{6n\sqrt{L}}{K\sigma^3} \label{equation::kl_upper_bound_hann_support}
\end{align}
where inequality $(i)$ holds because $\tilde{i} \in \mathcal{I}$ and therefore $\left | f_{\tilde{i}}(a_0) - \bar{f}_K(a_0)\right |  \leq \frac{\sigma}{4\sqrt{L}}$ and $(ii)$ because $\mathbb{E}_{\Alg, \bar{f}_K}\left[ T_0(n) \right] = L$ and $\mathbb{E}_{\Alg, \bar{f}_K}\left[  T_{\tilde{i}}(n)  \right] \leq \frac{12n\sqrt{L}}{K\sigma} $ as implied by equation~\ref{equation::expectation_tilde_upper}.  In order to obtain a lower bound for $n$ we consider two cases. 
\begin{enumerate}
    \item Case 1. $\frac{\sigma}{4\sqrt{L}} \leq \frac{1}{6K}$.
        \item Case 2. $\frac{\sigma}{4\sqrt{L}} > \frac{1}{6K}$.
\end{enumerate}
In Case 1 we have $\sqrt{L} \geq \frac{3}{2}\sigma K$ and therefore  $n \geq L \geq \frac{9}{4} \sigma^2 K^2$. In Case 2 we have $|\mathcal{I} | \geq 6$ and therefore equation~\ref{equation::probability_upper_bound} implies $$ \mathbb{P}_{\Alg, \bar{f}_K}\left( k(n) = \tilde{i} \right) \leq \frac{3}{|\mathcal{I}|} \leq \frac{1}{2}.$$ 
Equation~\ref{equation::error_prob_lower_bound_3fourths} implies that $ \mathbb{P}_{\Alg, f_{\tilde{i}}}\left( k(n) = \tilde{i} \right) \geq \frac{3}{4}$. Combining the last two inequalities,
\begin{equation*}
 \frac{1}{4}    \leq     \left| \mathbb{P}_{\Alg, f_{\tilde{i}}}\left( k(n) = \tilde{i} \right) - \mathbb{P}_{\Alg, \bar{f}_K}\left( k(n) = \tilde{i} \right)\right|. 
\end{equation*}
The probability gap lower bound above can be combined with Pinsker inequality and inequality~\ref{equation::kl_upper_bound_hann_support} to obtain,
\begin{align*}
    \frac{1}{8} &\leq 2\left(    \mathbb{P}_{\Alg, \bar{f}_K}\left( k(n) = \tilde{i} \right)  - \mathbb{P}_{\Alg, f_{\tilde{i}}}\left( k(n) = \tilde{i} \right)   \right)^2 \\
    &\leq   \mathrm{KL}\left(  \mathbb{P}_{\Alg, \bar{f}_{K}}  \parallel \mathbb{P}_{\Alg, f_{\tilde{i}}}  \right)   \\
    &\leq \frac{1}{16} + \frac{6n\sqrt{L}}{K\sigma^3} \\
    &\leq \frac{1}{16} + \frac{6n^{3/2}}{K\sigma^3}
\end{align*}
And therefore,
\begin{equation*}
    n \geq \frac{1}{3} K^{2/3} \sigma^2.
\end{equation*}
This finalizes the lower bound showing that,
\begin{equation*}
    \depth^\sigma_{\epsilon, 1/4}(\F) \geq \frac{1}{3} K^{2/3} \sigma^2.
\end{equation*}

\end{proof}

\theoremnoisydepthann*

\begin{proof}
In order to prove Theorem~\ref{theorem::main_theorem_result_noisy_separation} 
we use function classes $\F$ and action spaces $\A$ defined by $K \in \mathbb{N}$ and $\epsilon' \in [0,1/2)$ such that $K \geq 2$, $\epsilon' > \epsilon$ and where
\begin{equation*}
\A = \{ a_i \}_{i=0}^K
\end{equation*}
 All elements in $\F$ are indexed by $i \in [K]$ such that,
\begin{equation*}
    f_i(a) = \begin{cases}
            \frac{i}{4 K} &\text{ if } a  = a_0\\
            1/2&\text{if } a \neq i\\
            1 &\text{o.w.}
    \end{cases}
\end{equation*}
We assume the noise model satisfies $\xi \sim \mathcal{N}(0, \sigma^2)$. We start by showing that $\gamma_{F, \epsilon } = \frac{1}{K} $ for all $\epsilon < 1/2$. By definition,
\begin{equation}
    \begin{aligned}
\gamma_{\mathcal{F}_K,\epsilon}=\sup_{p \in \Delta(\A)} \inf_{f_i \in \mathcal{F}_K} \mathbb{P}_{a \sim p}\left(f_i(a_i)-f_i(a)\leq \epsilon\right),
    \end{aligned}
\end{equation}
where $\Delta(\A)$ is the set of all distributions on $\A$. This is because when $0 \leq \epsilon < 1/2$ action $a_i$ is the unique $\epsilon$-optimal action for problem $f_i$. Any distribution $p$ over $\A$ satisfies $$\min_{i \in [K]} p(a_i) \leq \frac{1}{K}.$$
Thus it follows that,
\begin{equation*}
    \mathbb{P}_{a \sim p}\left(f_i(a_i)-f_i(a)\leq \epsilon\right) = \min_{i \in [K]} p(a_i) \leq \frac{1}{K}.
\end{equation*}
for any $p \in \Delta(\A)$. And therefore, 
\begin{equation*}
    \gamma_{\F, \epsilon} \leq \frac{1}{K}. 
\end{equation*}
Finally, the uniform distribution $p = \mathrm{Uniform}(a_1, \cdots, a_K ) $ satisfies $$\inf_{f_i \in \F}  \mathbb{P}_{a \sim p}\left(f_i(a_i)-f_i(a)\leq \epsilon\right) = \frac{1}{K},$$ 
showing that $\gamma_{\F,\epsilon} = 1/K$. The remainder of this Theorem follows from the results of Lemmas~\ref{lemma::upper_bound_depth_connection_hann} and~\ref{lemma::lower_bound_depth_connection_hann}. 

\end{proof}

\subsection{Proof of Theorem~\ref{thm:noisy_depth_is_det_depth_for_low_noise}}\label{appendix::gap_proof}

\begin{definition}[\bf Gap of $\F$]\label{def:gap_F}
Let $\F \subseteq [0,1]^\A$.  Let $\epsilon,\delta \in [0,1]$ be a finite function class over a finite action space, and $\Alg$ be any (possibly randomized) algorithm. Let $\Gamma_n(\Alg)$ denote all length $n$ partial action-rewards trajectories $((a_1,r_1),...,(a_n,r_n))$ in the support of the trajectory distribution of $\Alg$. If there are no such trajectories (for example when $n$ is larger than the largest trajectory in the support of $\Alg$) we define $\Gamma_n(\Alg) = \emptyset$. For any $n$ we define the $n$-degree {\it Gap} of $\Alg$ and $\F$ to be:
$$
\mathrm{Gap}_n(\Alg, \F) = \inf_{\tau_n \in \Gamma_n(\Alg),~ r \in \{f(a_n)  | f \in \F(\tau_{n-1} )\} \text{ s.t. } r \neq r_n}  ~|r-r_n|. 
$$
Where $\tau_i = ((a_1,r_1),...,(a_i,r_i))$ for all $i \in [n]$ and $\tau_0 = \emptyset$. When $\Gamma_n(\Alg) = \emptyset$ we define $\mathrm{Gap}_n(\Alg, \F) = \infty$.  We define the {\it Gap} of $\Alg$ and $\F$ to be:
$$
\mathrm{Gap}(\Alg, \F) = \min_{n \in \mathbb{N}}  \mathrm{Gap}_n(\Alg, \F).
$$
Finally, we define the {\it Gap} of $\F$ to be
\begin{equation*}
  \mathrm{Gap}_{\epsilon, \delta}( \F) =   \max_{ \Alg \in \mathbb{A}_{\epsilon, \delta}} \mathrm{Gap}(\Alg, \F) 
\end{equation*}
where $\mathbb{A}_{\epsilon,\delta}$ denotes the set of randomized algorithms with query complexity $\depth^0_{\epsilon, \delta}(\F)$.

\end{definition}

\theoremgapnoisynoiseless*

\begin{proof} We prove this result by constructing an algorithm $\Alg$ that satisfies $m_\Alg^\sigma(\epsilon, \delta)\leq \depth^0_{\epsilon, \delta'}(\F)$.

Lemma~\ref{lemma::gaussian_single_concentration} implies that when noise variance satisfies $\sigma^2 < \frac{\mathrm{Gap}^2_{\epsilon, \delta'}( \F) }{4\log(2\depth^0_{\epsilon, \delta'}(\F)/(\delta-\delta'))}$ then a single noisy query of action $a \in \A$ for function $f \in \F$ produces a value $\hat r$ satisfying $|\hat r - f(a)| > \frac{\mathrm{Gap}_{\epsilon, \delta'}( \F) }{2}$ with probability at most $\frac{\delta-\delta'}{\depth^0_{\epsilon, \delta'}(\F)}$. 

Pick an algorithm $\widetilde{\Alg}$ realizing $\depth^0_{\epsilon, \delta'}(\F)$ and  $\mathrm{Gap}_{\epsilon, \delta'}( \F) $, i.e. such that $$\widetilde{\Alg} = \argmax_{ \Alg \in \mathbb{A}_{\epsilon, \delta'}} \mathrm{Gap}(\Alg, \F) $$ where $\mathbb{A}_{\epsilon, \delta'}$ is the set of randomized algorithms with query complexity $\depth^0_{\epsilon, \delta'}(\F)$.  We will now construct a new algorithm $\Alg$ based on $\widetilde{\Alg}$. The difference will be that $\Alg$ will map all received noisy rewards to a candidate noiseless reward. The resulting candidate noiseless reward is used to build a reconstructed noiseless action-reward history that is fed into $\widetilde{\Alg}$.

Algorithm $\Alg$ plays actions $a_1, \cdots, a_{i}$, observes rewards $\hat r_1, \cdots, \hat r_{i}$ and adjusts them to fit noiseless values $ r_1, \cdots,  r_{i}$ to build a partial history  $\tau_{i} = ((a_1, r_1), \cdots, (a_{i}, r_{i}))$, where we define $\tau_0 = \emptyset$. We will now explain how $\Alg$ constructs $r_1, \cdots, r_i$ from $((a_1, \hat{r}_1), \cdots, (a_i, \hat{r}_i))$ below. This partial trajectory is fed into $\widetilde{\Alg}$. If $\widetilde{\Alg}$ is ready to suggest an $\epsilon$-optimal policy, $\Alg$ outputs it as its own guess. In case $\widetilde{\Alg}$ decides to proceed exploring it will propose a distribution over actions $ \widetilde{\Alg}(\tau_i)$. Algorithm $\Alg$ will play $a_{i+1} \sim \widetilde{\Alg}(\tau_i)$ and observe a noisy reward $\hat{r}_{i+1}$.

It remains to define how $\Alg$ after constructing a noiseless trajectory $((a_1, r_1), \cdots, (a_{i-1}, r_{i-1}))$ will map a noisy observation $\hat{r}_{i}$ when playing action $a_{i}$ to a candidate noiseless reward value $r_{i}$. This is done by solving the nearest neighbors problem,
$$r_{i} = \argmin_{r 
\in \{f(a_{i})  | f \in \F(\tau_{i-1} )\}} |\hat r_{i} - r|.$$

When $\sigma^2 <  \frac{\mathrm{Gap}^2_{\epsilon, \delta'}( \F) }{4\log(2\depth^0_{\epsilon, \delta'}(\F)/(\delta-\delta'))}$ it follows that when $\Alg$ interacts with function $f \in \F$, 
\begin{equation}\label{equation::upper_bound_error_alg}
    \mathbb{P}_{\Alg, f}\left( r_i = f( a_i ) ~| ~r_j = f(a_j) ~~\forall j \leq {i-1}  \text{ and } \Alg \text{ plays } a_i = a\right) \geq 1 -  \frac{\delta - \delta'}{\depth^0_{\epsilon,\delta'}(\F)}.
\end{equation}
This is because when $r_j = f(a_j)$ for all $j \leq i-1$, then the partial trajectory $((a_1, r_1), \cdots, (a_{i-1}, r_{i-1}))$ is in the support of $\widetilde{\Alg}$. Integrating equation~\ref{equation::upper_bound_error_alg} over the possibly random choice of $a_i$,
\begin{equation*}
    \mathbb{P}_{\Alg, f}\left( r_i = f( a_i ) ~| ~r_j = f(a_j) ~~\forall j \leq {i-1}  \right) \geq 1- \frac{\delta - \delta'}{\depth^0_{\epsilon,\delta'}(\F)} 
\end{equation*}

Let's consider the event $\mathcal{E}_{\mathrm{bound}}$ when $\Alg$ outputs a guess for an $\epsilon$-optimal policy after at most $\depth_{\epsilon, \delta'}(\F)$ queries. Let also define $\mathcal{E}_{\mathrm{nn}}$ to be the event that all nearest neighbor estimators $r_i$ during $\Alg$'s execution match the correct mean reward.  Additionally define $\mathcal{E}_{\mathrm{correct}} $ as the event when $\Alg$ outputs a correct $\epsilon$-optimal policy. A change of measure argument implies that, for any $f$,
\begin{align*}
    \mathbb{P}_{\Alg, f}(\mathcal{E}_{\mathrm{correct}} \cap \mathcal{E}_{\mathrm{bound}}\cap \mathcal{E}_{\mathrm{nn}}) &\geq     \mathbb{P}_{\widetilde{\Alg}, f}(\mathcal{E}_{\mathrm{correct}} \cap \mathcal{E}_{\mathrm{bound}})\cdot \left( 1- \frac{\delta - \delta'}{\depth^0_{\epsilon,\delta'}(\F)}\right)^{\depth^0_{\epsilon,\delta'}(\F)} \\
    &\stackrel{(i)}{\geq} \mathbb{P}_{\widetilde{\Alg}, f}(\mathcal{E}_{\mathrm{correct}}\cap \mathcal{E}_{\mathrm{bound}})\cdot \left( 1- (\delta - \delta')\right)
\end{align*}
inequality $(i)$ holds because for $a \in [0,1]$ and $n \in \mathbb{N}$ the bernoulli inequality implies $(1-\frac{a}{n})^n \geq 1-a$.  Finally, $\mathbb{P}_{\widetilde{\Alg}, f}\left( \mathcal{E}_{\mathrm{correct}} \cap \mathcal{E}_{\mathrm{bound}} \right) \geq 1-\delta'$ since $\widetilde{\Alg}$ realizes $\depth^0_{\epsilon, \delta'}(\F)$. Combining these inequalities we obtain,
\begin{align*}
    \mathbb{P}_{\Alg, f}(\mathcal{E}_{\mathrm{correct}} \cap \mathcal{E}_{\mathrm{bound}}\cap \mathcal{E}_{\mathrm{nn}}) &\geq    (1-\delta')\cdot \left( 1- (\delta - \delta')\right)\\
    &= 1-\delta +   \delta' \delta - (\delta') ^2\\
    &> 1-\delta.
\end{align*}
Since $\mathbb{P}_{\Alg, f}(\mathcal{E}_{\mathrm{correct}} \cap \mathcal{E}_{\mathrm{bound}} ) \geq \mathbb{P}_{\Alg, f}(\mathcal{E}_{\mathrm{correct}} \cap \mathcal{E}_{\mathrm{bound}}\cap \mathcal{E}_{\mathrm{nn}})$ we conclude $\Alg$'s probability of error when interacting with any function $f$ for at most $\depth^0_{\epsilon, \delta'}(\F)$ rounds is upper bounded by $\delta$. This in turn implies that,
\begin{equation*}
    \depth_{\epsilon, \delta}^{\sigma}( \F ) \leq  \depth^0_{\epsilon, \delta'}(\F).
\end{equation*}

\end{proof}

\section{Missing discussion and proofs of \Cref{sec:sep}}

\subsection{Regret vs. QC: noise-free case}\label{app::noise_free_discussion}

In this section we prove there is a a separation between regret and $\epsilon$-optimality in the noise-free setting. In noise-free problems with finite action spaces, it is possible to achieve finite regret. Any algorithm that is guaranteed to find an optimal action in finitely many queries can find the optimal action and then incur zero regret in any subsequent time-step by playing it. In Theorem~\ref{theorem::main_separation_noiseless} we show there are problems where the objective of finding an optimal action cannot be achieved without paying a regret scaling linearly with the query complexity.

\begin{restatable}{theorem}{theoremmainseparationnoiseless}\label{theorem::main_separation_noiseless}
Let $d \in \mathbb{N}$, $\epsilon \in [0,1)$ and $\gamma \in (0,1)$ such that $\epsilon + \gamma <1$. There is a function class $\F\subseteq [0,1]^\A$  such that $\depth_{\epsilon, 0}^0(\F) =d$, and such that for any $T \ge d$ and
any algorithm $\Alg$ such that $m^0_\Alg(\epsilon, 0) \leq T$ satisfies $$\max_{f \in \F} \mathrm{Regret}_{\Alg}(T,f) \geq d.$$
Moreover for any $T$ there is an algorithm $\Alg'$ such that  $\max_{f \in \F}\mathrm{Regret}_{\Alg'}(T, f) \leq (\epsilon + \gamma)T$.
\end{restatable}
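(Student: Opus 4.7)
The plan is a direct construction decoupling ``finding the $\epsilon$-optimal arm'' from ``low-regret play'': identifying the optimum can only be done via a sequence of $d$ high-regret queries, while a ``safe'' arm of slightly suboptimal value pays small regret per round but is not itself $\epsilon$-optimal.

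\textbf{Construction.} Take $\A = \{s, c_1, \dotsc, c_{d+1}\}$ and $\F = \{f_1, \dotsc, f_{d+1}\}$ where, for each $i \in [d+1]$, $f_i(c_i)=1$, $f_i(c_j)=0$ for $j\neq i$, and $f_i(s)=1-\epsilon-\gamma$. Since $\epsilon+\gamma<1$, all values lie in $[0,1]$, and for each $f_i$ the unique $\epsilon$-optimal action is $c_i$ ($s$ is $(\epsilon+\gamma)$-suboptimal, hence not $\epsilon$-optimal; the other $c_j$'s are $1$-suboptimal). The low-regret algorithm $\Alg'$ just plays $s$ at every round, incurring per-round regret exactly $\epsilon+\gamma$ on every $f\in\F$, so $\mathrm{Regret}_{\Alg'}(T,f)=(\epsilon+\gamma)T$.

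\textbf{Query complexity equals $d$.} For the upper bound, query $c_1,c_2,\dotsc$ in order, stopping on the first query returning $1$ (and outputting that arm); otherwise after $d$ zero-responses output $c_{d+1}$. This uses at most $d$ queries. For the lower bound, $s$-queries are uninformative since $f_i(s)$ does not depend on $i$, while after strictly fewer than $d$ distinct $c$-queries at least two $f_i$'s remain consistent with the observed history, ruling out a correct output on both.

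\textbf{Regret lower bound.} Fix any $\Alg$ with $m^0_\Alg(\epsilon,0)\leq T$. Since $\delta=0$, every realization of $\Alg$'s randomness must produce a correct output, so it suffices to treat deterministic $\Alg$. Consider the \emph{null trajectory}: the $T$-round path $\Alg$ traces assuming every $c$-query returns $0$ and every $s$-query returns $1-\epsilon-\gamma$. Let $C_T$ be the set of distinct $c_j$ actions queried along this path. If $|C_T|<d$, pick two indices $k\neq k'$ outside $C_T$; the runs of $\Alg$ on $f_k$ and $f_{k'}$ coincide with the null trajectory and produce the same output, contradicting correctness on both. If $|C_T|=d$, let $k_0$ be the unique missing index; the run on $f_{k_0}$ equals the null trajectory, so $\Alg$ makes $d$ zero-reward $c$-queries and $\mathrm{Regret}(T,f_{k_0})\geq d$. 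If $|C_T|=d+1$, order the $c$-queries along the null trajectory as $c_{j_1},\dotsc,c_{j_{d+1}}$; against $f_{j_{d+1}}$ the run agrees with the null trajectory up to (but not including) the query of $c_{j_{d+1}}$, since every earlier query is either to $s$ or to some $c_j$ with $j\neq j_{d+1}$, and each such query returns identically on $f_{j_{d+1}}$ and on the null trajectory. Hence $\Alg$ makes at least $d$ queries of reward $0$ before any high-reward observation, giving $\mathrm{Regret}(T,f_{j_{d+1}})\geq d$. In every case $\max_{f\in\F}\mathrm{Regret}_\Alg(T,f)\geq d$.

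\textbf{Expected main obstacle.} The delicate step is the case $|C_T|=d+1$: one must verify that the real run of $\Alg$ against $f_{j_{d+1}}$ tracks the null-trajectory prefix long enough for the $d$ preceding zero-reward $c$-queries to actually materialize. This reduces to the coupling observation that every response received before the $c_{j_{d+1}}$ query is bit-for-bit identical under the two scenarios; once this coupling is established, the regret accounting is straightforward.
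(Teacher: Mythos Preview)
Your proof is correct, and it follows a genuinely simpler route than the paper's. The paper builds a binary-tree action space with $2^{d+1}-1$ nodes and $2^d$ functions indexed by root-to-leaf paths: the optimal leaf has reward $1$, the on-path interior nodes have reward $0$, and everything else (including the root) has reward $1-\Delta$. The query-complexity lower bound there uses a pigeonhole over the $2^{d-1}$ binary traces available after $d-1$ queries versus $2^d$ functions, and the regret lower bound passes through a ``reduced algorithm'' argument showing that along the all-zero trace the version space at most halves per step, forcing $d$ zero-reward queries.

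Your linear construction with $d+1$ candidate arms and a single safe arm achieves exactly the same conclusions with less machinery: the version space decrements by one per zero-returning $c$-query, and the null-trajectory case split on $|C_T|$ directly exhibits the function incurring $d$ zero-reward pulls. The coupling step you flag as the ``main obstacle'' (the $|C_T|=d+1$ case) is handled correctly. What the paper's tree buys is nothing additional for this theorem as stated; your argument is more elementary and equally tight. The high-level idea---a safe but not $\epsilon$-optimal arm, plus informative-but-costly arms that must be probed $d$ times---is shared, but your instantiation is cleaner.
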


The function class $\F$ used to prove Theorem~\ref{theorem::main_separation_noiseless} is based on a ``breadcrums trail'' construction where the action space can be identified with a binary tree of depth $d+1$. The function class is indexed by the $2^d$ leaf actions. The function identified by a leaf action $a_{\mathrm{leaf}}$ achieves an optimal value of $1$ at $a_{\mathrm{leaf}}$. The actions along the path from the root to $a_{\mathrm{leaf}}$ have rewards of $0$ except the root action that achieves a value of $1-\epsilon-\gamma$. All leaf actions different from $a_{\mathrm{leaf}}$ have a value of zero and every other action has a reward of $1-\epsilon-\gamma$. It is clear that the algorithm that always plays the root action achieves a regret of order $(\epsilon + \gamma)T$. We prove the lower bound by proving, first that $\depth_{\epsilon, 0}^0(\F) = d$ and by showing that any algorithm achieving $\depth_{\epsilon, 0}^0(\F)$ must have necessarily selected at least $d$ actions with zero rewards when interacting with one of the functions of the class thus incurring regret at least $d$ despite the existence of a low regret algorithm. %

In order to prove Theorem~\ref{theorem::main_separation_noiseless} we introduce a family of function classes parameterized by $d \in \mathbb{N}$ and $\Delta \in [0,1]$ such that $\Delta = \epsilon + \gamma$. We define the function class $\F_{\Delta}$ over action space  $ \A_{\mathrm{tree}}$ indexed by the nodes of a height $d+1$ binary tree defined here as having $d+1$ levels and $2^{d+1}-1$ nodes. 
\begin{wrapfigure}{l}{0.5\textwidth}
\vspace{-.3cm}
\begin{tikzpicture}[level distance=1.5cm,
  level 1/.style={sibling distance=3cm},
  level 2/.style={sibling distance=1.5cm}]
  \node {$a_{1,1}$}
    child {node {$a_{2,1}$}
      child {node {$a_{3,1}$}}
      child {node {$a_{3,2}$}}
    }
    child {node {$a_{2,2}$}
    child {node {$a_{3,3}$}}
      child {node {$a_{3,4}$}}
    };
\end{tikzpicture}
\caption{$\A_{\mathrm{tree}}$ action set when $d = 2$.} \label{fig::binary_tree}
\vspace{-.5cm}
\end{wrapfigure}
We call $a_{l, i}$ the $i$-th action of the $l$-th level of the tree. See Figure~\ref{fig::binary_tree} to see an example of a tree with depth $3$ (in this case $d=2$).

We define the function class $\F_{\Delta} $ as indexed by paths from the root node in $\A_{\mathrm{tree}}$ to a leaf (alternatively indexed by leaves). For any such path $\mathbf{p} = \{ a_{1,1}, a_{2, i_2}, \cdots, a_{d+1, i_{d+1}}\}$ the function $f^{(\mathbf{p})}$ equals,

\begin{equation}\label{equation::tree_func_class_definition}
    f^{(\mathbf{p})}(a) = \begin{cases}
    1 &\text{if } a = a_{d+1, i_{d+1}}\\
    0& \text{if } a = a_{d+1, j} \text{ and } j \neq i_{d+1}
\\ 
0 &\text{if } a \in \{ a_{2,1}, a_{2, i_2}, \cdots, a_{d, i_{d}}\}  \\
    1-\Delta &\text{o.w.}
    \end{cases}
\end{equation}
The function parameterized by path $\mathbf{p}$ achieves an optimal value of one at the leaf of path $\mathbf{p}$, a value of zero at any other leaf different from $a_{d+1, i_{d+1}}$, the endpoint of $\mathbf{p}$ and also a value of zero at any other action in $\mathbf{p}$ different from the endpoint and the root $a_{1,1}$. The function also achieves a ``large'' reward value of $1-\Delta$ at the root $a_{1,1}$ and every other action not listed above. When $\epsilon < \Delta$, the only $\epsilon$-optimal action in $f^{(\mathbf{p})}$ equals $a_{d+1, i_{d+1}}$. Our first partial result towards proving Theorem~\ref{theorem::main_separation_noiseless} is to show the query complexity of $\F_\Delta$ equals $d$:

\begin{restatable}{proposition}{propositiondepthdtree}\label{proposition::depth_d_tree}
Let $\epsilon, \Delta \in [0,1]$ such that $\epsilon < \Delta$ and define $\F_\Delta$ as the tree function class defined in equation~\ref{equation::tree_func_class_definition}. The query complexity of $\F_\Delta$ satisfies,
\begin{equation*}
    \depth^{0}_{\epsilon, 0}( \F_\Delta ) = d 
\end{equation*}  
\end{restatable}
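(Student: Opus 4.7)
The plan is to prove the two inequalities $\depth^0_{\epsilon,0}(\F_\Delta)\le d$ and $\depth^0_{\epsilon,0}(\F_\Delta)\ge d$ separately. Since $\epsilon<\Delta$, in each $f^{(\mathbf{p})}\in\F_\Delta$ the unique $\epsilon$-optimal action is exactly the leaf endpoint of $\mathbf{p}$, so the algorithm must identify this leaf exactly; equivalently, it must identify the path $\mathbf{p}$ itself. Because the map $\mathbf{p}\mapsto f^{(\mathbf{p})}$ is a bijection and there are $2^d$ leaves, the learner must distinguish $2^d$ hypotheses.

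For the upper bound, I would exhibit a deterministic ``descend the tree'' algorithm. The key observation is the \emph{sibling contrast} property: for any $f^{(\mathbf{p})}$, at each internal level $l\in\{2,\dots,d\}$, the unique node on $\mathbf{p}$ has value $0$ while its sibling has value $1-\Delta$, and at the leaf level the endpoint of $\mathbf{p}$ has value $1$ while its sibling leaf has value $0$. Starting at the root, for each $l=2,\dots,d$, query one of the two children of the level-$(l{-}1)$ node on $\mathbf{p}$ already identified; the observed value ($0$ vs.\ $1-\Delta$) reveals which sibling lies on $\mathbf{p}$. This uses $d-1$ queries and pins down $a_{d,i_d}$. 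A final $d$-th query at one of the two leaf children either returns $1$ (that leaf is optimal) or $0$ (the other leaf is optimal). Either way, the optimal action is identified, giving $\depth^0_{\epsilon,0}(\F_\Delta)\le d$.

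For the lower bound, I would combine a zero-failure reduction to a deterministic algorithm with a decision-tree pigeonhole argument. Let $\Alg$ be any (possibly randomized) algorithm with $m^0_\Alg(\epsilon,0)\le m$. Since $\delta=0$ and $\F_\Delta$ is finite, applying a union bound over $f\in\F_\Delta$ to the events ``$\Alg$ succeeds on $f$'' shows that with probability $1$ over the random seed, $\Alg$ simultaneously succeeds on every $f\in\F_\Delta$; hence there exists a seed $r$ whose induced deterministic algorithm $\Alg_r$ is a depth-$\le m$ decision tree correctly outputting the optimal leaf of every $f^{(\mathbf{p})}\in\F_\Delta$. Now at every internal node of this tree, the issued query takes at most two \emph{distinct} response values across $\F_\Delta$: root queries always return $1-\Delta$, internal-node queries return $0$ or $1-\Delta$, and leaf queries return $0$ or $1$. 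Thus the number of realized execution paths, and hence the number of distinct possible outputs of $\Alg_r$, is at most $2^m$. If $m<d$ then $2^m<2^d=|\F_\Delta|$, so two distinct functions would traverse the same path in the decision tree and receive the same output, which cannot simultaneously be their two distinct optimal leaves. Therefore $m\ge d$.

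The main obstacle is the lower bound, and within it, the delicate step is the reduction from randomized to deterministic algorithms in the zero-error regime (and the careful branching count: even though the codomain contains three values $\{0,1-\Delta,1\}$, only two of them are ever realized at any single query, which is what keeps the branching factor at $2$). The upper bound is essentially immediate once the sibling-contrast structure of $\F_\Delta$ is noted.
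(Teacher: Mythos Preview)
Your proposal is correct and follows essentially the same approach as the paper: a descend-the-tree algorithm for the upper bound, and a branching-factor-$2$ pigeonhole argument over query traces for the lower bound. The only difference is that you explicitly reduce from randomized to deterministic algorithms via a seed-fixing argument (using $\delta=0$ and finiteness of $\F_\Delta$), whereas the paper simply restricts attention to deterministic algorithms without comment.
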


\begin{proof}
For any algorithm $\Alg$ its query complexity and depth satisfy $$\depth^0_{\epsilon, 0}(\F_\Delta) \leq m^0_{\Alg}(\epsilon, 0).$$ 
Consider an algorithm that queries the actions in $\A_{\mathrm{tree}}$ starting at $a_{2,1}$. If it observes a $0$ it continues querying its left-side leaf. If it is querying an interior node and observes a $1-\Delta$, it continues querying the left-leaf of its sibling action\footnote{Here we define the sibling of an action to be that which is the opposite child of its parent action.}. Algorithm $\Alg$ is guaranteed to have identified an action with a reward of $1$ after at most $d$ queries. The algorithm may not have queried an action of reward $1$ after exactly $d$ queries but it will be certain of its identity. 

Moreover, we can show $\depth^0_{\epsilon, 0}(\F_\Delta) \geq d$. 

Let $\Alg$ be an arbitrary deterministic algorithm. To prove that $\depth^0_{\epsilon, 0}(\F_\Delta) \geq d$ we start by noting that any action in $\A_\mathrm{tree}$ has exactly two possible values $(1-\Delta, 0)$ for inner actions and $(1,0)$ for leaf actions. 

In order to arrive at a contradiction let's assume $\Alg$ is able to identify an $\epsilon$-optimal action after only $d-1$ interactions for all $f \in \F_\Delta$. We'll prove that when limiting the interactions to only $d-1$ or less, there will be two functions in $\F_\Delta$ that will produce the same interaction trace.

This is easy to deduce. First observe that histories of size at most $d-1$ give rise to at most $2^{d-1}$ distinct reward traces. This is because any query action can take at most two reward values. Finally, since the total number of functions in $\F_\Delta$ is $2^{d}$ we conclude there must exist two functions that produce the same trace if these are limited to sizes at most $d-1$.

This finalizes the proof. 

\end{proof}

Our second result establishes that any algorithm $\Alg$ that is able to find an $\epsilon$-optimal action for $\epsilon < \Delta$ must incur in at least $d$ regret.

\begin{restatable}{lemma}{lemmaregretlowerboundtreeclass}\label{lemma::regret_lower_bound_tree_class}
Let $0 \leq \epsilon < \Delta$. For any algorithm $\Alg$ interacting with $(\F_\Delta, \A_{\mathrm{tree}})$ such that $m^0_{\Alg}(\epsilon, 0) \leq T$ there is a problem $f^{\mathbf{p}}$ such that when $\Alg$ interacts with it,
\begin{equation*}
    \regret_{\Alg}(T, f^{(\mathbf{p})}) \geq d. 
\end{equation*}
\end{restatable}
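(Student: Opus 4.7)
My plan is to analyze $\Alg$ through its associated decision tree $\mathcal{T}$. Treating $\Alg$ as deterministic (the randomized case reduces to this by conditioning on internal randomness), each internal node of $\mathcal{T}$ is labeled by the action queried at that point, and its two outgoing edges are labeled by the two possible response values for that action (either ``$0$'' or the unique non-zero reward value for that action, namely $1-\Delta$ for interior-node queries and $1$ for leaf queries). Because $\Alg$ must return the correct optimal arm for every function $f^{(\mathbf{p})} \in \F_\Delta$, and these $2^d$ functions have pairwise distinct optimal arms, the tree $\mathcal{T}$ must contain at least $2^d$ distinct leaves, each corresponding to a unique $\mathbf{p}$.

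The pivotal structural step is to argue that, in the tight regime $m^0_\Alg(\epsilon,0) = d$ (the optimal query complexity established in Proposition~\ref{proposition::depth_d_tree}), $\mathcal{T}$ is forced to be a perfect binary tree of depth exactly $d$. Its depth is at most $m^0_\Alg \le d$, while a binary tree of depth at most $d$ has at most $2^d$ leaves; combined with the $\geq 2^d$ lower bound from the previous paragraph and Kraft's inequality, $\mathcal{T}$ has exactly $2^d$ leaves, all at depth $d$, and every internal node has both children (otherwise we could not account for all $2^d$ leaves within the depth budget).

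Having shown this, I would construct the adversarial function by following the ``$0$''-edge at every internal node. This walk is well-defined precisely because the tree is perfect binary, and it terminates at a length-$d$ root-to-leaf path ending at a leaf $\ell^\star$; let $f^{(\mathbf{p}^\star)}$ be the unique function in $\F_\Delta$ associated with $\ell^\star$. By construction, the transcript of $\Alg$ on $f^{(\mathbf{p}^\star)}$ consists of exactly $d$ queries, each returning reward $0$, so each of these queries incurs instantaneous regret $\max_{a \in \A_{\mathrm{tree}}} f^{(\mathbf{p}^\star)}(a) - 0 = 1$. After the $d$-th query, $\Alg$ has uniquely identified the optimal arm $a_{d+1,i^\star_{d+1}}$ and plays it for every subsequent round with zero regret, yielding
\[
\regret_\Alg(T, f^{(\mathbf{p}^\star)}) \;=\; \sum_{t=1}^{d}(1-0) \;=\; d.
\]

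The main technical obstacle is the structural claim that the ``$0$''-child is nonempty at every internal node of $\mathcal{T}$, which is what guarantees the all-zero transcript corresponds to an actual function in $\F_\Delta$ rather than an empty subtree. This is delivered precisely by the perfect-binary structure in the tight regime $m^0_\Alg = d$. Extending the argument to strictly larger $m^0_\Alg \le T$---where $\mathcal{T}$ may be unbalanced and the algorithm can replace $0$-responses with $1-\Delta$-responses each costing only $\Delta$---requires a more delicate accounting that leverages the ``breadcrumb'' structure of $\F_\Delta$ (in particular, that every root-to-endpoint path contains $d$ zero-valued actions that must be collectively resolved by the algorithm's queries).
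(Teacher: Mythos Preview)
Your argument is complete only in the special case $m^0_{\Alg}(\epsilon,0)=d$, and you explicitly acknowledge that the general case $m^0_{\Alg}(\epsilon,0)\le T$ is left open. But the lemma is stated for \emph{any} algorithm with $m^0_{\Alg}(\epsilon,0)\le T$, not just query-optimal ones, so as written the proposal does not prove the statement. The perfect-binary-tree step (Kraft plus the $2^d$-leaf lower bound) is exactly what fails once $m^0_{\Alg}>d$: the tree can then be unbalanced, internal nodes can have a single child (dummy queries), and your ``follow the $0$-edge'' walk is no longer guaranteed to exist.

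The paper closes this gap with an argument that is simpler than the ``more delicate accounting'' you anticipate, and it does not rely on the breadcrumb geometry of $\A_{\mathrm{tree}}$ at all. First, pass to a \emph{reduced} algorithm $\widetilde{\Alg}$ that skips every dummy query (a query whose value is already determined by the current version space). Second, observe that for any non-dummy action there must be some function in the version space assigning it value $0$, because every action in $\A_{\mathrm{tree}}$ takes exactly two values across $\F_\Delta$, one of which is $0$; hence the all-$0$ branch is always available in $\widetilde{\Alg}$. Third, a $0$-observation at most halves the version space, so the all-$0$ trajectory of $\widetilde{\Alg}$ has length at least $d$ before the version space collapses to a single function $f^{(\widetilde{\mathbf{p}})}$. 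Since the non-dummy queries of $\Alg$ on $f^{(\widetilde{\mathbf{p}})}$ coincide with those of $\widetilde{\Alg}$, the original algorithm receives at least $d$ zero rewards on this instance, and therefore incurs regret at least $d$. Your concern about ``replacing $0$-responses with $1-\Delta$-responses each costing only $\Delta$'' does not arise: one is not bounding regret along an arbitrary path, but exhibiting a specific function on which at least $d$ of the algorithm's queries return $0$.
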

 \begin{proof}
Let $n_\mathbf{p}$ be the number of queries when $\Alg$ received a reward of $0$ when interacting with $f^{(\mathbf{p})}$ over $T$ rounds.

The algorithm's regret is lower bounded by $n_{\mathrm{p}}$. We proceed to show that when $m^0_{\Alg}(\epsilon, 0)\leq T$   

$$\max_\mathbf{p} n_\mathbf{p} \geq d$$.

Algorithm $\Alg$ can be understood as producing an action given a partial history. Given a partial history, $a_1, r_1, \cdots, a_t, r_t$ algorithm $\Alg$ produces an action $a_{t+1}$. We say $a_{t+1}$ is a dummy action if for all $f \in \F(a_1, r_1, \cdots, a_t,r_t)$ the value $f(a_{t+1})$ is the same. That is, action $a_{t+1}$ is uninformative because no matter what function $f$ algorithm $\Alg$ may be interacting with, this action will not have any useful information for $\Alg$. We say that an algorithm is in a ``reduced'' form if it never proposes a to play a dummy action.

Given any deterministic algorithm $\Alg$ we construct a ``reduced'' version of $\Alg$, which we call $\widetilde{\Alg}$. This algorithm can be constructed by skipping all dummy actions proposed by $\Alg$.

We work now with $\widetilde{\Alg}$, the reduced version of $\Alg$. Notice that irrespective of the partial history (say $a_1, r_1, \cdots, a_t, r_t$), either $|\F(a_1, r_1, \cdots, a_t, r_t)| = 1$ or when action $a_{t+1}$ is proposed by $\widetilde{\Alg}$ in reaction to this partial history, there is a function $f^{(\mathbf{p})} \in \F(a_1, r_1, \cdots, a_t, r_t)$ such that $f^{(\mathbf{p})}(a_{t+1}) = 0$. This is because for any non-dummy action there must exist two functions in the version space that have different values and any action in $\A_{\mathrm{tree}}$ can achieve two values over all functions in in $\F_{\Delta}$, one of which is always zero.

When an action with value $0$ is observed, the version space at most halves. that is if $a_{t+1}$ is a non-dummy action for history $a_1,r_1, \cdots, r_t, a_t$ then, 
\begin{equation}\label{equation::version_space_halved}
\F(a_1, r_1, \cdots, a_t, r_t) \leq 2\F(a_1, r_1, \cdots, a_t, r_t, a_{t+1}, 0 ).
\end{equation}

Finally, let's follow the sequence of actions $\widetilde{\Alg}$ proposes when all observed rewards are $0$. Let $a_1, 0, a_2, 0, \cdots$ be the resulting history. Equation~\ref{equation::version_space_halved} implies that $$\F(a_1, 0, \cdots, a_t, 0) \leq 2 \F(a_1, 0, \cdots, a_{t+1},0)$$ and therefore $\F(a_1, 0, \cdots, a_t, 0) \geq \F_\Delta/2^t$. Since  $\Alg$, can guess a correct an $\epsilon$-optimal action only whn the version space is of size $1$, we conclude that the history generated by $\widetilde{\Alg}$ when all observed rewards are $0$ must produce at least $d$ consecutive $0$ observations. Let $f^{(\widetilde{\mathbf{p}})}$ be the function that generates this ``zero'' history. It follows that $n_{\widetilde{\mathbf{p}}} \geq d$. This has to be because the algorithm is assumed to produce a valid $\epsilon$-optimal action at the end of the interaction with any function up to $T$ steps. 

This finalizes the desired result.

 \end{proof}

We have the necessary ingredients to prove Theorem~\ref{theorem::main_separation_noiseless}.  

\begin{proof} Theorem~\ref{theorem::main_separation_noiseless} is a corollary of Lemma~\ref{lemma::regret_lower_bound_tree_class}. The function class in Theorem~\ref{theorem::main_separation_noiseless} can be identified with $(\F_\Delta, \A_{\mathrm{tree}})$ with $\Delta > \epsilon$. Setting $\gamma = \Delta - \epsilon$ the algorithm $\Alg'$ that always selects action $a_{1,1}$ satisfies $\mathrm{Regret}_{\Alg'}(T, f) = (\epsilon +\gamma) T$ for all $T \in \mathbb{N}$ and all $f \in \F_\Delta$. 
\end{proof}

\subsection{Proof of Theorem~\ref{theorem::main_theorem_result_noisy_separation}}\label{section::proof_theorem_separation_reg_noisy}

In order to prove Theorem~\ref{theorem::main_theorem_result_noisy_separation} 
we use function classes $\F_K$ and action spaces $\A_K$ defined by $K \in \mathbb{N}$ and $\epsilon_1, \epsilon_2 \in [0,1]$ where
\begin{equation*}
\A_K = \A_1 \cup \A_2
\end{equation*}
such that $\A_1 = \{a^{(1)}_1, \cdots, a^{(1)}_{\lceil \log(K)\rceil}\}$ and $\A_2 = \{ a_1^{(2)}, \cdots, a_K^{(2)}\}$. All elements in $\F_K$ are indexed by $k \in [K]$ such that,
\begin{equation*}
    f_k(a) = \begin{cases}
            1/2 \pm \epsilon_1 \text{ if } a \in \A_1\\
            1-\epsilon_2&\text{if } a \neq a_k^{(2)}\\
            1 &\text{o.w.}
    \end{cases}
\end{equation*}

We also assume noise model $\xi \sim \mathcal{N}(0,1)$ and that for any $k \in [K]$, the values $$\mathbf{1}\left(f_k(a_1^{(1)}) > 1/2\right)  \cdots, \mathbf{1}\left(f_k( a_{\lceil \log(K)\rceil}   > 1/2 \right)$$ form the binary representation of $k$. This example is designed to ensure that allocating sufficient queries to actions in $\A_1$ will yield enough information to identify an $\epsilon <\epsilon_2$ optimal action (for any $\epsilon < \epsilon_2$) while at the same time generating linear regret.

Our first result is to ``sandwich'' the query complexity of this function class,

\begin{restatable}{lemma}{lemmasandwichingdepthrandomfeedback}\label{lemma::sandwiching_depth_random_feedback}
When $\epsilon_2 \leq \epsilon_1 $ and $K \geq 2$ the query complexity of $\F_K, \A_K$ satisfies, 
\begin{equation*}
    \depth^1_{0,1/4}(\F_K) \in \left[ \frac{\log(4/3)}{2\epsilon_1^2},\frac{16\log(K)\log\left(4\log(K) \right)}{\epsilon_1^2}\right]
\end{equation*}
\end{restatable}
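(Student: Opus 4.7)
The plan is to establish the two bounds separately, both using standard hypothesis-testing tools.

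For the upper bound, I would exploit that the binary encoding of $k$ is stored redundantly across the $\lceil\log K\rceil$ actions of $\A_1$. The algorithm pulls each $a \in \A_1$ a fixed number of times $n_0 = \Theta(\log(\log K)/\epsilon_1^2)$, forms the empirical mean, and compares it against the threshold $1/2$; the sign of the deviation reads off the corresponding bit of $k$. A sub-Gaussian tail bound (Lemma~\ref{lemma::gaussian_single_concentration}) applied with deviation $\epsilon_1$ tunes $n_0$ so that each bit is wrong with probability at most $\tfrac{1}{4\lceil\log K\rceil}$. A union bound then recovers all bits---and hence $k$---with probability at least $3/4$, after which the algorithm outputs $a_k^{(2)}$, the unique $0$-optimal action of $f_k$. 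The total query budget is $\lceil\log K\rceil \cdot n_0$, which fits within $16\log(K)\log(4\log K)/\epsilon_1^2$ after using $\lceil\log K\rceil \le 2\log K$ for $K\ge 2$.

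For the lower bound, I would run a two-hypothesis change-of-measure argument. Fix any algorithm $\Alg$ with $m_\Alg^1(0,1/4) = n$. Choose two indices $k,k'$ whose binary codes differ in exactly one bit; then $f_k$ and $f_{k'}$ agree everywhere except at a single $a^\star \in \A_1$ (difference $2\epsilon_1$) and at the two actions $a_k^{(2)}, a_{k'}^{(2)} \in \A_2$ (difference $\epsilon_2$). The divergence decomposition for unit-variance Gaussian rewards (Lemma~\ref{lemma::divergence_decomposition_bandits}) gives
\begin{equation*}
\mathrm{KL}(\PBB_{\Alg, f_k} \| \PBB_{\Alg, f_{k'}}) = 2\epsilon_1^2\, \EBB_{\Alg, f_k}[T_{a^\star}(n)] + \tfrac{\epsilon_2^2}{2}\,\EBB_{\Alg, f_k}\bigl[T_{a_k^{(2)}}(n)+T_{a_{k'}^{(2)}}(n)\bigr],
\end{equation*}
and using $\epsilon_2 \le \epsilon_1$ together with $\sum_a \EBB[T_a(n)] = n$ bounds this by $O(n\epsilon_1^2)$. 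Next, consider the event $A = \{\hat a = a_k^{(2)}\}$: success on $f_k$ means $A$ holds, while success on $f_{k'}$ means $A^c$ holds, so the error-probability assumption forces $\PBB_{f_k}(A) - \PBB_{f_{k'}}(A) \ge 1/2$. Applying a standard change-of-measure inequality (Pinsker or Bretagnolle--Huber) to this total-variation gap lower-bounds the KL by a positive absolute constant; chasing through the constants produces $n \ge \log(4/3)/(2\epsilon_1^2)$.

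The main obstacle I anticipate is isolating the $2\epsilon_1^2$ coefficient in front of $n$ in the lower bound, since queries to $\A_2$ also contribute to the KL and could in principle inflate it. The assumption $\epsilon_2 \le \epsilon_1$ is precisely what keeps that contribution controllable: each $\A_2$ query adds at most $\epsilon_1^2/2$ to the KL, which can be absorbed into the constant multiplying $1/\epsilon_1^2$. A secondary bookkeeping issue is aligning $\lceil\log K\rceil$ with $\log K$ in the upper bound, handled by the slack $\lceil\log K\rceil \le 2\log K$ for $K \ge 2$. Everything else reduces to standard Gaussian concentration and divergence-decomposition manipulations of the kind already used in the appendix.
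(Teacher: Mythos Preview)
Your proposal is correct; both bounds go through as you describe. The upper bound is essentially identical to the paper's argument: estimate each of the $\lceil\log K\rceil$ informative actions to accuracy $\Theta(\epsilon_1)$, union-bound the per-bit failure probabilities, and read off~$k$.

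Your lower bound, however, takes a genuinely different route. The paper introduces an auxiliary ``empty'' problem $f_0\notin\F_K$ (with mean $1/2$ on all of $\A_1$ and $1-\epsilon_2$ on all of $\A_2$), applies Bretagnolle--Huber between $f_0$ and each $f_i$, and then sums over $i\in[K]$ to obtain
\[
K\exp(-2n\epsilon_1^2)\;\le\;1+\textstyle\sum_i \mathbb{P}_{\Alg,f_i}(\mathcal{E}_i^c),
\]
from which a pigeonhole step and the bound $1/4+1/K\le 3/4$ give $n\ge\log(4/3)/(2\epsilon_1^2)$. Your two-point argument between $f_k$ and $f_{k'}$ in $\F_K$ is more direct and in fact yields a slightly better constant (either $\log 2/(2\epsilon_1^2)$ via Bretagnolle--Huber or $1/(4\epsilon_1^2)$ via Pinsker). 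Note that you do not even need the codewords to be at Hamming distance one: since every per-action KL contribution is at most $2\epsilon_1^2$ (using $\epsilon_2\le\epsilon_1$) and the pull counts sum to $n$, the bound $\mathrm{KL}\le 2n\epsilon_1^2$ holds for any distinct pair $k,k'$. The paper's reference-measure technique is less economical here but is reused verbatim in the subsequent regret lower bound (Lemma~\ref{lemma::regret_lower_bound_support_lemma}), which may be why it was chosen.
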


\begin{proof}

As part of our argument we'll consider the ``empty'' problem $f_0$ over action sets $\A_1, \A_2$ defined as,
\begin{equation*}
    f_0(a) = \begin{cases}
            1/2 &\text{if } a \in \A_1\\
            1-\epsilon_2&\text{if } a \in \A_2
    \end{cases}
\end{equation*}     

Let $\Alg$ be an algorithm interacting with $\F_K,\A_K$ over $n$ steps and let's assume that $m_{\Alg}^1(0, 1/4) \leq n$ so that when interacting with any function in $\F_K$, after $n$ steps $\Alg$ can guess the correct optimal action with probability at least $3/4$.

Let's consider the interaction of $\Alg$ with function $f_0$. We use the notation $T_i^{(j)}(n)$ to denote the (random) number of queries $\Alg$ has performed on action $a_i^{(j)} \in \A_j$ for $j \in \{1,2\}$.

Let $\mathcal{E}_i$ for $i \in [K]$ denote the event that at time $n$ algorithm $\Alg$ outputs a guess $k(n)$ for the optimal action satisfying $k(n) = i$. So that,
\begin{equation*}
    \mathcal{E}_i = \{ k(n) = i\}.
\end{equation*}
Since $m_{\Alg}^1(0, 1/4) \leq n$ it follows that,
\begin{equation}\label{equation::lower_bound_correctness_prob_regret}
    \mathbb{P}_{\Alg, f_i}( \mathcal{E}_i) \geq 3/4.  
\end{equation}
Let's consider the problem $f_i$. The divergence decomposition Lemma~\ref{lemma::divergence_decomposition_bandits} implies,
\begin{align*}
    \mathrm{KL}( \mathbb{P}_{\Alg, f_0}  \parallel \mathbb{P}_{\Alg, f_i} ) &= \sum_{i' \in \left[ \log(K) \right]}\mathbb{E}_{\Alg, f_0}\left[  T_{i'}^{(1)}(n)  \right] \epsilon_1^2   +  \mathbb{E}_{\Alg, f_0}\left[  T_{i}^{(2)}(n)  \right] \epsilon_2^2 \\
    &\leq  n \epsilon_1^2 +  \mathbb{E}_{\Alg, f_0}\left[  T_{i}^{(2)}(n)  \right]  \epsilon_2^2 \\
    &\leq 2n\epsilon_1^2.
\end{align*}
The Huber-Bretagnolle inequality (Lemma~\ref{lemma::huber_bretagnolle_inequality}) applied to measures $\mathbb{P}_{\Alg, f_0}$ and $\mathbb{P}_{\Alg, f_i}$ for $i \in [K]$ implies,
\begin{equation*}
    \mathbb{P}_{\Alg, f_0}\left(  \mathcal{E}_i \right)  + \mathbb{P}_{\Alg, f_i}\left(  \mathcal{E}_i^c \right)  \geq \exp\left( - 2n\epsilon_1^2 \right)
\end{equation*}
summing over all $i \in [K]$ we get,
\begin{align*}
    K\exp\left( - 2n\epsilon_1^2 \right) &\leq \sum_i  \mathbb{P}_{\Alg, f_0}\left(  \mathcal{E}_i \right)  + \mathbb{P}_{\Alg, f_i}\left(  \mathcal{E}_i^c \right)  \\
    &\leq 1 + \sum_{i}  \mathbb{P}_{\Alg, f_i}\left(  \mathcal{E}_i^c \right)
\end{align*}
Thus, there is at least one index $\hat{i}$ such that,
\begin{equation*}
     \mathbb{P}_{\Alg, f_{\hat{i}}}\left(  \mathcal{E}^c_{\hat{i}} \right) \geq \exp\left( - 2n\epsilon_1^2 \right) - 1/K.
\end{equation*}
Equation~\ref{equation::lower_bound_correctness_prob_regret} implies the error probability $     \mathbb{P}_{\Alg, f_{\hat{i}}}\left(  \mathcal{E}^c_{\hat{i}} \right) \leq 1/4$ and therefore,
\begin{equation*}
    \exp\left( - 2n\epsilon_1^2 \right) - 1/K \leq 1/4.
\end{equation*}
Expanding the last inequality we obtain the condition $\exp\left( -2n\epsilon_1^2\right) \leq 3/4 $ and therefore $n \geq \frac{\log(4/3)}{2\epsilon_1^2}$. We conclude that,
\begin{equation*}
   \depth^1_{0,1/4}(\F_K) \geq  \frac{\log(4/3)}{2\epsilon_1^2} .
\end{equation*}

The upper bound $ \depth^1_{0,1/4}(\F_K) \leq \frac{16\log\left(4\log(K) \right)}{\epsilon_1^2}$ follows a simple argument. We exhibit an algorithm $\Alg$ that can find an optimal action after at most $\frac{16\log(K)\log\left(4\log(K) \right)}{\epsilon_1^2}$ queries. This procedure consists fo two parts,
\begin{enumerate}
    \item Explore all informative actions and estimate them up to a $\epsilon_1/4$ error. 
    \item Use these values to decode the location of the optimal action.
\end{enumerate}
Lemma~\ref{lemma::gaussian_single_concentration} implies Step 1 can be achieved by estimating the reward of each action in $\A_1$ up to $\epsilon_1/4$ error and probability at least $1-1/(4\log(K))$. This requires at most $16\log(4\log(K))/\epsilon_1^2$ queries for each of the $\log(K)$
actions in $\A_1$. A union bound implies this procedure is guaranteed to find the correct optimal action with a probability of error of at most $1/4$. We have therefore constructed an algorithm satisfying $m^1_{\Alg}(0, 1/4) \leq \frac{16\log(K)\log\left(4\log(K) \right)}{\epsilon_1^2}$. Thus,
\begin{equation*}
    \depth_{0,1/4}^1(\F_K) \leq \frac{16\log(K)\log\left(4\log(K) \right)}{\epsilon_1^2}.
\end{equation*}

\end{proof}

Our next supporting result shows there exists function classes such that any algorithm able to find an optimal action must incur in large regret. 

\begin{restatable}{lemma}{lemmaregretlowerboundsupportlemma}\label{lemma::regret_lower_bound_support_lemma}
    Let $T \in \mathbb{N}$ and $\epsilon_1, \epsilon_2  \in [0,1]$ such that $ \epsilon_1 \leq 1/4$ and $\epsilon_2 = 4\epsilon_1^2$. For any algorithm $\Alg$ such that $m_{\Alg}^1(0, 1/4) \leq T$ there is a problem $f \in \F_K$ such that,
 \begin{equation*}
    \mathbb{E}_{\Alg}[\regret(T, f) ] \geq  \frac{1}{64 \epsilon_1^2} .
 \end{equation*}
 for some universal constant $c>0$.

\end{restatable}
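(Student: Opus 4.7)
Plan: The plan is a KL-based contradiction argument, following the template of Lemma~\ref{lemma::sandwiching_depth_random_feedback}. Suppose for contradiction that $\max_{f\in\F_K}\mathbb{E}_\Alg[\regret(T,f)] < 1/(64\epsilon_1^2)$. Because $\epsilon_1 \leq 1/4$ implies each $\A_1$-pull has per-pull gap at least $1/2 - \epsilon_1 \geq 1/4$ and each suboptimal $\A_2$-pull has gap $\epsilon_2 = 4\epsilon_1^2$, decomposing regret by action class yields, for every $k \in [K]$,
\begin{equation*}
N_1(k) := \sum_{j}\mathbb{E}_{f_k}\bigl[T_j^{(1)}(T)\bigr] < \tfrac{1}{16\epsilon_1^2}, \qquad M(k) := \sum_{i\neq k}\mathbb{E}_{f_k}\bigl[T_i^{(2)}(T)\bigr] < \tfrac{1}{256\epsilon_1^4}.
\end{equation*}

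Next I would invoke the correctness constraint. The events $\{\text{output}=k\}$ and $\{\text{output}=k'\}$ are disjoint with mass $\geq 3/4$ under $\mathbb{P}_{\Alg,f_k}$ and $\mathbb{P}_{\Alg,f_{k'}}$ respectively, so Pinsker gives $\mathrm{KL}(\mathbb{P}_{\Alg,f_k}\|\mathbb{P}_{\Alg,f_{k'}}) \geq 1/2$ for every pair $k\neq k'$. Expanding via the divergence decomposition (Lemma~\ref{lemma::divergence_decomposition_bandits}) for unit-variance Gaussian feedback and specializing to a single-bit Hamming neighbor $k'$ of $k$ (differing in bit $j$):
\begin{equation*}
2\epsilon_1^2\mathbb{E}_{f_k}\bigl[T_j^{(1)}(T)\bigr] + \tfrac{\epsilon_2^2}{2}\bigl(\mathbb{E}_{f_k}[T_k^{(2)}(T)] + \mathbb{E}_{f_k}[T_{k'}^{(2)}(T)]\bigr) \geq \tfrac{1}{2}.
\end{equation*}
Inserting the upper bounds from above controls the first summand by $1/8$ and the $\A_2[k']$ piece of the second by $1/32$, so the residual $\A_2[k]$ piece must exceed $11/32$, forcing $\mathbb{E}_{f_k}[T_k^{(2)}(T)] \geq 11/(256\epsilon_1^4)$ for every $k$.

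The main obstacle is that this conclusion is not yet a contradiction, because playing the optimal arm $\A_2[k]$ under $f_k$ is regret-free. To close the gap, I would pair this conclusion with its $k\leftrightarrow k'$ symmetric counterpart $\mathbb{E}_{f_{k'}}[T_{k'}^{(2)}(T)] \geq 11/(256\epsilon_1^4)$: the two measures $\mathbb{P}_{\Alg,f_k}$ and $\mathbb{P}_{\Alg,f_{k'}}$ must concentrate on different $\A_2$-arms. Applying Bretagnolle-Huber (Lemma~\ref{lemma::huber_bretagnolle_inequality}) to the bounded statistic $\mathbf{1}\{T_k^{(2)}(T) \geq T/2\}$ then forces $\mathrm{KL}(\mathbb{P}_{\Alg,f_k}\|\mathbb{P}_{\Alg,f_{k'}})$ to grow with $T$, and combined with a sequential-testing argument in the spirit of Theorem~1 of~\cite{bubeck2011pure} this forces at least $\Omega(1/\epsilon_2^2) = \Omega(1/\epsilon_1^4)$ cumulative pulls on the discriminating $\A_2$-arms. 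Each such pull on a suboptimal arm contributes $\epsilon_2$ to regret, giving $\Omega(\epsilon_2/\epsilon_1^4) = \Omega(1/\epsilon_1^2)$ regret on at least one of $f_k, f_{k'}$, in contradiction with the initial assumption. The hardest quantitative step is tracking the multiplicative constants through the Bretagnolle-Huber estimate to recover the claimed $1/64$ factor; a cleaner alternative is to split the analysis into two regimes, $T \leq c/\epsilon_1^4$ (handled directly by the single-pair KL bound above, which already upper-bounds the KL by $\tfrac{1}{8}+8\epsilon_1^4 T+\tfrac{1}{32}$ and contradicts $1/2$ whenever $T < 11/(256\epsilon_1^4)$) and $T > c/\epsilon_1^4$ (handled by the sequential-testing argument), for a suitable absolute constant $c$.
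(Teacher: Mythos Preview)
Your approach has a genuine gap in the large-$T$ regime, and the fix you sketch does not close it. The difficulty is intrinsic to comparing two \emph{in-class} instances $f_k,f_{k'}$: the divergence decomposition then contains the term $\tfrac{\epsilon_2^2}{2}\,\mathbb{E}_{f_k}[T_k^{(2)}(T)]$, and $T_k^{(2)}$---pulls on the arm that is optimal under $f_k$---is not controlled by the regret of $f_k$. Your small-$T$ branch exploits the crude bound $T_k^{(2)}\le T$, which indeed yields a contradiction when $T<11/(256\epsilon_1^4)$. But for $T\gtrsim 1/\epsilon_1^4$ nothing prevents $\mathbb{E}_{f_k}[T_k^{(2)}]$ from being $\Theta(T)$ while regret under $f_k$ stays small. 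Your proposed patch---Bretagnolle--Huber on $\mathbf{1}\{T_k^{(2)}\ge T/2\}$ plus an unspecified ``sequential-testing argument''---does not work as stated: knowing only that $\mathbb{E}_{f_k}[T_k^{(2)}]\ge 11/(256\epsilon_1^4)$ gives no useful lower bound on $\mathbb{P}_{f_k}(T_k^{(2)}\ge T/2)$ when $T$ is large (the mass could sit anywhere in $[0,T]$), so you cannot extract a growing KL lower bound from that event. And even if you could, a KL lower bound is the wrong direction: you need an \emph{upper} bound on KL in terms of regret-contributing pull counts to force those counts to be large.

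The paper sidesteps this entirely by comparing $f_i$ not to another class member but to an \emph{out-of-class} reference $f_0'$ in which every $\A_2$-arm has mean $1$ and every $\A_1$-arm has mean $1/2$. The point is that $f_i$ and $f_0'$ agree on arm $a_i^{(2)}$, so the divergence decomposition reads
\[
\mathrm{KL}\bigl(\mathbb{P}_{\Alg,f_i}\,\|\,\mathbb{P}_{\Alg,f_0'}\bigr)
=\epsilon_1^2\,\mathbb{E}_{\Alg,f_i}\bigl[T_{\A_1}(T)\bigr]
+\epsilon_2^2\,\mathbb{E}_{\Alg,f_i}\bigl[T^{(2)}_{-i}(T)\bigr],
\]
and both expectations on the right are exactly the suboptimal pull counts that appear in $\mathrm{Regret}(T,f_i)$. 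Choosing $\hat{i}$ with $\mathbb{P}_{\Alg,f_0'}(\mathcal{E}_{\hat{i}})\le 1/K\le 1/2$ and using $\mathbb{P}_{\Alg,f_{\hat i}}(\mathcal{E}_{\hat i})\ge 3/4$, Pinsker gives $\mathrm{KL}\ge 1/8$, so either $\mathbb{E}_{f_{\hat i}}[T_{\A_1}]\ge 1/(16\epsilon_1^2)$ or $\mathbb{E}_{f_{\hat i}}[T^{(2)}_{-\hat i}]\ge 1/(16\epsilon_2^2)$. In the first case the $\A_1$-gap of at least $1/4$ gives regret $\ge 1/(64\epsilon_1^2)$; in the second the $\A_2$-gap $\epsilon_2=4\epsilon_1^2$ gives regret $\ge 1/(16\epsilon_2)=1/(64\epsilon_1^2)$. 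No $T$-splitting and no sequential-testing machinery is needed; the argument is a single two-case split that works uniformly in $T$.
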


\begin{proof}
We'll again consider an empty problem $f_0'$ over action sets $\A_1$ and $\A_2$ defined as,
\begin{equation*}
    f_0'(a) = \begin{cases}
            1/2 &\text{if } a \in \A_1\\
            1&\text{if } a \in \A_2
    \end{cases}
\end{equation*}  
Just as in the proof of Lemma~\ref{lemma::sandwiching_depth_random_feedback} let's consider the problem $f_i$ for $i \in [K]$ and $\Alg$'s interaction with $f_0'$ and $f_i$ up to a horizon of $T$. The divergence decomposition Lemma~\ref{lemma::divergence_decomposition_bandits} implies,
\begin{align}
    \mathrm{KL}( \mathbb{P}_{\Alg, f_i}  \parallel \mathbb{P}_{\Alg, f_0'} ) &= \sum_{i' \in \left[ \log(K) \right]}\mathbb{E}_{\Alg, f_i}\left[  T_{i'}^{(1)}(T)  \right] \epsilon_1^2   + \sum_{i' \neq i}  \mathbb{E}_{\Alg, f_i}\left[  T_{i'}^{(2)}(T)  \right] \epsilon_2^2 \\
    &=  \mathbb{E}_{\Alg, f_i}\left[  T_{\A_1}(T)  \right] \epsilon_1^2 +  \mathbb{E}_{\Alg, f_i}\left[  T_{-i}^{(2)}(T)  \right]  \epsilon_2^2 \label{equaiton::KL_upper_bound_identity} 
\end{align}
where we define $T_{\A_1}(T) =\sum_{i' \in \A_1} T_{i'}^{(1)}(T)$ as the number of queries from actions in $\A_1$ and $T_{-i}^{(2)}(T) = \sum_{ i' \in \A_2 \backslash \{ i\} } T_{i'}^{(2)}(T)$. Pinsker's inequality implies,
\begin{equation}\label{equation::KL_pinsker_lower_bound}
    2\left(   \mathbb{P}_{\Alg, f_0'}\left( \mathcal{E}_i \right) -   \mathbb{P}_{\Alg, f_i}\left( \mathcal{E}_i \right)    \right)^2 \leq   \mathrm{KL}( \mathbb{P}_{\Alg, f_i} \parallel  \mathbb{P}_{\Alg, f_0'}  )
\end{equation}

Where $\mathcal{E}_i$ for $i \in [K]$ denote the event that at time $T$ algorithm $\Alg$ outputs a guess $k(T)$ for the optimal action satisfying $k(T) = i$. So that,
\begin{equation*}
    \mathcal{E}_i = \{ k(T) = i\}.
\end{equation*}

Since $\sum_{i=1}^K \mathbb{P}_{\Alg, f_0'}(\mathcal{E}_i) = 1$, there exists an index $\hat{i}$ such that $\mathbb{P}_{\Alg, f_0'}( \mathcal{E}_{\hat{i}}) \leq \frac{1}{K} \leq 1/2 $.

Since $m_{\Alg}^1(0, 1/4) \leq T$,  it follows that $\mathbb{P}_{\Alg, f_{\hat{i}}}\left(  \mathcal{E}_{\hat{i}}^c\right) \leq \frac{1}{4}$ and therefore $\mathbb{P}_{\Alg, f_{\hat{i}}}\left(  \mathcal{E}_{\hat{i}}\right) \geq \frac{3}{4}$. Thus,
\begin{equation}\label{equation::lower_bounding_probability_dists}
    \left|\mathbb{P}_{\Alg, f_0'}\left( \mathcal{E}_{\hat{i}} \right) -   \mathbb{P}_{\Alg, f_i}\left( \mathcal{E}_{\hat{i}} \right) \right| \geq 1/4.
\end{equation}
Combining Equations~\ref{equaiton::KL_upper_bound_identity},~\ref{equation::KL_pinsker_lower_bound} and~\ref{equation::lower_bounding_probability_dists} we get,
\begin{align}\label{equation::lower_bounding_kl_expression}
   \frac{1}{8} \leq \mathrm{KL}( \mathbb{P}_{\Alg, f_{\hat{i}}}  \parallel \mathbb{P}_{\Alg, f_0'} ) = \mathbb{E}_{\Alg, f_{\hat{i}}}\left[  T_{\A_1}(T)  \right] \epsilon_1^2 +  \mathbb{E}_{\Alg, f_{\hat{i}}}\left[  T_{-\hat{i}}^{(2)}(T)  \right]  \epsilon_2^2 .
\end{align}
Thus, $\max\left(\mathbb{E}_{\Alg, f_{\hat{i}}}\left[  T_{\A_1}(T)  \right] \epsilon_1^2, \mathbb{E}_{\Alg, f_{\hat{i}}}\left[  T_{-\hat{i}}^{(2)}(T)  \right]  \epsilon_2^2  \right) \geq \frac{1}{16}$ so that at least one of the following two inequalities holds,
\begin{enumerate}
    \item[A)] $\mathbb{E}_{\Alg, f_{\hat{i}}}\left[  T_{\A_1}(T)  \right] \geq \frac{1}{16\epsilon_1^2} $ .
    \item[B)] $\mathbb{E}_{\Alg, f_{\hat{i}}}\left[  T_{-\hat{i}}^{(2)}(T)  \right] \geq \frac{1}{16\epsilon_2^2}$.
\end{enumerate}

Finally, 
\begin{equation*}
    \mathbb{E}_{\Alg, f_{\hat{i}}}\left[\regret( T, f_{\hat{i}})\right] \geq \frac{1}{4} \mathbb{E}_{\Alg, f_{\hat{i}}}\left[  T_{\A_1}(T)  \right] + \mathbb{E}_{\Alg, f_{\hat{i}}}\left[  T_{-\hat{i}}^{(2)}(T)  \right]  \epsilon_2.
\end{equation*}

And therefore when case $A)$ holds,
\begin{equation*}
    \mathbb{E}_{\Alg, f_{\hat{i}}}\left[\regret( T, f_{\hat{i}})\right] \geq \frac{1}{64 \epsilon_1^2}.
\end{equation*}

When case $B)$ holds, 

\begin{equation*}
    \mathbb{E}_{\Alg, f_{\hat{i}}}\left[\regret( T, f_{\hat{i}})\right] \geq \frac{1}{16 \epsilon_2} \stackrel{(i)}{\geq} \frac{1}{64 \epsilon_1^2}.
\end{equation*}
Inequality $(i)$ follows because $\epsilon_2 = 4\epsilon_1^2$.

\end{proof}

Finally, we are ready to prove Theorem~\ref{theorem::main_theorem_result_noisy_separation}.

\theoremmainresultnoisyseparation*

\begin{proof}
To prove this Theorem we leverage our supporting results from Lemma~\ref{lemma::sandwiching_depth_random_feedback} and~\ref{lemma::regret_lower_bound_support_lemma}. We consider  $(\F_K, \A_K)$ for $K = 2$. Under this choice Lemma~\ref{lemma::sandwiching_depth_random_feedback} implies that,
\begin{align*}
  \depth^1_{0,1/4}(\F_K) \in \left[ \frac{\log(4/3)}{2\epsilon_1^2},\frac{16\log(K)\log\left(4\log(K) \right)}{\epsilon_1^2}\right]   &= \left[ \frac{\log(4/3)}{2\epsilon_1^2},\frac{16\log(2)\log\left(4\log(2) \right)}{\epsilon_1^2}\right] 
\end{align*}

Setting $d = \frac{\log(4/3)}{2\epsilon_1^2}$ so that $\epsilon_1 = \sqrt{\frac{\log(4/3)}{2d}}\leq \frac{1}{\sqrt{d}}$ and noting that $16\log(2)\log\left(4\log(2) \right)*2/\log(4/3) < 80$ we get,
\begin{align*}
     \depth^1_{0,1/4}(\F_K) \in [d, 80d].
\end{align*}

Moreover, Lemma~\ref{lemma::regret_lower_bound_support_lemma} implies that for any algorithm $\Alg$ such that $m_{\Alg}^1(0, 1/4) \leq T$ there is a function $f \in \F_K$ such that,
\begin{equation*}
\max_{f \in \F_K}    \mathbb{E}_{\Alg}[\regret(T, f) ] \geq  \frac{1}{128\epsilon_1^2} \geq \frac{d}{128}.
 \end{equation*}
To finalize the proof we note that UCB satisfies an expected regret bound of  order $8\sqrt{2T\log(T)}$ when using only actions in $\A_2$.

This finalizes the proof.
\end{proof}

\section{Useful Lemmas}

\begin{lemma}[Supporting Result]\label{lemma::gaussian_single_concentration}
Let $\delta \in (0,1)$, $\sigma > 0$ and $\xi \sim \mathcal{N}(0,\sigma^2)$.  Then, the probability that, 
\begin{equation*}
    \mathbb{P}\left( | \xi | \geq \sigma\sqrt{2\log(2/\delta) }  \right) \leq \delta.
\end{equation*}
In addition, let $\hat{\xi} = \frac{1}{n} \sum_{i=1}^n \xi_i$ be the empirical average of $n$ independent samples from $\mathcal{N}(0,\sigma^2)$ then,
\begin{equation*}
    \mathbb{P}\left( \left|\hat{\xi}\right| \geq \sigma \sqrt{ \frac{2\log(2/\delta)}{n}   }   \right) \leq \delta
\end{equation*}
\end{lemma}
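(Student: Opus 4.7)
The plan is to derive both inequalities from the standard Gaussian tail bound, which states that for $Z \sim \mathcal{N}(0, \tau^2)$ and any $t > 0$,
\begin{equation*}
\mathbb{P}(Z \geq t) \leq \exp\!\left(-\frac{t^2}{2\tau^2}\right),
\end{equation*}
obtainable via the Chernoff method: optimize $\mathbb{P}(Z \geq t) \leq \mathbb{E}[\exp(\lambda Z)]\exp(-\lambda t) = \exp(\lambda^2 \tau^2/2 - \lambda t)$ over $\lambda > 0$, which yields $\lambda = t/\tau^2$ and the bound above.

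For the first inequality, I would apply this bound with $\tau = \sigma$ and use a union bound over the two tails of $\xi$ to obtain $\mathbb{P}(|\xi| \geq t) \leq 2\exp(-t^2/(2\sigma^2))$. Setting $t = \sigma\sqrt{2\log(2/\delta)}$ produces the right-hand side $2\exp(-\log(2/\delta)) = \delta$, as required.

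For the second inequality, I would observe that $\hat{\xi}$ is an affine combination of independent Gaussians and hence $\hat{\xi} \sim \mathcal{N}(0, \sigma^2/n)$. Applying the first part with $\sigma$ replaced by $\sigma/\sqrt{n}$ immediately yields
\begin{equation*}
\mathbb{P}\!\left(|\hat{\xi}| \geq \frac{\sigma}{\sqrt{n}}\sqrt{2\log(2/\delta)}\right) \leq \delta.
\end{equation*}
There is no real obstacle here; the only point worth noting is that one must verify the variance of $\hat{\xi}$ is exactly $\sigma^2/n$, which follows from independence and the identity $\mathrm{Var}(\tfrac{1}{n}\sum_i \xi_i) = \tfrac{1}{n^2}\sum_i \mathrm{Var}(\xi_i) = \sigma^2/n$, so that the second statement reduces cleanly to the first.
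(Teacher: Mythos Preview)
Your proposal is correct and follows essentially the same approach as the paper: both invoke the subgaussian (Gaussian) tail bound for the first inequality and then reduce the second inequality to the first by observing that $\hat{\xi}$ has variance $\sigma^2/n$. The only difference is that you spell out the Chernoff argument and the union bound explicitly, whereas the paper simply cites the subgaussian concentration inequality as a known fact.
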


\begin{proof}
The random variable $\xi$ is $\sigma^2$-subgaussian. Subgaussian random variables satisfy,
\begin{equation*}
    \mathbb{P}\left( | \xi | \geq \sigma \sqrt{2 \log(2/\delta)} \right) \leq \delta.
\end{equation*}
the second result follows because the subgaussian parameter of $\hat{\xi} $ is $\frac{\sigma^2}{n}$. The result follows. 
\end{proof}

 \begin{lemma}\label{lemma::supporting_lemma_min_sequences}
Let $a_1,  \cdots, a_K $ and $ b_1, \cdots, b_K  $ be two sequences of nonnegative numbers. There exists an index $\tilde{i}$ such that,
\begin{align*}
 a_{\tilde{i}}   \leq \frac{3\sum_{i=1}^K a_i}{K}, \qquad
  b_{\tilde{i}}   \leq \frac{3\sum_{i=1}^K b_i}{K}
\end{align*}

\end{lemma}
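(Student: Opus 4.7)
The plan is a standard Markov-plus-union-bound argument. Let $A = \sum_{i=1}^K a_i$ and $B = \sum_{i=1}^K b_i$, and define the ``bad index'' sets
\[
S_a = \{\, i \in [K] : a_i > 3A/K \,\}, \qquad S_b = \{\, i \in [K] : b_i > 3B/K \,\}.
\]
The goal is to exhibit an index $\tilde{i} \in [K] \setminus (S_a \cup S_b)$; any such index satisfies both desired inequalities by construction. So the proof reduces entirely to a cardinality estimate on $S_a \cup S_b$.

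First I would bound $|S_a|$ via a Markov-style averaging: since all $a_i \geq 0$,
\[
A \;\geq\; \sum_{i \in S_a} a_i \;>\; |S_a| \cdot \frac{3A}{K}.
\]
If $A > 0$ this gives $|S_a| < K/3$. If $A = 0$ then every $a_i = 0$, so $S_a = \emptyset$ and the bound $|S_a| < K/3$ holds trivially (assuming $K \geq 1$). The identical argument yields $|S_b| < K/3$.

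Combining the two bounds gives
\[
|S_a \cup S_b| \;\leq\; |S_a| + |S_b| \;<\; \frac{2K}{3} \;\leq\; K,
\]
so the complement $[K] \setminus (S_a \cup S_b)$ is nonempty. Any index $\tilde{i}$ in this complement satisfies $a_{\tilde{i}} \leq 3A/K$ and $b_{\tilde{i}} \leq 3B/K$ simultaneously, completing the proof.

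There is no real obstacle here; the argument is a two-line pigeonhole calculation. The only mild subtlety is being careful with the degenerate cases $A = 0$ or $B = 0$ (handled above by noting the corresponding ``bad'' set is empty) and with the strict-vs.-non-strict inequality in the definition of $S_a, S_b$, which is why the cutoff is $3A/K$ rather than, say, $A/K$: the slack factor of $3$ is precisely what makes $|S_a| + |S_b| < 2K/3 < K$ possible.
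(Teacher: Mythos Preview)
Your proof is correct. It takes a different route from the paper's argument, though both are elementary.

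The paper proceeds asymmetrically: it sorts the $a_i$'s, observes that the $\lfloor K/2\rfloor$ smallest of them are each at most $2A/K$ (else the upper half alone would exceed $A$), and then among those $\lfloor K/2\rfloor$ indices finds one whose $b$-value is at most $B/\lfloor K/2\rfloor \le 3B/K$ by averaging. This yields the slightly tighter bound $a_{\tilde i}\le 2A/K$ on the first coordinate, at the cost of an asymmetric two-stage argument. Your Markov-plus-union-bound approach is symmetric in the two sequences and avoids the sorting step entirely; it is shorter and generalizes immediately to any finite collection of sequences (with the constant $3$ replaced by one more than the number of sequences). Either argument suffices for how the lemma is used in the paper.
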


\begin{proof}
For simplicity let $A = \sum_{i=1}^K a_i$ and $B = \sum_{i=1}^K b_i$. Let $\sigma(1), \cdots, \sigma(K)$ be the permutation permutation of $[K]$ such that $a_{\sigma(1)} \leq a_{\sigma(2)} \leq \cdots  \leq a_{\sigma(K)}$. Each of $a_{\sigma(i)}$ for $i = 1, \cdots, \lfloor K/2 \rfloor$ satisfies,
\begin{equation*}
    a_{\sigma(i)} \leq 2A/K
\end{equation*}
This is because if this was not true then, 
$a_{\sigma(i)} > 2A/K$ for all $i \geq \lfloor K/2 \rfloor$ which would imply $$\sum_{i=1}^K a_i \geq \sum_{i=\lfloor K/2\rfloor +1}^{K} a_{\sigma(i)}> (K-\lfloor K/2\rfloor  ) \cdot 2A/K \geq A,$$ a contradiction.
Consider now values $b_{\sigma(1)}, \cdots, b_{\sigma(\lfloor K/2 \rfloor)}$. The sum of these $\lfloor K/2 \rfloor$ values satisfies $$\sum_{i=1}^{\lfloor K/2 \rfloor} b_{\sigma(i)} \leq \sum_{i=1}^K b_{\sigma(i)} = B.$$ Thus, it follows there is an index $\bar{i} \in \lfloor K/2 \rfloor$ such that,
\begin{equation*}
    b_{\sigma(\bar{i})} \leq \frac{B}{\lfloor K/2 \rfloor} \leq 3B/K. 
\end{equation*}
Since $a_{\sigma(\bar{i})} \leq 2A/K < 3B/K$ the result follows by setting $\tilde{i} = \sigma(\bar{i)}$. 
\end{proof}

\begin{lemma}[Huber Bretagnolle Inequality]\label{lemma::huber_bretagnolle_inequality}
Let $P, Q$ be two measures and $\mathcal{E}$ be a measurable event. Then, 
\begin{equation*}
P(\mathcal{E}) + Q(\mathcal{E}^c) \geq \exp\left(-\KL(P\parallel Q) \right) 
\end{equation*}

\end{lemma}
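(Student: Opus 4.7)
I would follow the classical Bretagnolle--Huber argument, reducing the statement to a Bhattacharyya-coefficient estimate via Cauchy--Schwarz and then invoking Jensen's inequality. Fix a common dominating measure $\mu$ (e.g.\ $\mu = P+Q$) and write $p = dP/d\mu$, $q = dQ/d\mu$. The pointwise bound $p\mathbf{1}_{\mathcal{E}} + q\mathbf{1}_{\mathcal{E}^c} \geq \min(p,q)$ gives
$$P(\mathcal{E}) + Q(\mathcal{E}^c) \;\geq\; \int \min(p,q)\,d\mu \;\eqdef\; m,$$
so it suffices to lower-bound the overlap mass $m$ by $\exp(-\KL(P\parallel Q))$.

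To control $m$, introduce the Bhattacharyya coefficient $\rho \eqdef \int \sqrt{pq}\,d\mu$. Factoring $\sqrt{pq} = \sqrt{\min(p,q)}\cdot\sqrt{\max(p,q)}$ and applying Cauchy--Schwarz,
$$\rho^2 \;\leq\; \Bigl(\int \min(p,q)\,d\mu\Bigr)\Bigl(\int \max(p,q)\,d\mu\Bigr) \;=\; m(2-m),$$
using $\min + \max = p+q$ and $\int(p+q)\,d\mu = 2$. Rearranging yields $m \geq 1 - \sqrt{1-\rho^2}$. Next, writing $\rho = \E_P\bigl[\sqrt{dQ/dP}\bigr]$ and using concavity of $\log$ (Jensen's inequality),
$$\log \rho \;\geq\; \E_P\bigl[\tfrac12\log(dQ/dP)\bigr] \;=\; -\tfrac12 \KL(P\parallel Q),$$
so $\rho^2 \geq \exp(-\KL(P\parallel Q))$. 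Combining the two estimates produces the desired exponential lower bound on $P(\mathcal{E}) + Q(\mathcal{E}^c)$.

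The only edge case worth separating out is $P \not\ll Q$, in which $\KL(P\parallel Q) = \infty$ and the inequality is vacuous; otherwise $\sqrt{dQ/dP}$ is $P$-integrable (by Cauchy--Schwarz against the constant $1$ in $L^2(P)$), which legitimizes the Jensen step. No step should present a genuine obstacle, as each of the two key inequalities is a one-line application of a classical convexity estimate.
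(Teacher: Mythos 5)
Your argument is the classical Bretagnolle--Huber proof, and every individual step up to the last one is sound: the pointwise bound giving $P(\mathcal{E})+Q(\mathcal{E}^c)\ge m\eqdef\int\min(p,q)\,d\mu$, the Cauchy--Schwarz estimate $\rho^2\le m(2-m)$ yielding $m\ge 1-\sqrt{1-\rho^2}$, and the Jensen step $\rho^2\ge \exp(-\KL(P\parallel Q))$. The gap is hidden in the sentence ``combining the two estimates produces the desired exponential lower bound'': that combination needs $1-\sqrt{1-\rho^2}\ge\rho^2$, which is false for $\rho\in(0,1)$ (indeed $1-\sqrt{1-x}\le x$ there, with equality only at the endpoints). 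What your chain actually delivers is $P(\mathcal{E})+Q(\mathcal{E}^c)\ge 1-\sqrt{1-e^{-\KL(P\parallel Q)}}\ge\tfrac12 e^{-\KL(P\parallel Q)}$, i.e.\ the standard form of the inequality carrying a factor $\tfrac12$.

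No proof can close this gap, because the lemma as printed (without the $\tfrac12$) is false. On $\{0,1\}$ take $P=(\tfrac12,\tfrac12)$, $Q=(\tfrac34,\tfrac14)$ and $\mathcal{E}=\{0\}$: then $P(\mathcal{E})+Q(\mathcal{E}^c)=\tfrac34$, while $\KL(P\parallel Q)=\tfrac12\log\tfrac43$ so that $\exp(-\KL(P\parallel Q))=\sqrt{3}/2\approx 0.866>\tfrac34$. The paper gives no proof of its own and points to Theorem~14.2 of \cite{lattimore2020bandit}, whose statement does include the constant $\tfrac12$; the version here has simply dropped it. If you replace your final step with the correct elementary bound $1-\sqrt{1-x}\ge x/2$, your write-up becomes a complete proof of the true statement, and the places the paper invokes the lemma (the lower bound on $\depth^1_{0,1/4}(\F_K)$ in the sandwiching argument) go through with the extra $\tfrac12$ at the cost of slightly different absolute constants.
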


As well as the divergence decomposition for Bandit problems,
\begin{lemma}\label{lemma::divergence_decomposition_bandits}
Let $\nu = (P_1, \cdots, P_k)$ be the reward distributions associated with one $k$-armed bandit, and let $\nu' = (P_1', \cdots, P_k')$ be the reward distributions associated with another $k$-armed bandit. Fix some policy $\pi$ and let $\mathbb{P}_\nu = \mathbb{P}_{\nu \pi}$ and $\mathbb{P}_{\nu'} = \mathbb{P}_{\nu' \pi}$ be the probability measures on the canonical bandit model induced by the $n$-round interconnection of $\pi$ and $\nu$ (respectively $\pi$ and $\nu'$). Then  
\begin{equation*}
    \mathrm{KL}(\mathbb{P}_\nu \parallel \mathbb{P}_{\nu'}) = \sum_{i=1}^k \mathbb{E}_{\nu}[T_i(n)] \mathrm{KL}(P_i \parallel P_i')
\end{equation*}
where $T_i(n)$ is the random variable specifying how many times $\pi$ selects action $i$ up to round $n$.

\end{lemma}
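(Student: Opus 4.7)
The plan is to work on the canonical bandit probability space, whose outcomes are histories of the form $H_n = (A_1, X_1, \ldots, A_n, X_n)$, and to compute the Radon--Nikodym derivative $d\mathbb{P}_\nu / d\mathbb{P}_{\nu'}$ directly from the product factorization of the measures. Since the policy $\pi$ is shared between the two measures, the per-step conditional action-selection kernels will cancel in the ratio, leaving only per-step reward-density ratios that can then be summed and collected by arm.

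First, I would write out the density of $H_n$ under each measure in product form: under $\mathbb{P}_\nu$ the joint density at $(a_1, x_1, \ldots, a_n, x_n)$ factors as $\prod_{t=1}^n \pi_t(a_t \mid h_{t-1}) \, p_{a_t}(x_t)$, where $h_{t-1} = (a_1, x_1, \ldots, a_{t-1}, x_{t-1})$ and $p_i$ denotes the density of $P_i$ with respect to some common dominating measure; the density under $\mathbb{P}_{\nu'}$ is identical except that $p_{a_t}$ is replaced by $p'_{a_t}$. Taking the log-ratio, the $\pi_t$-factors cancel and we obtain
\begin{equation*}
\log \frac{d\mathbb{P}_\nu}{d\mathbb{P}_{\nu'}}(H_n) \;=\; \sum_{t=1}^n \log \frac{p_{A_t}(X_t)}{p'_{A_t}(X_t)}.
\end{equation*}

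Next, I would take expectation under $\mathbb{P}_\nu$ and apply the tower property, conditioning on the pre-$t$ history and the chosen action $A_t$. Given $A_t = i$, the reward $X_t$ is distributed according to $P_i$ under $\mathbb{P}_\nu$, so the conditional expectation of the $t$-th log-ratio is exactly $\mathrm{KL}(P_i \| P'_i)$. Therefore
\begin{equation*}
\mathrm{KL}(\mathbb{P}_\nu \| \mathbb{P}_{\nu'}) \;=\; \sum_{t=1}^n \sum_{i=1}^k \mathbb{P}_\nu(A_t = i) \, \mathrm{KL}(P_i \| P'_i).
\end{equation*}
Interchanging the order of summation and recognizing $\sum_{t=1}^n \mathbb{P}_\nu(A_t = i) = \mathbb{E}_\nu[T_i(n)]$ by linearity of expectation (since $T_i(n) = \sum_{t=1}^n \mathbf{1}\{A_t = i\}$) gives the claimed identity.

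The only real subtlety is the measurability/dominating-measure setup needed to make the density ratio well defined; I would handle this by assuming (without loss of generality, or else the KL is infinite and the identity holds trivially on both sides) that each $P_i$ is absolutely continuous with respect to $P'_i$, and choosing a common reference measure (e.g.\ $P_i + P'_i$) per arm. Apart from this bookkeeping, the argument is purely algebraic — the cancellation of the policy kernels is the single key step, and no concentration or information-theoretic inequality is needed.
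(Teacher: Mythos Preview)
The paper does not give its own proof of this lemma; it simply cites Lattimore and Szepesv\'ari (2020), Lemma~15.1, as a reference. Your argument is correct and is exactly the standard proof that appears there: factor the joint density of the $n$-round history, cancel the shared policy kernels in the log-likelihood ratio, condition on $A_t$ to reduce each summand to a per-arm KL, and regroup via $\mathbb{E}_\nu[T_i(n)] = \sum_{t=1}^n \mathbb{P}_\nu(A_t=i)$.
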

See for example Theorem~14.2 and Lemma~15.1 in~\cite{lattimore2020bandit} for a reference of Lemmas~\ref{lemma::huber_bretagnolle_inequality} and~\ref{lemma::divergence_decomposition_bandits}.

\end{document}